\PassOptionsToPackage{hidelinks}{hyperref}
\documentclass[11pt]{article}

\usepackage[margin=1in]{geometry}
\usepackage[T1]{fontenc}
\usepackage[utf8]{inputenc}
\usepackage{lmodern}
\usepackage{microtype}
\usepackage{amsmath,amssymb,amsthm,graphicx}

\usepackage{natbib}
\usepackage{algorithm}
\usepackage{algorithmic}
\usepackage{hyperref}
\usepackage{booktabs}
\usepackage{enumitem}
\usepackage{comment}
\usepackage{tikz}
\usetikzlibrary{shapes.geometric, arrows.meta, positioning, calc, backgrounds, fit}

\theoremstyle{plain}
\newtheorem{theorem}{Theorem}[section]
\newtheorem{lemma}[theorem]{Lemma}
\newtheorem{proposition}[theorem]{Proposition}
\newtheorem{corollary}[theorem]{Corollary}
\theoremstyle{definition}
\newtheorem{definition}[theorem]{Definition}
\newtheorem{assumption}[theorem]{Assumption}
\theoremstyle{remark}
\newtheorem{remark}[theorem]{Remark}

\newtheorem{example}{Example}

\newcommand{\E}{\mathbb{E}}
\newcommand{\KL}{\mathrm{KL}}

\title{SMGI: A Structural Theory of General Artificial Intelligence \\[0.6em]\normalsize\textbf{Preprint}}
\author{Aomar OSMANI\\INSA-Rouen, France\\\texttt{aomar.osmani@insa-rouen.fr}}
\date{March 2026}

\begin{document}
\maketitle
\begin{center}
{\small\itshape This manuscript is a preprint }\par
\end{center}
\vspace{0.5em}
\begin{abstract}
We introduce SMGI, a structural theory of general artificial intelligence, and recast the foundational problem of learning from the optimization of hypotheses within fixed environments to the controlled evolution of the learning interface itself. We formalize the Structural Model of General Intelligence (SMGI) via a typed meta-model $\theta = (r,\mathcal H,\Pi,\mathcal L,\mathcal E,\mathcal M)$ that treats representational maps, hypothesis spaces, structural priors, multi-regime evaluators, and memory operators as explicitly typed, dynamic components. By enforcing a strict mathematical separation between this structural ontology ($\theta$) and its induced behavioral semantics ($T_\theta$), we define general artificial intelligence as a class of admissible coupled dynamics $(\theta, T_\theta)$ satisfying four obligations: structural closure under typed transformations, dynamical stability under certified evolution, bounded statistical capacity, and evaluative invariance across regime shifts.
We prove a structural generalization bound that links sequential PAC-Bayes analysis and Lyapunov stability, providing sufficient conditions for capacity control and bounded drift under admissible task transformations. Furthermore, we establish a strict structural inclusion theorem demonstrating that classical empirical risk minimization, reinforcement learning, program-prior models (Solomonoff-style), and modern frontier agentic pipelines operate as structurally restricted instances of SMGI.

Grounded in epistemological constraints that separate descriptive prediction from normative evaluation, this framework proposes a mathematically explicit extension of statistical learning theory where evaluation itself becomes an object of certified evolution. Finally, we propose an empirical protocol for measuring structural growth and memory-governed stability in long-horizon, nonstationary regimes.

\end{abstract}

\section{Introduction}

Artificial General Intelligence (AGI) is a foundational scientific objective: to characterize learning-and-decision systems that remain coherent and effective across heterogeneous task regimes, evolving interfaces, and shifting evaluative conditions.
Contemporary systems have demonstrated striking advances in language modeling, multimodal integration, tool-augmented reasoning, and long-horizon planning
\cite{openai2023gpt4,anthropic2024claude3,google2024gemini,deepseek2024v3,qwen2024technical,brown2020gpt3}.
Yet a central theoretical question remains unresolved:

\begin{quote}
\emph{What structural principle distinguishes scalable task generalization from genuinely general intelligence?}
\end{quote}

\paragraph{Diagnosis: Local guarantees versus structural transformations.}
Existing theoretical frameworks provide powerful \emph{local} guarantees.
Statistical learning theory controls generalization within fixed hypothesis classes \cite{vapnik1998statistical,valiant1984};
PAC-Bayes and MDL/SRM refine capacity control under fixed representations and losses \cite{mcallester1999,catoni2007pacbayes,grunwald2007mdl};
domain adaptation quantifies transfer under a fixed input--output interface \cite{bendavid2010,arjovsky2019irm};
and reinforcement learning formalizes sequential decision-making under fixed state/action semantics \cite{mnih2015dqn,schulman2017ppo}.
However, modern agentic deployments increasingly operate under \emph{structural transformations}: new tools and interfaces, shifting objectives, multi-regime evaluation, evolving memory and retrieval policies, and adversarially perturbed contexts
\cite{yao2023react,park2023generativeagents,greshake2023ipi}.
In such settings, the learning problem is to generalize across tasks while also remaining controllable under \emph{admissible changes of the interface itself}.
\paragraph{The fragmentation of current solutions: partial internal fixes and external wrappers.}
To address these agentic regime shifts, advanced RL and continual-learning lines have introduced \emph{internal} but partial mechanisms: hierarchical RL and option invention aim to restructure policies and reuse temporally extended skills \citep{nayyar2024optioninvention,ldsc2025llmoptions,shek2025ldsc}, while continual learning proposes global alignment and structured sparse memory updates to mitigate catastrophic forgetting \citep{neurips2024globalalignment,bai2024globalalignment,lin2025sparsememoryfinetuning}.
In parallel, the dominant industry response to safety has been to attach \emph{external} governance components around otherwise fixed learners. This includes constitutional or principle-based post-training \citep{anthropic2022constitutional,bai2022constitutional}, guardrail and monitoring layers \citep{dong2024guardrails,dong2024guardrailsPMLR}, and runtime formal verification or shielding \citep{corsi2024verificationguided}.
These approaches are valuable, but they typically treat norms, verifiers, evaluators, memory stability, and admissible transformations as fragmented modules or \emph{exogenous wrappers} rather than as first-class objects in the learning meta-model.

\paragraph{Why safety and security force a structural theory.}
This fragmentation is increasingly problematic as safety becomes a complex \emph{evaluation-and-assurance} problem. System cards and preparedness frameworks now explicitly separate multi-domain threat regimes \citep{openai2023gpt4systemcard,openai2024gpt4o,phuong2024dangerouscapabilities,preparednessframeworkv2,nistai6001genaiprofile}. 
Furthermore, tool-integrated agents expose interface-level vulnerabilities—such as indirect prompt injection and plan backdoors—that are now systematically benchmarked \citep{greshake2023ipi,zhan2024injecagent,llmpirate2024,zhang2024asb,yi2023bipia,agentsafetybench2024}. Crucially, \emph{memory itself} has become an attack surface (e.g., memory and RAG poisoning), making invariant-preserving updates central to controllability \citep{chen2024agentpoison}.

\paragraph{Why agentic evaluation makes internalization unavoidable.}
Consequently, the field's evaluation paradigm is radically shifting from static end-task scores to the correctness of stateful multi-step tool-use, memory-dependent decisions, and long-horizon workflows (e.g., BFCL, TRAJECT-Bench, AgentBench) \citep{bfcl2025,trajectbench2025,agentbench2024}, prompting unified taxonomies that separate policy models from evaluators and dynamic components \citep{li2025reviewagents}.

Together, these developments motivate the view that norms, evaluators, and admissible transformations should not be treated solely as exogenous evaluation infrastructure. We therefore model them as first-class objects within the learning meta-structure.

\paragraph{Thesis: structural generalization and control-by-structure.}
We advance the thesis that general artificial intelligence cannot be reduced to broader task coverage under a fixed evaluative regime.
Instead it requires \emph{structural generalization}: the capacity to represent, compare, revise, and preserve parts of the learning interface itself under task transformations.
This implies \emph{control-by-structure}: evolution must be restricted to certified admissible transformations, explicitly separating
(i) what may change (typed transformations and admissible updates),
(ii) what must remain invariant (identity/personality invariants and protected evaluators),
and (iii) what must be verified at each step (normative constraints and monitors).
The goal is not a new architecture, but a \emph{formal object} that makes structural change auditable and analyzable.

\paragraph{Epistemological necessity: from external metrics to structural objects.}
This internalization of governance is not merely an engineering convenience; it is required for stating evaluation and admissibility conditions within the learning object. Following the classical distinction between facts and values \cite{hume1739treatise}, evaluation cannot be purely derived from descriptive prediction. The loss or utility functional encodes a \emph{normative semantics} that fixes what counts as an error or an acceptable trade-off. Similarly, in a Kantian spirit, the representational interface $r$ is constitutive: it determines what distinctions are learnable and comparable, thereby inducing the system's active ontology \cite{kant1781-1998}. If the evaluation regime or the representation changes, the very definition of the task changes. This dictates that evaluation and representation must be treated as first-class internal components of the learning object, a foundational argument we fully expand upon in Section~\ref{sec:philo}.

\paragraph{The structural meta-model and the coupled view $(\theta,T_\theta)$.}
To capture representation, evaluation, memory, and admissible evolution within a single formal object, we introduce a structured learning meta-model
\[
\theta=(r,\mathcal H,\Pi,\mathcal L,\mathcal E,\mathcal M),
\]

Crucially, SMGI is not a property of $\theta$ alone but of the \emph{coupled} object $(\theta,T_\theta)$, where $T_\theta$ denotes the induced behavioral semantics (the dynamical evolution under learning, memory, and interaction).
This separation between structural ontology and behavioral semantics is what makes stability, invariants, and evaluative integrity expressible as properties of certified evolution rather than informal desiderata.
Formal typing and notation are given in Sec.~\ref{sec:formalsetup}.

\paragraph{Structural definition of AGI: admissibility obligations.}
In this paper, AGI is not defined by benchmark performance, human equivalence, or parameter scale.
We define AGI structurally as the class of admissible pairs $(\theta,T_\theta)$ satisfying four obligations:
\begin{enumerate}[label=(\roman*)]
\item \textbf{Closure:} the system remains closed under a typed class of admissible task/interface transformations;
\item \textbf{Stability:} certified sequential adaptation preserves dynamical stability under regime switching and memory interaction;
\item \textbf{Capacity:} statistical capacity remains bounded along admissible evolutions (including implicit/trajectory-dependent regularization effects);
\item \textbf{Evaluative invariance:} evaluators and norms remain invariant across regime shifts unless updated via protected, certified meta-transformations.
\end{enumerate}
\paragraph{What is (and is not) claimed here.}
We use the term \emph{AGI} in a strictly formal sense: not as a claim about any existing system, but as a structural admissibility concept for evolving learning systems under typed interface transformations. The present article develops this structural theory at the level of the meta-model, its induced semantics, and its admissibility obligations.

%\begin{definition}[Minimal checkable form of the obligations]\label{def:obligations-checkable}
%The obligations (i)--(iv) will be verified via \emph{witnesses} and \emph{tests}:
%(i) closure is witnessed by a declared typed transformation class $\mathcal T$ on $\theta$;
%(ii) stability is witnessed by an explicit certificate on the induced dynamics (e.g., a Lyapunov-style functional on $S_\theta$);
%(iii) capacity is witnessed by an explicit complexity functional controlling the effective hypothesis/update interface;
%(iv) evaluative invariance is witnessed by an invariant core $\Phi$ together with an admissibility test for any update of the evaluator across regimes.
%\end{definition}
\paragraph{Minimal checkability (pointer).}
In the formal development (Sec.~\ref{sec:formal-meta}), we make these obligations \emph{checkable} by requiring explicit witnesses:
a declared typed transformation class $\mathcal T$, a stability certificate (Lyapunov-style or equivalent) on the induced dynamics,
a capacity/complexity functional controlling the effective hypothesis/update interface, and a protected invariant evaluative core $\Phi$
together with an admissibility test for evaluator updates.

\paragraph{Why scale is not sufficient (as an instance).}
Scaling laws \cite{kaplan2020scaling,hoffmann2022chinchilla} and modern analyses of overparameterized learning (e.g., NTK limits and implicit regularization) \cite{jacot2018neural,arora2019implicit} help explain generalization under fixed regimes, and phenomena such as grokking emphasize the role of learning dynamics \cite{power2022grokking,wang2024grokkedtransformers}.
SMGI complements these insights by lifting capacity control from parameter-space regularization to \emph{structural capacity control} over admissible transformations of $(r,\mathcal H,\Pi,\mathcal L,\mathcal E,\mathcal M)$, which becomes essential under multi-regime evaluation and interface transformations.

\paragraph{Contributions.}
Our contributions are:
\begin{enumerate}
\item \textbf{A minimal structural meta-model for AGI.} We formalize the SMGI object $\theta$ and its induced semantics $T_\theta$, clarifying the ontology/semantics split required for structural reasoning.
\item \textbf{A structural definition of AGI via obligations.} We define closure, stability, capacity, and evaluative invariance as admissibility obligations for $(\theta,T_\theta)$, making verification meaningful only relative to explicit typed transformations and protected evaluators.
\item \textbf{Unified capacity analysis under structural evolution.} We connect SRM/MDL, PAC-Bayes, and stability-based viewpoints to structural capacity control under admissible updates, including modern implicit-regularization phenomena as instances \cite{grunwald2007mdl,mcallester1999,catoni2007pacbayes,arora2019implicit}.
\item \textbf{Strict inclusion and positioning.} We show how classical SLT, RL, universal induction, and frontier LLM pipelines arise as structurally restricted instances of the proposed meta-framework, clarifying what changes when the interface and evaluator are no longer fixed.
\end{enumerate}

\paragraph{Organization.}
Sec.~\ref{sec:philo} develops the philosophical and scientific foundations.
Sec.~\ref{sec:formalsetup} introduces formal setup, typing, and notation.
Secs.~\ref{sec:formal-meta}--\ref{sec:capacity-bridge} develop the formal meta-structure, structural guarantees, and capacity analysis.
Sec.~\ref{sec:positioning} positions SMGI within existing theoretical frameworks.
Sec.~\ref{sec:comparative-frameworks} compares SMGI to frontier systems (2023--2026), followed by discussion and conclusions.

%%%%%%%%%%%%%%%%%%%%%%%%%%%%%%%%%%%%%%%%%%%%%%%%%%%%%%%%%%%%%%%%%%%%%%%%
\section{Philosophical and Scientific Foundations}
\label{sec:philo}
%%%%%%%%%%%%%%%%%%%%%%%%%%%%%%%%%%%%%%%%%%%%%%%%%%%%%%%%%%%%%%%%%%%%%%%%

The formal framework developed in this article, which we call the \emph{Structural Model of General Intelligence} (SMGI), is not proposed as a purely technical aggregation. It is motivated by a deeper claim from the history and philosophy of science: \emph{what a system can learn is inseparable from how it is evaluated, and evaluation is itself a structural commitment that must be formalized}. To justify the internalization of the loss $\ell$ and the representation $r$ into the meta-model $\theta$, we must address the fundamental epistemological limits of classical learning.

Throughout the article, the term SMGI will denote the class of admissible coupled objects \((\theta,T_\theta)\) satisfying structural closure under admissible task transformations, dynamical stability, bounded capacity, and evaluative invariance. When no ambiguity arises, we refer to the corresponding meta-model \(\theta\) alone by abuse of notation.

\subsection{Facts, Values, and the Structure of Evaluation}
\label{sec:facts-values}

\paragraph{The irreducibility of normative evaluation.}
A foundational difficulty in defining general artificial intelligence is that accurate prediction and justified evaluation are logically distinct. Since Hume's articulation of the \emph{is/ought} gap, it has been canonically recognized that no set of purely descriptive propositions entails a normative conclusion without an additional normative premise \citep{hume1739treatise}. Consequently, a system may predict environmental dynamics correctly while remaining entirely unjustified with respect to the criteria used to judge its outputs.

Poincar\'e restated the same logical constraint in \emph{La morale et la science}: if the premises of a reasoning are purely indicative, the conclusion remains indicative; no imperative conclusion can be obtained from descriptive premises alone \citep{poincare1920dernierespensees}. Scientific appraisal therefore presupposes epistemic values such as simplicity, unity, and fecundity that are not derivable from data alone. A complementary meta-ethical clarification is provided by Moore’s open-question argument, which makes precise why evaluative predicates such as ``good'', ``correct'', or ``safe'' are not reducible by definition to purely descriptive predicates \citep{mooreprincipiaethica}. Together, these arguments support a structural separation between descriptive regularities, evaluative mechanisms, and normative admissibility conditions.

In SMGI, this separation is made explicit at the level of the typed meta-model
\[
\theta=(r,\mathcal H,\Pi,\mathcal L,\mathcal E,\mathcal M).
\]
The tuple $(r,\mathcal H,\Pi)$ governs the descriptive side of learning: representation, admissible hypothesis classes, and inductive bias determine which regularities can be expressed and generalized. By contrast, evaluation is implemented through a family of evaluative functionals
\[
\mathcal L=\{\ell_k\}_{k=1}^{K},
\]
and the active regime-local evaluator takes the form
\[
\ell_t(\cdot)=\sum_{k=1}^{K}\lambda_{t,k}\,\ell_k(\cdot),
\]
with weights controlled by context and memory. Normativity is not identified with the evaluator itself: it appears as a system of admissibility constraints, denoted here by $\Phi$, that governs which evaluators, or evaluator updates, are allowed to count as structurally coherent. In this sense, SMGI distinguishes three levels that are often conflated: value-laden evaluative dimensions encoded by the basis family $\{\ell_k\}_{k=1}^K$, normative constraints encoded by $\Phi$, and evaluation as the operational mechanism $\ell_t$ that ranks hypotheses, actions, or trajectories in a given regime.

This distinction matters mathematically. Even when an evaluator is estimated from data, it defines a selection functional over hypotheses,
\[
h \;\mapsto\; \mathbb{E}_{z\sim \mathcal D}\big[\ell_t(h,z)\big],
\]
and therefore fixes what counts as an error, a violation, or an acceptable trade-off. The resulting ordering is not derivable from descriptive prediction alone. Following Canguilhem, one may say that normativity is not an external ornament added after the fact, but part of the very organization through which a system distinguishes viable from non-viable trajectories \citep{canguilhem1966normal}. SMGI operationalizes this point by internalizing evaluative structure as part of the learning ontology rather than leaving it implicit in the training procedure.

\paragraph{Entanglement and the necessity of value learning.}
At the same time, the philosophy of science emphasizes that facts and values are deeply entangled in scientific practice; this does not eliminate normativity but reinforces the need to surface evaluative commitments explicitly rather than leave them implicit \citep{putnam2002}. More generally, normative systems are typically justified by coherence between principles and judgments under continuous revision—a ``reflective equilibrium''—which motivates the explicit modeling of evaluator update rules rather than treating them as permanently fixed \citep{rawls1971}. In SMGI, such updates are represented as typed meta-transformations acting on $\ell_t$ (or regime weights) and are constrained by certification and invariance requirements (see Sec.~\ref{sec:reflective-equilibrium}).

At a formal level, these normative commitments are modeled via utility and loss functionals that induce preference orderings over outcomes and policies \citep{vonneumann1944,savage1954}. In modern agentic settings, however, the relevant objective is rarely fully known a priori; it is often only partially specified and must be inferred or refined from interaction. This reality drives contemporary value learning, cooperative inverse reinforcement learning (CIRL), and preference-based alignment methods \citep{hadfieldmenell2016cirl,christiano2017preferences,russell2019human}. Canguilhem’s analysis of normativity sharpens the same point from another direction: norms are not merely external labels attached after the fact, but organizing conditions internal to a mode of functioning \citep{canguilhem1966normal}. In SMGI, this is precisely why evaluative admissibility is represented by explicit constraint systems such as $\Phi$ and $\mathcal K_\Phi$, rather than by an undifferentiated scalar objective.

\subsection{Reflective Equilibrium as Certified Updates of the Evaluator}
\label{sec:reflective-equilibrium}

To separate descriptive adaptation from normative consistency, SMGI does not treat the evaluator as an unconstrained objective updated by naive optimization. Following the Rawlsian idea of \emph{reflective equilibrium} \citep{rawls1971}, we model evaluator change as a \emph{certified} reconciliation between empirical feedback and identity-preserving invariants.

\paragraph{A. Invariants and admissibility.}
Let $\Phi$ denote a fixed set of identity-preserving invariants specified at $t=0$ (e.g., hard safety constraints, logical consistency requirements, or core normative commitments). In SMGI, evaluator change is admissible only through updates that preserve $\Phi$ on a certified test class (or audited environments), rather than universally over $\mathcal H$.

\paragraph{B. Certified evaluator update operator.}
Let $\ell_t$ denote the current evaluator (possibly multi-regime, e.g., $\ell_t=\sum_{k=1}^K \lambda_{t,k}\,\ell_k$) and let $z_{1:t}$ and $m_t$ denote the data stream and persistent memory. A candidate update $\tilde{\ell}_{t+1}=\Gamma(\ell_t,z_{1:t},m_t)$ (e.g., Bayesian update, preference refinement, or meta-learning) is \emph{accepted} only if it satisfies the invariants on an audited set $\mathcal{E}_{\mathrm{audit}}$ (or a certified hypothesis subset $\mathcal H_{\mathrm{cert}}$). We assume $\ell$ ranges over a parametrized evaluator class $\mathcal L$ (or, in the multi-regime case, over regime weights $\lambda_t$ in a fixed basis $\{\ell_k\}_{k=1}^K$). We write this as a certified update operator:
\begin{equation}
\begin{split}
\ell_{t+1}
\;&=\;
\mathrm{CertUpdate}_{\Phi}\!\left(\ell_t,z_{1:t},m_t\right)
\;\triangleq\;
\arg\min_{\ell\in\mathcal K_{\Phi}}
D(\ell,\tilde{\ell}_{t+1}), \\[1.5ex]
\mathcal K_{\Phi} \;&=\; \Big\{ \ell : \Phi\ \text{holds on}\ \mathcal{E}_{\mathrm{audit}} \\
&\qquad \text{as certified by monitors/verifiers over audited trajectories} \Big\},
\end{split}
\label{eq:certupdate}
\end{equation}
where $D$ is a chosen divergence on evaluators (e.g., KL between induced preference models, or a norm in a parametrized evaluator class), and $\mathcal K_{\Phi}$ is the admissible set enforced by certificates/monitors.

\paragraph{C. Implications for SMGI obligations.}
This construction makes evaluator change an explicit part of certified structural evolution. Formally, evaluator updates are elements of the typed transformation class $\mathcal T$ acting on the $\ell$-component of $\theta$:
(i) it prevents \emph{normative drift} by restricting evaluator updates to $\mathcal K_{\Phi}$ (evaluative invariance),
(ii) it makes closure and stability statements meaningful because admissible transformations include certified evaluator updates,
and (iii) it allows capacity and drift guarantees to be stated on the coupled dynamics $(\theta,T_\theta)$ under certified evolution rather than under an exogenous pipeline.

\paragraph{Structural Risk and Capacity.}
Vapnik's structural risk minimization formalizes generalization by coupling empirical fit and capacity control within a hierarchical organization of hypothesis classes \citep{vapnik1998statistical}. This provides a principled foundation for treating capacity constraints as part of the system's admissible structure, rather than as an external hyperparameter choice. In this sense, SRM already contains a proto-structural insight: generalization is not merely a property of data fitting, but of the hierarchical organization of admissible hypothesis spaces.

\paragraph{Simplicity Priors and Universal Prediction.}
A complementary foundational tradition connects evaluation to structural regularization via simplicity principles. Solomonoff induction and AIXI encode a prior over programs weighted by description length \citep{solomonoff1964,hutter2005}, while Kolmogorov complexity and MDL formalize simplicity as compressibility and provide operational regularizers \citep{li2008kolmogorov,grunwald2007mdl,rissanen1978mdl}. These results motivate treating the prior $\Pi$ not as a heuristic add-on but as a structural component of the meta-model.

\paragraph{Synthesis: The justification of the SMGI Meta-Model.}
Under regime shifts, tool augmentations, and self-modification, an intelligent system faces a meta-problem: it must maintain coherence between what is represented, how outcomes are evaluated, which priors structure induction, and which transformations remain admissible without destroying structural identity. Because objectives and environments are dynamic, this coherence must be understood as a time-indexed process whose evaluative component is expressed by $\ell_t$ and whose admissibility is certified on the coupled dynamics $(\theta,T_\theta)$. This is the exact justification for SMGI's central move: representation, hypothesis space, prior, evaluator family, environment family, and memory operator are treated as \emph{first-class structural components} of the typed meta-model
\[
\theta=(r,\mathcal H,\Pi,\mathcal L,\mathcal E,\mathcal M).
\]

This move is philosophically motivated and mathematically necessary. Kantian considerations explain why representation cannot be external to a theory of intelligence; Humean and Poincar\'ean considerations explain why normativity cannot be recovered from descriptive success alone; and Quine's underdetermination thesis explains why empirical adequacy does not uniquely fix a single inductive organization \citep{kant1781-1998,hume1739treatise,poincare1920dernierespensees,quine1960word}. The role of $\Pi$ is therefore not heuristic decoration: it formalizes the structural commitments through which a system privileges some admissible generalizations over others.

Accordingly, SMGI should be read not as a behavioral slogan but as a structural ontology for admissible general intelligence. The coupled object $(\theta,T_\theta)$ separates the learning system's internal organization from its induced behavioral semantics, making it possible to state closure, stability, bounded-capacity, and evaluative-invariance obligations at the correct level. In this sense, values, inductive bias, representation, and persistence are no longer external vulnerabilities of a training pipeline, but explicit structural components whose joint evolution can be certified.

\subsection{Unification as a Scientific Virtue and a Design Constraint}
\label{sec:unification}

A second foundational motivation is unification. Newtonian science is a paradigm case: the unification of heterogeneous phenomena under a single mathematical structure is not merely aesthetic but constitutive of explanatory power. Modern scholarship clarifies that unification in Newton is inseparable from methodological commitments about mathematization, idealization, and the normative role of principles \citep{ducheyne2012newton}. In our setting, the analogue is that a general theory of intelligence should not only fit many tasks, but specify the minimal \emph{structural objects} and \emph{operators} under which heterogeneous learning regimes can be represented within a single formal system. Formally, unification can be understood as the identification of invariant structures under transformation, specifying state spaces and operators whose stability is preserved under admissible task transformations.

\subsection{Paradigms, Evaluation Criteria, and Theory-Relative Invariants}
\label{sec:paradigms}

Kuhn's analysis of scientific change stresses that what counts as evidence, adequacy, and progress depends on theory-internal standards that evolve with paradigms \citep{kuhn1962Structure}. This reinforces the need to treat evaluation as a structured component of a system, not as a fixed external metric. As established in Section 2.1 via Putnam’s entanglement thesis \citep{putnam2002}, this implies that for learning systems, cross-domain intelligence requires not only updating hypotheses within a fixed evaluative frame, but maintaining \emph{invariants} under transformations of tasks and evaluation regimes.

This is precisely the motivating gap for the present work: classical theories formalize capacity control, sequential decision-making, or universal prediction, but do not simultaneously formalize (i) a closure notion under task transformation, (ii) stability of a persistent system under sequential adaptation, and (iii) invariance constraints defining coherent identity across regimes. The notion of identity here is structural rather than psychological: it refers to the preservation of boundedness, coherence, and well-defined update rules across transformations of tasks, data distributions, or evaluative criteria.

\subsection{Representation, Direction, and Plurality of Regimes}
\label{sec:repr-will}

In the Kantian tradition, cognition presupposes organizing principles that condition the possibility of experience itself \citep{kant1781-1998}. In SMGI, this point is formalized by treating the representational map as a constitutive component of the meta-model rather than as a neutral preprocessing device. Representation is not a passive encoding of a pre-given world: it determines which distinctions are available to the system and therefore which hypothesis classes are effectively accessible. Put differently, the representational component $r$ helps determine the induced learnable structure, and changes in $r$ alter the ontology on which learning and evaluation operate.

Cassirer sharpens this point by emphasizing that cognition proceeds through symbolic forms rather than through immediate access to raw reality \citep{cassirer1923symbolic}. Sellars, in turn, rejects the myth of a purely given layer of experience that would stand prior to conceptual articulation \citep{sellars1956empiricism}. Their common lesson is directly relevant here: neither representation nor evaluation should be treated as transparent interfaces to an already structured world. In SMGI, they are explicit structural commitments. The representational component is therefore not merely an encoder but part of the system's active ontology. Formally, the typed meta-model
\[
\theta=(r,\mathcal H,\Pi,\mathcal L,\mathcal E,\mathcal M)
\]
locates $r$ as a structural primitive, so that $r$ helps determine the effective hypothesis class $\mathcal H(\theta)$ and, through it, the space of learnable distinctions available to the system.

Schopenhauer's distinction between world as representation and world as will \citep{schopenhauer1818} highlights a second separation between what is represented and what is directionally pursued. We capture this distinction by introducing a local structural state
\[
c_t \;=\; (r_t,w_t,m_t),\qquad r_t:\mathcal Z\to\mathcal H,\qquad w_t\in\mathcal W,
\]
where $m_t$ is persistent memory. Here $r_t$ specifies the representational interface, while $w_t$ parameterizes directional organization: regime activation weights, preference orientation, or constraint multipliers determining which evaluative dimensions are currently emphasized. This decomposition is not auxiliary notation but a local dynamic reading of the SMGI ontology: $r_t$ instantiates the representational component, $w_t$ parameterizes directional or evaluative orientation, and $m_t$ carries persistence through memory. Peirce's logic of inquiry provides a further bridge here, since abduction, revision, and interpretive mediation all presuppose that representation and hypothesis formation are structurally organized rather than mechanically extracted from data \citep{peirce1931collected}.

Moving from representation to direction, Nietzsche and Freud provide a useful structural analogy for agency organized by competing evaluative forces rather than by a single homogeneous objective \citep{nietzsche1887,freud1923_lemoietleca}. In SMGI, this motivates extending a single evaluator to a family of regime-dependent evaluators:
\begin{equation}
\ell(\cdot) \;\rightsquigarrow\; \{\ell_k(\cdot)\}_{k=1}^K, \qquad \ell_t(\cdot) \;=\; \sum_{k=1}^K \lambda_{t,k} \, \ell_k(\cdot),
\label{eq:multi-regime}
\end{equation}
with context-dependent weights $\lambda_{t,k}$ controlled by $w_t$ and memory. The point is not that psychoanalytic or genealogical theories literally define the update law, but that they correctly foreground a structural fact: agency may remain coherent while being governed by plural, sometimes competing evaluative regimes. The philosophical plurality of drives, values, or competing orientations is thus translated into a typed evaluator family $\mathcal L=\{\ell_k\}_{k=1}^K$ together with context- and memory-dependent activation weights.

Spinoza's doctrine of \emph{conatus} adds the complementary requirement of persistence through change: an intelligent system should not merely vary its active evaluators across regimes, but preserve identity-relevant invariants through those variations \citep{spinoza1677ethics}. In SMGI, this persistence is expressed through the admissibility of coupled dynamics $(\theta,T_\theta)$ under stability and invariance constraints. Simondon's analysis of individuation clarifies that such persistence should be understood not as immobility but as coherent transformation under structured change \citep{simondon1958individuation}. Together, these perspectives support the central claim of SMGI: regime shifts should be modeled as admissible structural transformations preserving coherence, rather than as unconstrained jumps between unrelated optimization problems.

\paragraph{Salience as Structural Weighting.}
To separate regime weights from policy notation, we denote input salience by an operator $\sigma_t:\mathcal Z\to\mathbb R_+$ and write $\ell_k(h,z)\mapsto \sigma_t(z)\,\ell_k(h,z)$ when needed. This formalizes the idea (classically associated with motivation and ``passions'' in the philosophy of agency) that evaluation is not only plural but also selectively amplified or suppressed depending on context. The triple $(r_t, w_t, m_t)$ therefore anticipates a more general structural decomposition in which representation, evaluation, and persistence are treated as interacting subsystems.

\subsection{Engineering Meta-Models and the Necessity of a Meta-Level}
\label{sec:eng-metamodels}

The motivation for an explicit meta-structure is not unique to learning theory. In software engineering, the \emph{Meta-Object Facility} (MOF) provides a meta-metamodel specifying how modeling languages themselves are defined \citep{omgmof2019}. UML, in turn, is a structured meta-model describing classes, relations, and behavioral dynamics \citep{omguml17}. Correctness and consistency are enforced at the meta-level: constraints are defined over families of models, not only over individual instances.

Object-oriented languages further illustrate this structural idea: inheritance and subsumption are not mere implementation tools; they impose formal constraints ensuring that transformations between objects preserve type consistency and structural invariants \citep{meyer1988oosc,booch1994ooad}. When a domain involves transformations between heterogeneous structures, stability and coherence require explicit meta-level constraints. SMGI plays an analogous formal role: it defines the space of admissible learning systems and constrains their evolution under task transformation operators.

\paragraph{Optional Extensions: Meta-Normativity and Ontology.}
We note two optional second-order extensions that fit the same structural discipline but are not required for the main line of the theory.
Two further structural notions can be expressed within the same spirit, while remaining optional in the main line of the theory.
\emph{Meta-normativity} treats evaluative regimes themselves as objects of second-order appraisal, via a functional $\mathcal{N}:\mathcal{K}\to\mathbb{R}$ measuring coherence, legitimacy, or stability of regime families.
\emph{Structural ontology} can be summarized by observing that a system induces an implicit ontology through its representational range: $\Omega(\theta) = \mathrm{Range}(r)$.
Different $\theta$ therefore induce different ontological commitments without fixing ontology a priori.

\paragraph{Dimensional Incompleteness and Motivation.}
The preceding considerations converge on a single structural claim: learning is not merely adaptation within a fixed evaluative space; it is the structured transformation of the evaluative space itself.
This motivates the construction of a formal meta-structure where these constraints are explicit at the level of mathematical definition.

%%%%%%%%%%%%%%%%%%%%%%%%%%%%%%%%%%%%%%%%%%%%%%%%%%%%%%%%%%%%%%%%%%%%%%%%
\section{Formal Setup, Typing, and Notation}
\label{sec:formalsetup}
%%%%%%%%%%%%%%%%%%%%%%%%%%%%%%%%%%%%%%%%%%%%%%%%%%%%%%%%%%%%%%%%%%%%%%%%

To eliminate ambiguity and keep the formal development closed under composition,
we enforce a \emph{typed} notation that distinguishes (i) global carriers,
(ii) instantiations induced by a meta-structure $\theta$, and (iii) time-indexed
states produced by the induced learning dynamics.

\paragraph{Global objects (problem-independent).}
We write $\mathcal X,\mathcal Y$ for observation and target spaces, $\mathfrak E$
for a \emph{global} carrier of admissible environments (or task-generating processes),
and $\mathfrak H$ for a \emph{global} hypothesis carrier. These symbols denote \emph{carriers} rather
than a particular learner.

\paragraph{Objects instantiated by $\theta$.}
Whenever an object depends on the choice of meta-structure, we use the unique
convention $(\cdot)(\theta)$:
\[
\mathcal H(\theta)\subseteq \mathfrak H,\qquad
\mathcal P(\theta)\ \text{(policy/parameter space)},\qquad
\mathcal M(\theta)\ \text{(memory state space)},\qquad
\mathcal E(\theta)\subseteq \mathfrak E\ \text{(environment family)}.
\]
Thus $\mathcal H(\theta)$ denotes the \emph{effective} hypothesis class admitted
by $\theta$ (e.g., via representational constraints $r$ and regularization $\Pi$),
and similarly for $\mathcal P(\theta)$, $\mathcal M(\theta)$, and $\mathcal E(\theta)$.

\paragraph{Priors vs.\ policies.}
We reserve $\Pi$ for a \emph{prior} (or structural regularizer) over hypotheses or
programs, used in SRM/PAC-Bayes/MDL-style control. By contrast, $\pi$ denotes a
\emph{policy} (or decision rule) belonging to $\mathcal P(\theta)$.
In particular:
\[
\Pi \neq \pi,\qquad \pi\in\mathcal P(\theta),\qquad \Pi\ \text{acts on}\ \mathcal H(\theta)\ \text{or program space}.
\]

\paragraph{Evaluators: active loss vs.\ regime family.}
We reserve $\mathcal L=\{\ell_k\}_{k=1}^K$ for the \emph{family} of regime evaluators (multi-regime structure) that is part of the structural meta-model.
The symbol $\ell_t$ denotes the \emph{active} evaluator at time $t$ (possibly a selected element of $\mathcal L$, or a certified mixture such as $\ell_t=\sum_{k=1}^K \lambda_{t,k}\,\ell_k$).
When we write $\ell$ without a time index, it denotes a generic element of $\mathcal L$ (or the induced evaluator semantics under a fixed regime) and never the whole family.

\paragraph{Time-indexed states.}
When the induced dynamics $T_\theta$ is considered, we index \emph{states} in time:
\[
r_t \in \mathcal R(\theta),\qquad \pi_t\in\mathcal P(\theta),\qquad m_t \in \mathcal M(\theta),
\]
and we write $\theta$ only for the (comparatively) static meta-structure.
This separation is used throughout to avoid conflating \emph{ontological structure}
($\theta$) with \emph{behavioral semantics} ($T_\theta$).

\subsection{The SMGI Meta-Model}
\label{sec:smgi-metamodel}

\paragraph{The Meta-Model.}
To keep notation lightweight in the introduction and foundations, we write
$\theta=(r,\mathcal H,\Pi,\mathcal L,\mathcal E,\mathcal M)$ for the \emph{structural carriers}.
In the formal development, we distinguish carriers from their effective instances:
a well-typed $\theta$ induces (or selects) an effective hypothesis class $\mathcal H(\theta)\subseteq\mathfrak H$
and an effective environment family $\mathcal E(\theta)\subseteq\mathfrak E$ (and similarly for memory/state spaces).

To formalize this idea, we introduce a structured learning meta-model
\[
\theta = (r, \mathcal H(\theta), \Pi,\mathcal L, \mathcal E(\theta), \mathcal M(\theta)).
\]
where:
\begin{itemize}
\item $r$ denotes representation operators,
\item $\mathcal H(\theta)$ the hypothesis space (effective class admitted by $\theta$),

\paragraph{Evaluator family vs. active evaluator (notation contract).}
Throughout the paper, the evaluator component of the meta-model is a \emph{family} \(\mathcal L=\{\ell_k\}_{k=1}^K\) of regime evaluators.
The \emph{active} evaluator at time \(t\) is denoted \(\ell_t\in\mathcal L\) (or, when using convex mixing, \(\ell_t=\sum_{k=1}^K\lambda_{t,k}\ell_k\) with \(\lambda_{t,\cdot}\in\Delta_K\)).
The single-regime special case is represented as \(\mathcal L=\{\ell\}\), but the signature of \(\theta\) remains \(\theta=(r,\mathcal H,\Pi,\mathcal L,\mathcal E,\mathcal M)\) throughout.
\item $\Pi$ denotes prior or structural regularization,
\item $\ell$ evaluative structures (possibly multi-regime $\mathcal L$),
\item $\mathcal E(\theta)$ environment classes,
\item $\mathcal M(\theta)$ persistent structured memory.
\end{itemize}

Unlike classical learning theory, which optimizes
$h \in \mathcal H$ under fixed $(r, \Pi, \ell)$,
we study systems that operate over $\theta$ itself.
The object $\theta$ defines a coupled dynamical system whose
admissible evolutions are constrained by stability,
capacity, and invariance conditions.

Classical paradigms — empirical risk minimization,
PAC-Bayes learning, reinforcement learning,
meta-learning, and transformer-based architectures —
arise as restricted projections of this meta-structure.

\paragraph{Typing of the structural components.}
Each component of the meta-model
\(
\theta = (r, \mathcal H(\theta), \Pi,\mathcal L, \mathcal E(\theta), \mathcal M(\theta))
\)
lives in a distinct mathematical space:

\begin{itemize}
\item $r$ is a representation operator
\[
r : \mathcal Z \to \mathcal X,
\]
mapping observations/interactions $\mathcal Z$ to an internal representational space $\mathcal X$;

\item $\mathcal H(\theta)$ is a class of hypotheses (predictors, policies, or programs) defined over the representational space induced by $r$;

\item $\Pi$ is a prior or inductive bias over $\mathcal H(\theta)$ (e.g., a probability measure, description-length prior, or structural regularizer);

\item $\ell$ is an evaluative functional
\[
\ell : \mathcal H(\theta) \times \mathcal E(\theta) \to \mathbb R,
\]
possibly multi-regime, encoding objectives and constraints;

\item $\mathcal E(\theta)$ is a class of admissible environments or task-generating processes (e.g., distributions, MDPs, POMDPs);

\item $\mathcal M(\theta)$ is a structured memory space, supporting persistent internal state.
\end{itemize}

These components are structurally distinct and not interchangeable:
they belong to different mathematical categories (functions, sets,
measures, functionals, process classes, state spaces).

\paragraph{Structural Semantics and Induced Dynamics.}
In this article, $\theta$ is a \emph{purely structural} meta-model: it specifies
\emph{what the state variables are}, \emph{how they are typed}, and \emph{which
structural operators are admissible}, but it does not commit to a unique temporal
evolution law.

\paragraph{Induced state space.}
The typing of $\theta=(r,\mathcal H(\theta),\Pi,\mathcal L,\mathcal E(\theta),\mathcal M(\theta))$ determines
canonical carrier spaces for its components:
\[
r \in \mathcal R(\theta),\qquad
h \in \mathcal H(\theta),\qquad
\pi \in \mathcal P(\theta),\qquad
m \in \mathcal M(\theta),
\]
and therefore induces the structured state space
\[
S_\theta \;:=\; \mathcal R(\theta) \times \mathcal H(\theta) \times \mathcal P(\theta) \times \mathcal M(\theta).
\]
A configuration at time $t$ is a point
\[
s_t=(r_t,h_t,\pi_t,m_t)\in S_\theta,
\]
while $\mathcal E(\theta)$ specifies the admissible class of environments/task-families
in which trajectories are generated.

\paragraph{Behavioral semantics as a set of compatible dynamics.}
Let $\Delta(S)$ denote the set of probability measures over $S$
(the deterministic case corresponding to Dirac kernels).
We define the space of admissible stochastic learning dynamics as
\[
\mathfrak D \;:=\; \bigcup_{S}\Big\{\,T : S\times \mathcal Z \to \Delta(S)\;\Big\}.
\]
Because a structural $\theta$ may admit multiple compatible implementations,
its behavioral semantics is \emph{set-valued}:
\[
\mathsf{Sem}:\Theta \rightrightarrows \mathfrak D,
\qquad
\mathsf{Sem}(\theta)\subseteq \Big\{T: S_\theta\times \mathcal Z \to \Delta(S_\theta)\Big\}.
\]
An \emph{induced} (chosen) dynamics compatible with $\theta$ is any selection
\[
T_\theta \in \mathsf{Sem}(\theta).
\]

\paragraph{Scope.}
We deliberately keep the present paper focused on the structural axis; richer multi-timescale or multi-operator semantics can be layered on top without altering the definitions below (see Discussion).

\paragraph{SMGI lives on realizations.}
Because $\mathsf{Sem}(\theta)$ is set-valued, all verifiable claims are made at the level of a realization $(\theta,T)$ with
$T\in\mathsf{Sem}(\theta)$; we occasionally call $\theta$ \emph{SMGI-admissible} when there exists such a $T$ satisfying the obligations
defined later (Sec.~\ref{sec:formal-meta}).
\paragraph{Internal Stratification of Memory and Evaluation (Foundational Scope).}
The present meta-model treats $\mathcal M(\theta)$ (memory) and evaluator family $\mathcal L$
(evaluative structure) as structured components of $\theta$,
without fully stratifying their internal organization.

In companion work, memory is decomposed into interacting subsystems
(working, episodic, semantic, procedural, world-model,
social/theory-of-mind, normative, and meta-memory layers),
each supporting distinct temporal and functional roles.
Similarly, the evaluative structure may itself admit
multi-regime and hierarchical organization.

A full formal treatment of cross-memory interaction,
multi-timescale dynamics, and regime coordination
constitutes an orthogonal axis of analysis.
Such an axis would refine the behavioral semantics
$\mathsf{Sem}(\theta)$ by introducing families of interacting
transition operators indexed by memory strata or temporal scales.

The present work isolates the minimal structural conditions
under which a unified learning system remains closed,
stable, statistically controlled, and evaluatively coherent
under admissible task transformations.
The stratified refinement of memory and evaluation
extends --- rather than alters --- these structural foundations.

Normative/value memory provides operational persistence of evaluative
commitments defined at the level of $\ell$; it does not replace the
formal evaluative structure but supports its temporal stability.

\paragraph{On the distinct roles of representation $r$ and environment class $\mathcal E(\theta)$.}
A frequent ambiguity in unified learning formalisms is whether
the environment/task component can be absorbed into data,
or whether representation can be absorbed into the hypothesis class.
In the present meta-model, $r$ and $\mathcal E(\theta)$ are \emph{structurally distinct}
because they play different logical roles and live in different spaces.

\emph{Environment class.}
$\mathcal E(\theta)$ denotes a class of admissible environments/tasks
(e.g., distributions over $\mathcal Z$ in supervised learning,
or a class of MDP/POMDPs in interactive settings).
Its purpose is to define the domain of generality and the class
$\mathcal T$ of admissible task transformations $\tau:\mathcal E(\theta)\to\mathcal E(\theta)'$.
Without an explicit $\mathcal E(\theta)$, the notion of ``closure under task transformation''
is not well-typed: there is no object on which $\tau$ can act, and
no principled way to distinguish in-distribution learning from genuine
cross-task transformation.

\emph{Representation operators.}
$r$ denotes representation operators mapping raw observations/interactions
to an internal space on which hypotheses and policies operate.
It is not reducible to $\mathcal H(\theta)$: the hypothesis class is defined
\emph{relative} to the representational interface induced by $r$.
Representation is therefore a first-class structural component: it determines
which invariants can be expressed and controlled under transformations.

\emph{Structural coupling.}
Task transformations $\tau\in\mathcal T$ act on $\mathcal E(\theta)$ but induce
a change in the observation/interaction process, hence in the distribution
or dynamics of $r(Z)$ (or of latent transitions in the represented space).
Structural generalization precisely concerns conditions under which there exists
a nontrivial admissible set $S^*$ such that the induced learning dynamics remains
well-formed and stable when $\mathcal E(\theta)$ is transformed, while $r$ preserves
an invariant core sufficient to transfer across task families.

\begin{lemma}[Non-reducibility of $r$ and $\mathcal E(\theta)$ (structural roles)]
\label{lem:rE-nonreducible}
In the meta-structure $\theta=(r,\mathcal H(\theta),\Pi,\mathcal L,\mathcal E(\theta),\mathcal M(\theta))$,
the components $r$ and $\mathcal E(\theta)$ are not interchangeable:
(i) removing $\mathcal E(\theta)$ eliminates a well-typed notion of admissible task
transformations $\tau:\mathcal E(\theta)\to\mathcal E(\theta)'$ and therefore of closure;
(ii) removing $r$ precludes a typed statement of representational invariants and forces
invariance claims to be expressed only via direct hypotheses over raw $\mathcal Z$ or via implicit assumptions,
preventing a structural statement of invariance under transformations beyond trivial reparameterizations.
\end{lemma}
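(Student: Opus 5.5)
The argument is essentially definitional (a typing argument), so I would prove the two parts separately by examining what the remaining signature can still express, using only the typing conventions of Sec.~\ref{sec:formalsetup}. The guiding principle is that a statement is \emph{well-typed} in the meta-model only if every object it quantifies over is a component of $\theta$ or is canonically induced from one. I would first fix the reduced signatures explicitly: $\theta_{\setminus\mathcal E}=(r,\mathcal H(\theta),\Pi,\mathcal L,\mathcal M(\theta))$ and $\theta_{\setminus r}=(\mathcal H(\theta),\Pi,\mathcal L,\mathcal E(\theta),\mathcal M(\theta))$. Then, for each, I would check which objects can still serve as the domain of $\tau$ or as the carrier of an invariant.

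For (i), recall that an admissible task transformation is typed as $\tau:\mathcal E(\theta)\to\mathcal E(\theta)'$. Closure is the statement that the realization $(\theta,T)$ remains well-formed for every $\tau$ in the declared class $\mathcal T$. In $\theta_{\setminus\mathcal E}$ no component has the type of a class of task-generating processes: $r$ is a map $\mathcal Z\to\mathcal X$, $\mathcal H(\theta)$ is a set of hypotheses, $\Pi$ is a measure, $\mathcal L$ is a family of functionals, and $\mathcal M(\theta)$ is a state space. So $\tau$ has no domain, and $\mathcal T$ cannot be declared. I would also rule out the obvious attempt to recover $\mathcal E$ from data. A single stream $z_{1:t}$ determines at most one realized process, and two distinct environment classes can generate identical finite streams. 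A transformation defined on streams therefore cannot distinguish in-distribution variation from a genuine change of task family. Consequently closure collapses to a statement about a single realized trajectory, and no well-typed closure notion remains.

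For (ii), a representational invariant is a property $P$ of $r$, or of the law of $r(Z)$, preserved under $\tau$. Without $r$, hypotheses act directly on $\mathcal Z$. Any invariance claim then has two options. It can be a property of $h\in\mathcal H(\theta)$ on raw $\mathcal Z$. Or it can be an unnamed assumption, such as an implicit feature map folded into $\mathcal H(\theta)$. In the first case I would argue that the only transformations guaranteed to preserve the claim across task changes are bijective reparameterizations of $\mathcal Z$ composed into $h$. These are trivial: they leave the hypothesis class unchanged. In the second case the invariant is real but not typed within $\theta$, so it cannot be certified or audited. This matches the dichotomy in the statement. Finally, I would note non-interchangeability, namely that neither component can play the other's role: $r$ is a map and $\mathcal E(\theta)$ is a process class, so they have incompatible types.

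The main obstacle is (ii). The phrase "beyond trivial reparameterizations" invites the objection that $r$ can always be absorbed into $\mathcal H$ as $\mathcal H'=\mathcal H\circ r$. My response would be that the absorption exists extensionally but erases the factorization. The statement "$r$ is preserved while $h$ adapts" then has no subject, since one cannot quantify over the shared factor $r$ separately from the hypotheses. I would make this precise with two systems that have the same $\mathcal H\circ r$ but different factorizations, and show that they differ in whether an invariance statement holds. Hence that statement is not a function of the reduced signature. Since the lemma is about structural roles, the proof remains a typing argument rather than a quantitative theorem, and I would state that scope explicitly.
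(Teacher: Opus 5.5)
Your route is the paper's. The paper gives no separate proof, only the typing discussion just before the lemma: without $\mathcal E(\theta)$ ``there is no object on which $\tau$ can act'' and no principled way to separate in-distribution learning from cross-task change, and $\mathcal H(\theta)$ is defined relative to the interface induced by $r$. Your part (i) and your final paragraph reproduce and sharpen this. One tightening in (i): the paper's typing provides a global carrier $\mathfrak E\supseteq\mathcal E(\theta)$, so $\tau:\mathfrak E\to\mathfrak E$ can still be written without $\mathcal E(\theta)$. What disappears is not the domain itself but a declared, $\theta$-relative domain of generality, so closure becomes a claim over the whole carrier or over an implicit subset.

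The step that fails is case one of (ii). It is not true that the only transformations preserving an invariance claim over raw $\mathcal Z$ are bijective reparameterizations. Suppose every $h\in\mathcal H(\theta)$ depends on $z$ only through some coordinates. Then any $\tau$ that alters only the other coordinates, including non-injective ones, gives $h\circ\tau=h$ for all $h$. Likewise, a class of translation-invariant image classifiers is preserved by translations. Calling such symmetries trivial because they leave the hypothesis class unchanged is circular, since that is exactly what a class-level invariance is.

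Drop that claim and let your factorization example carry (ii). Take $r_1$ with $r_1(\tau\cdot z)=r_1(z)$ for all $\tau\in\mathcal T$ and class $\mathcal H_1$, and set $r_2=\mathrm{Id}_{\mathcal Z}$, $\mathcal H_2=\mathcal H_1\circ r_1$. The composites coincide. Yet ``$r$ is $\mathcal T$-invariant'' holds for the first system and fails for the second as soon as some $\tau$ moves a point of $\mathcal Z$.

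Every function in $\mathcal H_2$ is still $\mathcal T$-invariant, so the example also pins down what the lemma says is lost. It is not nontrivial invariance of the composite class, which survives absorption. It is the ability to attribute that invariance to a typed component that persists while $h$ adapts. That is the defensible reading of ``beyond trivial reparameterizations,'' and with it your argument is complete at the informal, definitional level the paper intends.
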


\paragraph{Structural Model of General Intelligence (SMGI).}
We call the resulting formalization a \emph{Structural Model of General Intelligence (SMGI)}.
Formally, SMGI is a \emph{property of a compatible pair} $(\theta,T_\theta)$,
where $\theta\in\Theta$ is a purely structural meta-model and $T_\theta\in\mathsf{Sem}(\theta)\subseteq\mathfrak D$
is an induced dynamics operator. We write $(\theta,T_\theta)\in\mathrm{SMGI}$ iff the induced learning dynamics
satisfies the four structural requirements (i)–(iv) above.

Intuitively, these requirements ensure that:
\begin{itemize}
\item the system remains well-formed under task change (closure),
\item its parameters and memory do not drift uncontrollably (stability),
\item its hypothesis components remain statistically controllable (capacity),
\item and its evaluation preserves a coherent invariant core (normative invariance).
\end{itemize}

We later prove that these four conditions are structurally minimal:
removing any one of them destroys at least one of the guarantees
required for persistent cross-domain intelligence.

%%%%%%%%%%%%%%%%%%%%%%%%%%%%%%%%%%%%%%%%%%%%%%%%%%%%%%%%%%%%%%%%%%%%%%%%
\section{Formal Meta-Structure: Structural Generalization of Learning Systems}
\label{sec:formal-meta}

\subsection{From Classical Learning to Structural Learning}
\label{subsec:classical-to-structural}

\paragraph{Classical statistical learning as a fixed-interface problem.}
Canonical statistical learning theory (SLT) formalizes learning under a \emph{fixed} interface via a triple
\[
(\mathcal Z,\mathcal H,\ell),
\]
where $\mathcal Z$ is the example space (e.g., $\mathcal Z=\mathcal X\times\mathcal Y$ in supervised learning),
$\mathcal H$ is a hypothesis class, and $\ell:\mathcal H\times\mathcal Z\to\mathbb R_+$ is a loss.
Given a (typically stationary) data-generating distribution $D$ over $\mathcal Z$, the goal is to control and minimize the expected risk
\[
R(h)\;=\;\mathbb E_{z\sim D}\big[\ell(h,z)\big].
\]
Structural Risk Minimization (SRM) refines this core by organizing hypotheses into nested classes $\{\mathcal H_k\}$ with increasing capacity and selecting a suitable level via capacity control \cite{vapnik1998statistical}. PAC-Bayes and MDL further sharpen such control by making the inductive bias explicit through a prior or description-length principle, yielding bounds of the schematic form
\[
R(Q)\;\lesssim\;\widehat R(Q)\;+\;\sqrt{\frac{\mathrm{KL}(Q\|P)+\ln(1/\delta)}{n}},
\]
for a posterior $Q$ relative to a prior $P$ and $n$ samples (details depend on the specific bound).

\paragraph{What classical extensions still keep fixed.}
Modern learning theory includes powerful \emph{sequential} and \emph{non-stationary} extensions---online learning, adaptive/dynamic regret, and learning under distribution shift---that relax i.i.d.\ assumptions while keeping the learning interface conceptually fixed (a specified hypothesis/update mechanism and a fixed evaluation semantics).
For instance, non-stationary online learning formalizes changing environments via dynamic/adaptive regret measures \citep{zhao2025nonstationaryonline}.
Similarly, PAC-Bayes has been extended to online/sequential settings, including dependent data streams \citep{haddouche2022onlinepacbayes},
and recent work resolves long-standing difficulties in sequential prior updates without information loss \citep{wu2024recursivepacbayes}.
These advances are essential but they typically treat the \emph{learning interface}---representation, hypothesis carriers, evaluator semantics, and memory operators---as exogenously fixed objects whose parameters evolve.

\paragraph{Structural learning: generalization over admissible interface change.}
The motivating gap addressed here is different: in agentic and tool-augmented systems, the learning problem is often to update $h$ while remaining \emph{well-formed and controllable} under admissible changes of the interface itself: task/environment families, tool interfaces, evaluators/norms, and memory/update operators.
We therefore lift the object of analysis from hypotheses to a \emph{typed meta-structure} and treat general intelligence as admissibility of the coupled dynamics under \emph{typed} transformations.

\subsection{Meta-Structure of a Learning System}
\label{subsec:metastructure-learning-system}

\paragraph{The structural meta-model.}
We model a persistent learning system by a structural tuple
\[
\theta \;=\; (r,\mathcal H,\Pi,\mathcal L,\mathcal E,\mathcal M),
\]
where:
\begin{itemize}
\item $r$ is a representation/interface operator (or family of operators);
\item $\mathcal H$ denotes the hypothesis/program architecture admitted under $r$;
\item $\Pi$ is a structural prior / inductive bias / capacity regulator (SRM/MDL/PAC-Bayes style);
\item $\mathcal L=\{\ell_k\}_{k=1}^K$ is a family of evaluative regimes (multi-regime evaluation), with the active evaluator $\ell_t$ possibly selected/weighted over $\mathcal L$;
\item $\mathcal E$ is the admissible environment/task family (data- or process-generating class);
\item $\mathcal M$ is a structured memory space together with explicit write/consolidate/forget operators.
\end{itemize}
This elevates representation, evaluation, environment families, and memory to \emph{first-class typed components}.
Compatibility with the induced state space $S_\theta$ and set-valued realization semantics $\mathsf{Sem}(\theta)$ is given in Sec.~\ref{sec:formalsetup}.
The present section focuses on the \emph{structural axis}: admissible transformations and the obligations they must preserve.

\paragraph{Admissible (typed) transformations.}
The distinctive ingredient of structural learning is an explicit class $\mathcal T$ of admissible transformations (task/interface transformations). At minimum, $\mathcal T$ acts on the environment/task family; more generally, it may also act on $r$, $\mathcal M$, and the regime structure of evaluation, provided typing is preserved.

\begin{definition}[Topologically admissible task transformations]
\label{def:admissible_transformations}
Equip $\mathcal E$ with a probability metric $D_{\mathcal E}$ (e.g., Wasserstein distance on task-generating distributions).
A class of task transformations $\mathcal T=\{\tau:\mathcal E\to\mathcal E'\}$ is \emph{topologically admissible} for $\theta$ if there exists $\epsilon_{\max}<\infty$ such that
\[
\sup_{e\in\mathcal E} D_{\mathcal E}\big(e,\tau(e)\big)\;\le\;\epsilon_{\max},
\]
and the representation operator $r$ admits a \emph{restricted (local) Lipschitz control} on the admissible region relevant to $\mathcal T$:
there exists $L_r<\infty$ such that for all admissible $e$ and all $\tau\in\mathcal T$,
\[
d_{\mathcal R}\!\big(r(e),r(\tau(e))\big)\;\le\;L_r\,D_{\mathcal E}\!\big(e,\tau(e)\big),
\]
where $d_{\mathcal R}$ is a metric on the representation space. (Importantly, this is a local/restricted requirement, not a global Lipschitz constant over all inputs.)
This restricts $\mathcal T$ to bounded structural perturbations, ruling out discontinuous regime jumps that make learning/control ill-typed or non-auditable.
\end{definition}

\paragraph{SMGI as admissibility of a coupled realization.}
SMGI is not a property of $\theta$ alone but of a compatible realization $(\theta,T_\theta)$ (Sec.~\ref{sec:formalsetup}).
We now state the SMGI admissibility obligations at the level needed for subsequent guarantees.

\begin{definition}[Structural Model of General Intelligence (SMGI)]
\label{def:smgi}
Let $\theta=(r,\mathcal H,\Pi,\mathcal L,\mathcal E,\mathcal M)\in\Theta$ be a structural meta-model and let $T_\theta$ be an induced (possibly stochastic) dynamics operator compatible with $\theta$ (i.e., $T_\theta\in\mathsf{Sem}(\theta)$ over the induced state space $S_\theta$ as in Sec.~\ref{sec:formalsetup}).
We say that $(\theta,T_\theta)$ is an \emph{SMGI instance}, written $(\theta,T_\theta)\in\mathrm{SMGI}$, if there exist:
\begin{itemize}
\item a nonempty admissible invariant set $S^*_\theta\subseteq S_\theta$,
\item a class $\mathcal T$ of admissible task/environment transformations $\tau:\mathcal E\to\mathcal E'$,
\item and for each $\tau\in\mathcal T$ a compatible induced operator $T_{\theta,\tau}$ (a semantics of execution/learning under $\tau$),
\end{itemize}
such that the following four requirements hold:
\begin{enumerate}[label=(\roman*)]
\item \textbf{Structural closure.} For all $\tau\in\mathcal T$,
\[
T_{\theta,\tau}(S_\theta^*)\subseteq S_\theta^*,
\]
i.e., admissible evolution remains well-formed under the task/interface changes claimed by the system.

\item \textbf{Dynamical stability.} There exists a nonnegative Lyapunov-like witness $V:S_\theta\to\mathbb R_+$ and constants $B<\infty$ and $\lambda\in(0,1]$ such that for every $\tau\in\mathcal T$ and every trajectory $(s_t)_{t\ge 0}$ generated by $T_{\theta,\tau}$ from any $s_0\in S_\theta^*$, a uniform non-explosion bound holds, e.g.
\[
\sup_{t\ge 0}\;\mathbb E\!\left[V(s_t)\right]\;\le\;B\,V(s_0)+B.
\]
Equivalent certified stability notions may be used (contraction, input-to-state stability, or martingale/supermartingale criteria). In control/RL settings, generalized Lyapunov certificates provide a concrete route to such witnesses \citep{longglfneurips25}.

\item \textbf{Bounded statistical capacity along admissible evolution.} There exists a complexity functional $\mathrm{Comp}$ (VC/Rademacher, PAC-Bayes, MDL/Kolmogorov, etc.) such that capacity remains uniformly controlled over admissible configurations, e.g.
\[
\sup_{s\in S_\theta^*}\ \mathrm{Comp}\!\left(\mathcal H(s)\right)\;<\;\infty,
\]
where $\mathcal H(s)$ denotes the effective hypothesis/update component active at state $s$ (possibly state-dependent via $r$ and $\Pi$).
This is the structural analogue of uniform generalization control, and it admits sequential/online variants via online PAC-Bayes and recursive prior updates \citep{haddouche2022onlinepacbayes,wu2024recursivepacbayes}.

\item \textbf{Evaluative invariance across regime transformations.} There exists a nontrivial invariant evaluative constraint family (a protected ``normative core'') $\mathcal C\subseteq\mathcal L$ such that along any admissible trajectory under any $\tau\in\mathcal T$, the induced evaluative state remains coherent on $\mathcal C$, for instance via invariance under a projection/restriction operator:
\[
\Pi_{\mathcal C}(\ell_{t+1})=\Pi_{\mathcal C}(\ell_t),
\qquad s_t\in S_\theta^*.
\]
\end{enumerate}
\end{definition}

\paragraph{Minimality (structural indispensability).}
The four requirements in Definition~\ref{def:smgi} are intended as a minimal structural characterization: each requirement eliminates a distinct, unavoidable failure mode of persistent cross-regime intelligence.

\begin{theorem}[Minimality of the SMGI conditions]
\label{thm:smgi-minimality}
Let $\theta\in\Theta$ be a structural meta-model and let $T_\theta$ be a compatible induced dynamics operator.
Consider the four SMGI requirements (i)--(iv) of Definition~\ref{def:smgi}.
Then each requirement is \emph{structurally indispensable} in the following sense:
for every $j\in\{(i),(ii),(iii),(iv)\}$ there exist a transformation family $\mathcal T$ and a compatible realization $(\theta,T_\theta)$ that satisfies the other three requirements but violates $j$, and the corresponding defining guarantee necessarily fails:
without (i) the evolution is not well-formed under admissible task change; without (ii) persistent drift/divergence occurs; without (iii) statistical control becomes vacuous via capacity blow-up; without (iv) evaluative coherence/identity collapses under regime change.
\end{theorem}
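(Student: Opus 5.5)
The plan is to prove the theorem by four independent counterexample constructions, one for each requirement $j$. Each is built from a minimal realization in which the other three witnesses of Definition~\ref{def:smgi} are easy to check. I use a common scaffold throughout. Take a single-regime or two-regime evaluator family $\mathcal L=\{\ell_1,\ell_2\}$ and a scalar-parameter state space $S_\theta=\mathbb R\times\mathcal H\times\mathcal P\times\mathcal M$ with deterministic (Dirac) kernels $T_{\theta,\tau}$. Let $\mathcal E$ be a family of Gaussian or Bernoulli task distributions with $D_{\mathcal E}$ the Wasserstein distance. Let $\mathcal T$ be a bounded family of shifts, which is topologically admissible in the sense of Definition~\ref{def:admissible_transformations}, with $r$ the identity so that $L_r=1$. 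In each case I first exhibit the three satisfied witnesses ($S^*_\theta$, $V$, $\mathrm{Comp}$, $\mathcal C$), and then show that the remaining one fails together with the informal guarantee attached to it in the statement.

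\emph{(i) fails.} Let $S^*_\theta$ consist of states whose hypothesis lies in a finite class $\mathcal H_0$. Let some $\tau\in\mathcal T$ induce an update that writes a hypothesis outside the declared carrier, so the state is ill-typed and $T_{\theta,\tau}(S^*_\theta)\not\subseteq S^*_\theta$. Meanwhile $V\equiv 0$ trivially gives (ii), capacity is computed on $S^*_\theta$ (which is finite) and so (iii) holds, and the evaluator is frozen, which gives (iv). Since no nonempty invariant subset containing the initial state exists under $\tau$, there is no well-formed evolution under that transformation.

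\emph{(ii) fails.} Take $T_{\theta,\tau}(w)=w+1$ on the parameter coordinate, with $S^*_\theta=S_\theta$, so closure is trivial. Then $V(w)=|w|$ is unbounded along trajectories, and any nonnegative $V$ with $\sup_t V(s_t)\le BV(s_0)+B$ would have to be constant on the divergent orbit. I need to guard against this degenerate witness, so I will make the obligation meaningful by requiring $V$ to be proper (coercive), and record this as the interpretation under which the drift statement holds. Capacity stays finite because the hypothesis class is fixed, and the evaluator is frozen.

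\emph{(iii) fails.} Let the state carry an index $k_t\in\mathbb N$ selecting $\mathcal H(s)=\mathcal H_{k}$, where $\mathcal H_k$ is a class of VC dimension $k$. Let $T$ increment $k_t$ with $V(s)=0$ on that coordinate, which is allowed because the stability witness can ignore a coordinate as long as the dynamics stays bounded in the chosen $V$. Then $\sup_{s\in S^*_\theta}\mathrm{VC}(\mathcal H(s))=\infty$, and standard lower bounds make every uniform generalization bound vacuous. \emph{(iv) fails.} Let the mixture weights $\lambda_t$ rotate between $\ell_1$ and $\ell_2$ according to $\tau$. For any nontrivial $\mathcal C$, the projection $\Pi_{\mathcal C}(\ell_t)$ changes, so the preferred hypothesis flips and the evaluative ordering is not preserved. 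The state remains in a compact simplex, so (i) through (iii) hold.

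The main obstacle is the looseness of the definition. Witnesses are existentially quantified, so trivial choices such as $V\equiv 0$, an empty $\mathcal C$, or a restriction of $\mathcal T$ could rescue the ``violated'' condition. Each counterexample must therefore show failure for \emph{every} admissible choice of witness. I will handle this by fixing the nontriviality conventions explicitly: $V$ is proper, $\mathcal C\neq\emptyset$ is a nontrivial protected set, $\mathcal T$ is fixed as part of the realization, and $\mathrm{Comp}$ is a standard complexity measure. With these conventions fixed, each violation becomes a short direct verification.
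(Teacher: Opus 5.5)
Your four cases follow the same route as the paper's proof: a kernel that exits $S^*_\theta$, the shift $s\mapsto s+1$, a counter indexing classes of VC dimension $n$, and an ordering flip between $\ell_1$ and $\ell_2$. You also rightly notice something the paper's proof glosses over: the witnesses in Definition~\ref{def:smgi} are existentially quantified, while the paper simply fixes $S^*_\theta=\{0\}$ and the ``canonical'' $V(s)=s$. The gap is that your repair is applied selectively, so no single reading makes all four cases go through.

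\textbf{The properness convention breaks case (iii).} The convention ``$V$ proper'' is what makes case (ii) non-vacuous. But in case (iii) the counter $k_t\in\mathbb N$ is a coordinate of the state and is incremented forever. The orbit therefore leaves every compact sublevel set $\{V\le c\}$, and $V(s_t)\to\infty$ for \emph{every} proper $V$. ``The stability witness can ignore a coordinate'' is exactly what properness forbids. So under your convention the (iii) realization also violates (ii); without the convention, $V\equiv 0$ rescues your (ii) example. Likewise, $V\equiv 0$ in case (i) is not proper on the non-compact scaffold $\mathbb R\times\mathcal H\times\mathcal P\times\mathcal M$, though that is easily fixed with a compact or finite state space.

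The paper avoids this clash by keeping $S_\theta=\{0\}$ and storing the counter in $\theta$, outside the domain of $V$. A repair compatible with your convention is to put the blow-up on a compact state space: take $S_\theta=\{1/k:k\ge 1\}\cup\{0\}$ with $1/k\mapsto 1/(k+1)$, $0\mapsto 0$, $\mathrm{VC}(\mathcal H(1/k))=k$, and $\mathcal H(0)$ of infinite VC dimension. Every continuous $V$ is then proper and bounded, so (i), (ii), (iv) hold with $S^*_\theta=S_\theta$, while every nonempty invariant set has unbounded capacity.

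\textbf{Case (i) does not pin down $S^*_\theta$.} You fix $V$, $\mathcal C$, $\mathcal T$ and $\mathrm{Comp}$, but not $S^*_\theta$, which is the witness that (i) is about, and the key sentence of your case (i) is false. Compatibility means $T_{\theta,\tau}$ maps $S_\theta\times\mathcal Z$ into $\Delta(S_\theta)$, so $S_\theta$ itself is always invariant: closure on its own can never fail. ``No nonempty invariant subset containing the initial state exists'' can hold only if the update leaves $S_\theta$. But then $T_{\theta,\tau}\notin\mathsf{Sem}(\theta)$, so the realization is not compatible, contradicting the theorem's hypothesis.

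Your check of (iii) ``on $S^*_\theta$ (which is finite)'' silently treats $S^*_\theta$ as fixed data. That is the reading the paper uses; otherwise its own $T\equiv 1$ example is rescued by $S^*_\theta=\{1\}$. Under your stricter standard, (i) can only be isolated through its interaction with the other obligations. You must arrange that (ii)--(iv) hold on some non-invariant $S^*_\theta$, while (iii) or (iv) fails on every invariant set. For example, take $S_\theta=\{0,1\}$, $T_{\theta,\tau}\equiv 1$, $S^*_\theta=\{0\}$, $\mathrm{Comp}(\mathcal H(0))<\infty$ and $\mathrm{Comp}(\mathcal H(1))=\infty$. Alternatively, declare explicitly that $S^*_\theta$ and $V$ are fixed parts of the realization. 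In that case your conventions are unnecessary and the argument reduces to the paper's.
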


\begin{proof}
We exhibit, for each requirement $j\in\{(i),(ii),(iii),(iv)\}$, a concrete well-typed realization $(\theta,T_\theta)$ and a transformation family $\mathcal T$ that satisfies the other three requirements but violates $j$. Each construction uses a simple state space and can be instantiated within the general meta-model by freezing irrelevant components.

\paragraph{Failure of (i) (closure).}
Let $S_\theta=\{0,1\}$ and let $T_{\theta,\tau}$ be the deterministic kernel $T_{\theta,\tau}(s,z)=1$ for all inputs. Take $S_\theta^*=\{0\}$ and take any nontrivial transformation family $\mathcal T$ for which the induced operator exists (typing is trivial here). Then $T_{\theta,\tau}(S_\theta^*)=\{1\}\nsubseteq\{0\}$, so (i) fails. Requirements (ii)--(iv) can be made true vacuously by choosing a bounded Lyapunov witness $V\equiv 0$, a finite hypothesis class (so capacity is bounded), and a fixed protected evaluative core (so evaluative invariance holds).

\paragraph{Failure of (ii) (stability).}
Let $S_\theta=\mathbb N$ and define a deterministic dynamics $T_{\theta,\tau}(s,z)=s+1$ (independent of $\tau$), with $S_\theta^*=S_\theta$. Closure (i) holds for any $\mathcal T$ because $S_\theta^*$ is invariant. Choose a finite hypothesis/update interface (so (iii) holds) and keep the evaluative core fixed (so (iv) holds). However, for the canonical witness $V(s)=s$, we have $V(s_t)=s_0+t$ and $\sup_{t\ge 0} \E[V(s_t)]=\infty$, so no Lyapunov-style non-explosion bound of Definition~\ref{def:smgi} can hold; hence (ii) fails.

\paragraph{Failure of (iii) (capacity).}
Let $S_\theta$ be any bounded invariant set (e.g., $S_\theta=\{0\}$) so that closure (i) and stability (ii) hold trivially and let evaluators be fixed so that (iv) holds. Define the effective hypothesis component at state $s$ to be $\mathcal H(s)=\mathcal H_n$ where $n$ is an internal counter encoded in $\theta$ and allowed to increase along admissible evolution (for example by letting $\theta$ include a component $n_t\to n_{t+1}=n_t+1$ while keeping $S_\theta$ itself fixed). Choose $\mathcal H_n$ with strictly increasing complexity (e.g., VC dimension $\mathrm{VC}(\mathcal H_n)=n$). Then $\sup_{s\in S_\theta^*}\mathrm{Comp}(\mathcal H(s))=\infty$, so (iii) fails while (i),(ii),(iv) hold.

\paragraph{Failure of (iv) (evaluative invariance).}
Let $S_\theta$ be bounded and invariant and let the dynamics be stable (e.g., $T_{\theta,\tau}$ the identity kernel) so that (i) and (ii) hold. Let the hypothesis class be fixed and finite so that (iii) holds. Define two regimes with evaluators $\ell_1,\ell_2$ and let $\mathcal T$ include a regime-switch transformation that replaces the active evaluator by $\ell_2$ and violates a protected constraint $\Phi$ that was satisfied under $\ell_1$. Concretely, take two hypotheses $h^a,h^b$ and define $\ell_1$ and $\ell_2$ so that the protected ordering on the core flips: $R_{\ell_1}(h^a)<R_{\ell_1}(h^b)$ but $R_{\ell_2}(h^a)>R_{\ell_2}(h^b)$. Then no nontrivial protected core can remain invariant under this transformation, so (iv) fails while (i)--(iii) hold.

These four constructions show each requirement rules out a distinct failure mode and is therefore structurally indispensable.
\end{proof}

\paragraph{A nontrivial consequence: structural generalization under transformations.}
We now state a representative guarantee illustrating why the obligations are not merely definitional.
It exhibits the signature coupling of (i)--(iv): capacity control (PAC-Bayes/MDL) plus certified drift control (Lyapunov) under admissible transformations.

\begin{theorem}[Structural generalization bound for SMGI]
\label{thm:smgi-generalization}
Let $\theta=(r,\mathcal H,\Pi,\mathcal L,\mathcal E,\mathcal M)$ be an SMGI system satisfying Definition~\ref{def:smgi}.
Fix a transformation $\tau\in\mathcal T$ and consider the induced dynamics $T_{\theta,\tau}$ restricted to $S_\theta^*$.
Assume:
\begin{enumerate}
\item (\textbf{Bounded loss on $S_\theta^*$}) $0\le \ell_\tau(s,z)\le 1$ for $s\in S_\theta^*$;
\item (\textbf{Lyapunov drift control}) there exists $V$ and $B<\infty$ such that $\sup_{t\ge 0}\mathbb E[V(s_t)]\le BV(s_0)+B$ along admissible trajectories;
\item (\textbf{PAC-Bayes complexity}) the induced hypothesis/update mechanism admits a prior $P_\tau$ and posterior $Q_\tau$ with finite $\mathrm{KL}(Q_\tau\|P_\tau)$, in a setting where sequential/online PAC-Bayes control applies (e.g., via online PAC-Bayes or recursive prior updates \citep{haddouche2022onlinepacbayes,wu2024recursivepacbayes}).
\item (\textbf{Loss sensitivity to state}) there exists $L<\infty$ such that for all $s,s'\in S_\theta^*$ and all $z$,
\[
|\ell_\tau(s,z)-\ell_\tau(s',z)|\le L\,|V(s)-V(s')|.
\]
\end{enumerate}
Then for any $\delta\in(0,1)$, with probability at least $1-\delta$ over $n$ observations (under the sampling assumptions required by the chosen PAC-Bayes variant),
\[
R_\tau(Q_\tau)
\le
\hat R_\tau(Q_\tau)
+
\sqrt{\frac{\mathrm{KL}(Q_\tau\|P_\tau)+\ln\frac{2\sqrt n}{\delta}}{2(n-1)}}
+
\frac{2L}{n}\sum_{t=0}^{n-1} \mathbb E\!\left[V(s_t)\right].
\]
In particular, using the Lyapunov bound,
\[
R_\tau(Q_\tau)
\le
\hat R_\tau(Q_\tau)
+
\sqrt{\frac{\mathrm{KL}(Q_\tau\|P_\tau)+\ln\frac{2\sqrt n}{\delta}}{2(n-1)}}
+
2L\big(BV(s_0)+B\big).
\]
\end{theorem}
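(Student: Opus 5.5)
The bound splits into a PAC-Bayes term and a drift term, so I would prove it by comparison with a frozen reference process. Define a reference state $\bar s\in S_\theta^*$; the natural choice is a state with $V(\bar s)=0$, or else one measures drift relative to $V(s_0)$.

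The \emph{frozen loss} is $\bar\ell_\tau(z):=\ell_\tau(\bar s,z)$, lifted to hypotheses through $Q_\tau$. Its risk and empirical risk are $\bar R_\tau(Q_\tau)$ and $\hat{\bar R}_\tau(Q_\tau)$. Since $\bar\ell_\tau$ does not depend on the trajectory, it is a fixed bounded loss in $[0,1]$ by assumption~1. The chosen sequential PAC-Bayes variant (assumption~3, e.g.\ Maurer/Seeger-type via online PAC-Bayes or recursive priors) then gives, with probability $\ge1-\delta$,
\[
\bar R_\tau(Q_\tau)\le \hat{\bar R}_\tau(Q_\tau)+\sqrt{\frac{\mathrm{KL}(Q_\tau\|P_\tau)+\ln\frac{2\sqrt n}{\delta}}{2(n-1)}}.
\]
I would cite this step rather than reprove it. The $2\sqrt n$ and $2(n-1)$ constants come from the standard $\mathrm{kl}$-inversion plus Pinsker.

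Next I would control the two discrepancies $|R_\tau-\bar R_\tau|$ and $|\hat R_\tau-\hat{\bar R}_\tau|$. The empirical risk averages $\ell_\tau(s_t,z_t)$ over $t=0,\dots,n-1$. By assumption~4,
\[
|\hat R_\tau-\hat{\bar R}_\tau|\le \frac{L}{n}\sum_{t=0}^{n-1}|V(s_t)-V(\bar s)|,
\]
and with $V\ge0$ and $V(\bar s)=0$ this is $\frac Ln\sum_t V(s_t)$. I would interpret the population risk $R_\tau$ as the time-averaged expected loss along the trajectory. The same Lipschitz argument inside the expectation then gives $|R_\tau-\bar R_\tau|\le\frac Ln\sum_t\mathbb E[V(s_t)]$. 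Chaining the triangle inequality,
\[
R_\tau\le \bar R_\tau+\text{drift},\qquad \bar R_\tau\le \hat{\bar R}_\tau+\text{PB},\qquad \hat{\bar R}_\tau\le \hat R_\tau+\text{drift},
\]
produces the factor $2L$. The second display of the theorem then follows immediately from assumption~2, since each $\mathbb E[V(s_t)]\le BV(s_0)+B$, so their average is bounded by the same constant.

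The main obstacle is the empirical-side discrepancy. It involves the realized $V(s_t)$, not $\mathbb E[V(s_t)]$, so the high-probability statement must pass from the random sum to its expectation. I would handle this in one of two ways. The first is to define $\hat R_\tau$ as conditional on the trajectory and take the expected version, which is how the statement is phrased with $\mathbb E$. The second is to spend part of $\delta$ on a Markov bound for $\sum_t V(s_t)$, which would only loosen constants. A secondary issue is the existence of a reference state with $V(\bar s)=0$. Absent that, I would use $\bar s=s_0$ and the inequality $|V(s_t)-V(s_0)|\le V(s_t)+V(s_0)$; the $V(s_0)$ term is absorbed into the final $BV(s_0)+B$ bound after enlarging $B$.
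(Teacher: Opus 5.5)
The step you flag as the main obstacle is a genuine gap, and neither of your remedies closes it. Your frozen-reference split does isolate the PAC-Bayes term cleanly. On the PAC-Bayes event your chain gives
\[
R_\tau(Q_\tau)\le \hat R_\tau(Q_\tau)+\mathrm{PB}+\frac{L}{n}\sum_{t=0}^{n-1}\mathbb E[V(s_t)]+\frac{L}{n}\sum_{t=0}^{n-1}V(s_t)
\]
(with $\mathrm{PB}$ the square-root term). What is missing is a way to replace the last, realized sum by its mean with no confidence left to spend.

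\textbf{Why the remedies fail.} The first remedy changes the statement rather than proving it. Either $\mathbb E[V(s_t)]$ becomes the realized $V(s_t)$, and then the second display no longer follows from the moment bound of assumption~2. Or $\hat R_\tau$ becomes an expectation, and then the PAC-Bayes step, which controls the realized frozen empirical risk, no longer chains. The Markov remedy puts a factor $1/\delta'$ on the drift term and turns the logarithm into $\ln\frac{2\sqrt n}{\delta-\delta'}$. That is not a change of absolute constants.

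\textbf{The displays do not follow from the hypotheses.} Under your reading $R_\tau=\mathbb E[\hat R_\tau]$, which is also the one implicit in the appendix's Doob martingale, no argument can work. Take $S^*_\theta=\{a_0,a_1,b\}$ with $V(a_0)=V(a_1)=0$ and $V(b)=1/p$. Let the loss be $\ell_\tau(s,z)=1-pV(s)$, so $L=p$, with a single hypothesis ($\mathrm{KL}=0$) and i.i.d.\ $z_t$. Let $a_0\to b$ if $z\in A$ with $\Pr(A)=p$, $a_0\to a_1$ otherwise, and make $a_1,b$ absorbing. All four assumptions and Definition~\ref{def:smgi} hold with $B=1$, and from $s_0=a_0$ both drift terms are at most $2p$. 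Yet on $\{z_1\in A\}$ one has $\hat R_\tau=\frac1n$ while $R_\tau=1-\frac{(n-1)p}{n}$. So for $\delta<p<\frac13$ and $n$ large, both displays fail with probability $p>\delta$.

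\textbf{The paper's route.} The paper proceeds differently and does not avoid the problem either. Its appendix adds the hypothesis $S^*_\theta\subseteq\{V\le V_{\max}\}$. It controls the state-induced part with a Doob martingale and Azuma--Hoeffding at confidence $\delta/2$, and spends the other $\delta/2$ on PAC-Bayes. What it derives therefore has drift term $\frac{2LV_{\max}}{\sqrt n}\sqrt{2\ln\frac2\delta}$ and logarithm $\ln\frac{4\sqrt n}{\delta}$, not the displayed $\frac{2L}{n}\sum_t\mathbb E[V(s_t)]$ and $\ln\frac{2\sqrt n}{\delta}$. Its increment bound $|M_i|\le 2LV_{\max}/n$ also overlooks that changing $z_i$ moves every later state, so increments can be of order $2LV_{\max}(n-i+1)/n$. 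In the example above, $2LV_{\max}/n=2/n$ while $|M_1|\approx 1-p$, and the resulting bound fails too.

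\textbf{What your argument does establish.} Your decomposition is the more transparent of the two; the appendix only describes $\Delta_{\mathrm{stat}}$ and $\Delta_{\mathrm{drift}}$ verbally. It proves the mixed bound displayed above. Reaching an $\mathbb E[V]$-only form requires an additional concentration hypothesis on $\frac1n\sum_t V(s_t)$, paid for with part of $\delta$.

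\textbf{Reference state.} Your fallback $\bar s=s_0$ with an enlarged $B$ alters the stated constants. Instead take $\bar s$ to be a (near-)minimizer of $V$ over $S^*_\theta$. This gives $|V(s_t)-V(\bar s)|\le V(s_t)$ on $S^*_\theta$ directly, without needing $V(\bar s)=0$.
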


\begin{proof}
A complete proof is given in Appendix~\ref{app:smgi-proof}, including an explicit martingale decomposition and the separation of statistical and drift terms.
Here we note the dependency structure. Closure (i) restricts the analysis to the admissible invariant set $S^*_\theta$ where all quantities are well-defined under $\tau$. Capacity control (iii) provides the sequential PAC-Bayes term controlling the statistical component of the generalization gap. Stability (ii) supplies a Lyapunov drift bound controlling the contribution of stateful evolution to the generalization gap. Finally, evaluative invariance (iv) ensures that the loss semantics used to define $R_\tau$ does not change adversarially along admissible evolution, so the risk being bounded remains a fixed object across regime transformations.
\end{proof}

\subsection{Unifying Structural Guarantees Under \texorpdfstring{$\theta$}{theta}}
\label{sec:unifying-structural-guarantees}

\paragraph{Objective.}
The previous subsection isolated three complementary ingredients for non-vacuous guarantees under structural evolution:
(i) a \emph{typed} admissible transformation class $\mathcal T$ (closure and well-formedness under task/interface change),
(ii) \emph{sequential capacity control} (PAC-Bayes/MDL/SRM in a form compatible with filtrations and dependence),
and (iii) \emph{certified drift control} of the induced dynamics (Lyapunov-like witnesses on the coupled state space).
This subsection packages these ingredients into a single admissibility bundle, clarifying (a) which assumptions are structural
(statements about $\theta$ and typing), which are semantic/dynamical (statements about $T_\theta$), and (b) where evaluative invariance
is logically necessary rather than merely interpretive.

\paragraph{Admissibility bundle and scope.}
Fix a structural meta-model $\theta=(r,\mathcal H,\Pi,\mathcal L,\mathcal E,\mathcal M)$ and a compatible induced dynamics
$T_\theta\in\mathsf{Sem}(\theta)$ acting on $S_\theta$ (Sec.~\ref{sec:formalsetup}).
Let $\tau\in\mathcal T$ denote an admissible task/interface transformation (typed in the sense of the paper),
and let $T_{\theta,\tau}$ denote a compatible induced operator realizing the system under $\tau$.
We work on an admissible invariant set $S^*_\theta\subseteq S_\theta$ (nonempty), and with the natural filtration
$\mathcal F_t=\sigma(s_0,z_1,\ldots,z_t)$ along trajectories under $T_{\theta,\tau}$.

\begin{definition}[Unified admissibility bundle]\label{def:admissibility-bundle}
We say that $(\theta,T_\theta)$ admits a \emph{unified structural guarantee bundle} (under $\mathcal T$) if there exist
constants $\epsilon_{\max},L_\ell,c_V\ge 0$ and a witness $V:S_\theta\to\mathbb R_+$ such that for every $\tau\in\mathcal T$:
\begin{enumerate}[label=(U\arabic*),leftmargin=*]
\item \textbf{Typed closure (well-formedness).}
There exists a compatible realization $T_{\theta,\tau}$ and a nonempty invariant set $S^*_\theta$ such that
$T_{\theta,\tau}(S^*_\theta)\subseteq S^*_\theta$ and all involved objects remain well-typed under $\tau$
(in particular, $r$, $\mathcal H(\theta)$, $\mathcal M(\theta)$, and the evaluator interface remain defined).

\item \textbf{Bounded transformation magnitude.}
The transformation is bounded in the environment/interface metric:
\[
\sup_{e\in\mathcal E(\theta)} D_{\mathcal E}\!\big(e,\tau(e)\big)\le \epsilon_{\max}.
\]

\item \textbf{Lipschitz evaluative shift under $\tau$.}
We allow state-conditioned evaluation and write $\ell_\tau(h; s, z)$ for the loss incurred at state $s\in S_\theta$ on interaction token/transition $z$ under $\tau$.
For admissible $(s,z)$ and all effective hypotheses $h$,
\[
\big|\ell_\tau(h;s,z)-\ell(h;s,z)\big|\le L_\ell \cdot D_{\mathcal E}\!\big(e,\tau(e)\big).
\]
(Here $\ell_\tau$ denotes the evaluator semantics in the transformed regime; if $\ell$ is multi-regime, $\ell_\tau$ is the
regime-indexed instance determined by $\tau$.)

\item \textbf{Certified stability (Lyapunov drift).}
Along trajectories $(s_t)$ generated by $T_{\theta,\tau}$ from $s_0\in S^*_\theta$,
\[
\mathbb E\!\left[V(s_{t+1})\mid\mathcal F_t\right]\le (1-\alpha)\,V(s_t)+\beta
\quad\text{for some }\alpha\in(0,1],\ \beta\ge 0.
\]

\item \textbf{Sequential capacity control (PAC-Bayes/online admissibility).}
There exists a prior/posterior mechanism $(P_t,Q_t)$ adapted to $(\mathcal F_t)$ such that a sequential/online PAC-Bayes
generalization bound applies to the empirical trajectory risk under $T_{\theta,\tau}$
(e.g., \citep{haddouche2022onlinepacbayes,wu2024recursivepacbayes}), yielding a complexity term
$\mathrm{Gen}_n(Q_n,P_n,\delta)$ that is finite for the admissible trajectories considered.
\end{enumerate}
\end{definition}

\paragraph{Unified theorem: structural guarantees as a single implication.}
The next statement is the ``one-line'' unification: under the admissibility bundle, the core guarantees
(closure, stability, capacity control) compose into a single bound that remains meaningful under admissible transformations.
Evaluative invariance enters as a \emph{semantic well-posedness condition} ensuring that the risk being bounded is not a moving target.

\begin{theorem}[Unified structural guarantee under admissible transformations]
\label{thm:unified-structural-guarantee}
Assume $(\theta,T_\theta)$ admits the unified admissibility bundle of Definition~\ref{def:admissibility-bundle}
on a nonempty $S^*_\theta$. Fix any $\tau\in\mathcal T$ and let $\hat R_{\tau,n}(Q_n)$ denote the empirical trajectory risk
under $T_{\theta,\tau}$ over $n$ steps, with corresponding expected risk $R_\tau(Q_n)$.
Then for any $\delta\in(0,1)$, with probability at least $1-\delta$,
\begin{equation}
\label{eq:unified-bound}
R_{\tau}(Q_n)
\;\le\;
\hat R_{\tau,n}(Q_n)
\;+\;
\mathrm{Gen}_{n}\!\left(Q_n,P_n,\delta\right)
\;+\;
L_\ell\,\epsilon_{\max}
\;+\;
\frac{c_V}{n}\sum_{t=0}^{n-1}\mathbb E[V(s_t)].
\end{equation}
Moreover, the Lyapunov drift condition implies a uniform stability bound
$\sup_{t\ge 0}\mathbb E[V(s_t)]\le V(s_0)+\beta/\alpha$, hence the stability term in
\eqref{eq:unified-bound} is uniformly bounded and does not explode with horizon.
\end{theorem}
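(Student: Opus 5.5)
The plan is to decompose the gap $R_\tau(Q_n)-\hat R_{\tau,n}(Q_n)$ into three telescoping pieces and bound each with one ingredient of the bundle: a statistical piece handled by (U5), an evaluative-shift piece handled by (U2)--(U3), and a drift piece handled by (U4). Concretely, let $\ell$ denote the untransformed (reference) evaluator and write
\[
R_\tau(Q_n)-\hat R_{\tau,n}(Q_n)
=\big[R_\tau(Q_n)-\bar R_{\tau,n}(Q_n)\big]
+\big[\bar R_{\tau,n}(Q_n)-\hat R_{\tau,n}(Q_n)\big],
\]
where $\bar R_{\tau,n}(Q_n)=\frac1n\sum_{t=0}^{n-1}\mathbb E[\ell_\tau(h;s_t,z_{t+1})\mid\mathcal F_t]$ averaged over $h\sim Q_n$ is the predictable (conditional) trajectory risk. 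The second bracket is a martingale-difference average with respect to $(\mathcal F_t)$. Closure (U1) ensures that every $s_t$ remains in $S^*_\theta$, so all losses and conditional expectations are well-typed. The sequential PAC-Bayes bound assumed in (U5) controls this bracket by $\mathrm{Gen}_n(Q_n,P_n,\delta)$ with probability $1-\delta$. I will take this as given rather than reprove it, since (U5) is stated exactly as the availability of such a bound.

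For the first bracket, I will compare the stationary expected risk $R_\tau(Q_n)$, defined at a reference configuration (say $s_0$ or the invariant regime), with the state-dependent conditional risks along the trajectory. First, swap $\ell_\tau$ for $\ell$ using (U3) and (U2). This costs at most $L_\ell\,D_{\mathcal E}(e,\tau(e))\le L_\ell\epsilon_{\max}$ uniformly in $h$, $s$ and $z$, and averaging preserves this bound. Second, control the discrepancy between risk at the reference state and risk at $s_t$. Following the pattern of the loss-sensitivity assumption in Theorem~\ref{thm:smgi-generalization}, I will read $c_V$ as a constant for which $|\mathbb E[\ell(h;s,z)]-R(h)|\le c_V V(s)$ on $S^*_\theta$. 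Taking expectations then gives the term $\frac{c_V}{n}\sum_t\mathbb E[V(s_t)]$.

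\textbf{Main obstacle.} This last step is where I expect the real difficulty. The bundle does not state explicitly how $c_V$ links state-dependence of the loss to $V$, so the proof must either make this linkage an interpretation of $c_V$ or add it as an assumption. There is a second subtlety: the drift term appears inside an expectation, while the PAC-Bayes term holds with high probability. I would handle this by bounding the first bracket in expectation over the trajectory, which is deterministic given the law of $T_{\theta,\tau}$. Only the martingale piece is then high-probability, and no union bound is needed.

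Finally, the uniform stability claim is a routine iteration of (U4). Taking total expectations gives $\mathbb E V(s_{t+1})\le(1-\alpha)\mathbb E V(s_t)+\beta$. Unrolling yields $\mathbb E V(s_t)\le(1-\alpha)^tV(s_0)+\beta\sum_{k<t}(1-\alpha)^k\le V(s_0)+\beta/\alpha$. Consequently the Cesàro average in \eqref{eq:unified-bound} is bounded by $c_V(V(s_0)+\beta/\alpha)$ independently of $n$.
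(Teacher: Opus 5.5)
Your architecture matches the paper's proof: (U1) for well-typedness on $S^*_\theta$, (U5) for $\mathrm{Gen}_n$, (U2)--(U3) for $L_\ell\epsilon_{\max}$, and (U4) for the drift term. Your unrolling $\mathbb E[V(s_t)]\le(1-\alpha)^tV(s_0)+\beta\sum_{k<t}(1-\alpha)^k\le V(s_0)+\beta/\alpha$ is exactly the paper's stability argument, and it is correct. The obstacle you flag is genuine, and the paper does not resolve it either. Its proof writes the (U5) inequality with an extra $\mathrm{Drift}_n$ and simply asserts $\mathrm{Drift}_n\le\frac{c_V}{n}\sum_t\mathbb E[V(s_t)]$, although $c_V$ appears in none of (U1)--(U5). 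The obstacle disappears if (U5) is read as in Assumption~\ref{ass:seq-pacbayes}, i.e.\ as directly giving $R_\tau(Q_n)\le\hat R_{\tau,n}(Q_n)+\mathrm{Gen}_n(Q_n,P_n,\delta)$ with probability at least $1-\delta$. Since $L_\ell,\epsilon_{\max},c_V\ge 0$ and $V\ge 0$, \eqref{eq:unified-bound} then follows immediately, with no decomposition and no extra hypothesis. The only substantive content of the theorem is then the stability claim.

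Under your reading, where (U5) controls only the martingale bracket $\bar R_{\tau,n}(Q_n)-\hat R_{\tau,n}(Q_n)$, one step fails as written: your resolution of the expectation/high-probability mismatch. The bracket $R_\tau(Q_n)-\bar R_{\tau,n}(Q_n)$ is not deterministic, because $Q_n$ is data-dependent and $\bar R_{\tau,n}$ is evaluated at the realized states $s_t$. A bound on its expectation cannot be added to a high-probability bound on the other bracket to give a high-probability bound on the sum. On the event where (U5) holds, your pathwise hypothesis $|\mathbb E[\ell(h;s,z)]-R(h)|\le c_V V(s)$ actually yields the random term $\frac{c_V}{n}\sum_t V(s_t)$. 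Replacing it by $\frac{c_V}{n}\sum_t\mathbb E[V(s_t)]$ needs a further step, for example Markov's inequality at the cost of a factor $1/\delta'$ plus a union bound splitting $\delta$. Concentrating $\frac1n\sum_t V(s_t)$ itself would need more than (U4). Either way, the stated form of the bound changes.

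A smaller inconsistency concerns the swap to $\ell$. $R_\tau$ and $\hat R_{\tau,n}$ both live in the $\tau$-regime. If your $c_V$-link is stated for $\ell_\tau$, no swap is needed, and $L_\ell\epsilon_{\max}$ is pure slack. If it is stated for the untransformed $\ell$, you must also swap back to reach $R_\tau$, which costs a second $L_\ell\epsilon_{\max}$.
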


\begin{proof}
Fix $\tau\in\mathcal T$ and work on the invariant set $S^*_\theta$ from (U1), so all objects are well-typed and trajectories remain in $S^*_\theta$. By (U5), a sequential/online PAC-Bayes inequality applies to the trajectory risk in the $\tau$-transformed regime, yielding with probability at least $1-\delta$:
\[
R_{\tau}(Q_n)\le \hat R_{\tau,n}(Q_n)+\mathrm{Gen}_n(Q_n,P_n,\delta)+\mathrm{Drift}_n,
\]
where $\mathrm{Drift}_n$ denotes the additional term induced by stateful dependence.

To control the transformation-induced shift, use (U2)--(U3): for any admissible environment instance $e\in\mathcal E(\theta)$, the evaluator shift satisfies
$|\ell_\tau-\ell|\le L_\ell D_{\mathcal E}(e,\tau(e))\le L_\ell\,\epsilon_{\max}$.
This contributes the additive term $L_\ell\epsilon_{\max}$.

To control $\mathrm{Drift}_n$, apply (U4): taking expectations and summing the drift inequality yields
$\sup_{t\ge 0}\E[V(s_t)]\le V(s_0)+\beta/\alpha$ and therefore
$\frac{c_V}{n}\sum_{t=0}^{n-1}\E[V(s_t)]\le c_V\,(V(s_0)+\beta/\alpha)$.
Substituting these bounds into the PAC-Bayes inequality yields \eqref{eq:unified-bound} and the stated uniform boundedness of the stability term.
\end{proof}

\paragraph{Where evaluative invariance is essential.}
Theorem~\ref{thm:unified-structural-guarantee} bounds $R_\tau$ under a fixed evaluator semantics in the transformed regime.
In SMGI, \emph{evaluative invariance} is the condition ensuring that the meaning of the bounded risk is stable across regimes:
if the evaluator is allowed to drift arbitrarily under $\tau$, then neither $R_\tau$ nor the protected constraint semantics are
well-defined objects of control, and the guarantee becomes vacuous (one can reduce risk by changing what ``risk'' means).
Consequently, evaluator updates must be either (i) disallowed under $\mathcal T$ on a protected core, or (ii) permitted only through
certificate-gated meta-transformations that preserve a declared invariant evaluative core (as formalized elsewhere in the paper).
This is the precise logical role of evaluative invariance: it prevents the guarantee from collapsing into an artefact of metric drift.

\paragraph{Interpretation and compositionality.}
Definition~\ref{def:admissibility-bundle} is intentionally modular:
closure ensures that the semantics remains typed under $\mathcal T$;
capacity control bounds statistical complexity in a dependence-compatible manner;
stability bounds the endogenous drift induced by memory and sequential adaptation;
and Lipschitz transformation control measures exogenous regime shift magnitude.
Together they yield \eqref{eq:unified-bound}, which can be specialized to classical regimes by freezing components of $\theta$
or restricting $\mathcal T$ to identity transformations. This provides the formal bridge from fixed-interface SLT to structural learning
under admissible interface evolution.

\paragraph{Relation to non-stationary online learning.}
Non-stationary online learning studies performance under time-varying losses/environments, often via dynamic regret and interval-adaptive regret \citep{zhao2025nonstationaryonline}.
Our setting is complementary: the focus is not only on competing with changing comparators, but on maintaining \emph{typed well-formedness}, \emph{certificate stability}, and \emph{evaluative invariance} under admissible task/interface transformations.
In particular, the admissibility bundle in Definition~\ref{def:admissibility-bundle} makes explicit which parts of the interface are allowed to vary (via $\mathcal T$) and which must remain certified, whereas most non-stationary formulations keep the learning interface fixed and measure variability only at the level of losses or comparators.

\subsection{Representation as Structural Condition}
\label{subsec:repr-structural-condition}

\paragraph{Structural role of representation in SMGI.}
In SMGI, representation is not a cosmetic encoding choice: it is a \emph{typed structural interface} that determines
(i) which regularities are expressible by the hypothesis space $\mathcal H(\theta)$,
(ii) which invariances can be stated under admissible transformations $\tau\in\mathcal T$,
and (iii) which guarantees remain meaningful when the learning interface evolves.
Formally, changes in representation induce changes in the effective hypothesis interface and thus affect closure (i) and capacity (iii)
as obligations on the coupled realization $(\theta,T_\theta)$.

\paragraph{Typed representation map (notation coherence).}
Throughout the paper, $\mathcal Z$ denotes the carrier of observations/interactions (see Sec.~\ref{sec:formalsetup} and the formal foundations),
while $\mathcal X$ denotes the internal representational space induced by the interface.
Accordingly, representation is modeled as the typed operator
\begin{equation}
\label{eq:repr-map-typed}
r:\mathcal Z \longrightarrow \mathcal X,
\end{equation}
mapping raw interaction tokens/observations $z\in\mathcal Z$ to internal structured representations $x=r(z)\in\mathcal X$.
This aligns the present subsection with the typing discipline used in the definition of $S_\theta$ and $\mathsf{Sem}(\theta)$.

\paragraph{Transformation-aligned representation as a closure witness.}
A core SMGI requirement is that representation remain comparable under admissible task/interface transformations.
Let $\mathcal T$ be the declared admissible transformation class acting on the task/environment side (and therefore on the induced observation process).
We say that $r$ is \emph{$\mathcal T$-aligned} if there exists an induced action
$\Gamma:\mathcal T\to \mathrm{End}(\mathcal X)$ such that for all admissible $\tau\in\mathcal T$ and admissible observations $z\in\mathcal Z$,
\begin{equation}
\label{eq:repr-equivariance-structural}
r(\tau\cdot z)\;=\;\Gamma(\tau)\big(r(z)\big).
\end{equation}
When $\Gamma(\tau)=\mathrm{Id}$ this is invariance; when $\Gamma(\tau)$ is an automorphism it is equivariance.
Condition \eqref{eq:repr-equivariance-structural} is the minimal typed statement that makes closure under $\mathcal T$ operational at the representational level.

\paragraph{Representation selection as a meta-level structural objective.}
Unlike classical feature engineering, $r$ is not assumed fixed. Its selection is part of the structural degrees of freedom in $\theta$.
We model this via a task/transformation-weighted objective: let $P_{\mathcal T}$ be a distribution over admissible transformations (or, equivalently, over the induced task instances they generate),
and let $G(\tau,r)$ denote a typed structural objective measuring, for example, predictive risk under $\tau$, invariance violation, and/or a structural cost of the interface.
We write the meta-selection problem as
\begin{equation}
\label{eq:r-star-structural}
r^* \;=\; \arg\min_{r\in\mathcal R}\ \mathbb{E}_{\tau \sim P_{\mathcal T}} \big[\,G(\tau,r)\,\big],
\end{equation}
where $\mathcal R$ is a typed admissible class of representation operators.
Equation \eqref{eq:r-star-structural} does not commit to a particular learning algorithm; it exposes that representation is a \emph{structural commitment}
that must be evaluated jointly with $\mathcal H(\theta)$, $\Pi$, and evaluator family $\mathcal L$.

\paragraph{Structural interpretation (why ``representation is a condition of intelligibility'').}
In SMGI terms, representation is a condition of intelligibility precisely because it fixes which invariants can be expressed and preserved under admissible structural evolution:
alignment \eqref{eq:repr-equivariance-structural} contributes directly to closure (i),
while restrictions on $\mathcal R$ and the induced hypothesis interface contribute to capacity control (iii).
This provides the formal content behind the philosophical motivation: changes in $r$ change the induced ontology and therefore what counts as stable generalization under $\mathcal T$.

%%%%%%%%%%%%%%%%%%%%%%%%%%%%%%%%%%%
\subsection{Generalized Structural Risk Minimization}
\label{subsec:gsrm}

\paragraph{From SRM to structural SRM.}
Classical Structural Risk Minimization (SRM) controls generalization by selecting a hypothesis from a nested hierarchy
of classes while keeping the learning interface fixed \citep{vapnik1998statistical}.
In SMGI, the interface itself carries typed degrees of freedom (representation, evaluator regimes, memory operators),
so the analogue of SRM must control not only hypothesis complexity but also \emph{structural degrees of freedom}
activated across regimes and time. We therefore define a generalized objective in which (i) the evaluator is multi-regime,
(ii) regime switching is explicit and costly, and (iii) coherence constraints enforce admissibility of regime configurations.

\paragraph{Typed regimes and task-conditioned risk.}
Let $\mathcal R=\{1,\dots,K\}$ index evaluative regimes and let $\mathcal L=\{\ell_k\}_{k=1}^K$ denote the corresponding
family of regime evaluators (as used elsewhere in the paper).
For each task/environment instance $\tau$ (or equivalently for each induced environment distribution $D_\tau$ over $z\in\mathcal Z$),
define the regime-specific risk
\begin{equation}
\label{eq:regime-risk}
R_{\tau,k}(h) \;:=\; \mathbb E_{z\sim D_\tau}\big[\ell_k(h,z)\big].
\end{equation}
(When the system is stateful, the same definition applies to trajectory-dependent losses by conditioning on the induced dynamics;
Eq.~\eqref{eq:regime-risk} is the canonical static shorthand.)

\paragraph{Contextual regime selection as a typed operator.}
Let $c_t$ denote a context descriptor (e.g., an element of a context space $\mathcal C$)
and let $s_t\in S_\theta$ denote the structural state of the system at time $t$ (Sec.~\ref{sec:formalsetup}).
We model regime selection by a typed stochastic switching operator
\begin{equation}
\label{eq:regime-switch}
k_t \;\sim\; \sigma(\cdot \mid c_t, s_t),
\qquad
\sigma:\mathcal C\times S_\theta \to \Delta(\mathcal R),
\end{equation}
where $\Delta(\mathcal R)$ denotes the probability simplex over regimes.
This makes explicit that regime choice is part of the induced semantics $T_\theta$ (it depends on the evolving state),
and therefore is subject to SMGI stability and invariance constraints.

\paragraph{Structural objective over a horizon.}
Let $(h_t)_{t=1}^T$ denote the effective hypothesis/policy component selected by the induced dynamics,
and $(z_t)_{t=1}^T$ the interaction stream generated under the task/environment process.
We define a generalized structural SRM objective over horizon $T$:
\begin{equation}
\label{eq:gsrm-objective}
\min_{\theta}\ \mathbb E\!\left[\sum_{t=1}^T
\Big(
\ell_{k_t}(h_t, z_t)
\;+\; \alpha\,\mathrm{CostSwitch}(k_{t-1},k_t)
\;+\; \beta\,\mathrm{Incoh}(k_t;\mathcal K)
\Big)\right],
\end{equation}
where:
\begin{itemize}
\item $\mathrm{CostSwitch}:\mathcal R\times\mathcal R\to\mathbb R_+$ penalizes excessive regime volatility
(e.g., to prevent pathological oscillations that would break stability (ii) or evaluative coherence (iv));
\item $\mathrm{Incoh}(\cdot;\mathcal K)$ penalizes structurally forbidden regime configurations relative to a declared coherence
constraint set $\mathcal K$ (e.g., logical/normative incompatibilities, forbidden combinations of constraints, or protected-core violations).
\end{itemize}
The expectation in \eqref{eq:gsrm-objective} is taken over the trajectory induced jointly by the environment process, the switching operator
\eqref{eq:regime-switch}, and the induced dynamics $T_\theta$.

\paragraph{Interpretation and relation to SMGI obligations.}
Equation~\eqref{eq:gsrm-objective} makes the key structural commitments explicit:
(i) \emph{closure} requires that the regime-selection operator and the evaluator family remain well-typed under admissible transformations;
(ii) \emph{stability} constrains switching and memory interaction so that the induced dynamics does not drift uncontrollably;
(iii) \emph{capacity} constrains the effective complexity of the joint interface (representation, hypothesis class, switching, memory);
(iv) \emph{evaluative invariance} is enforced by restricting $\mathcal K$ to preserve a protected evaluative core, and by making regime updates admissible only under certified constraints (as formalized elsewhere in the paper).

\paragraph{Classical SRM as a special case.}
Classical SRM is recovered as the special case $K=1$ with deterministic regime selection,
$\sigma(\cdot\mid c_t,s_t)\equiv \delta_{1}$, $\mathrm{CostSwitch}\equiv 0$, and $\mathrm{Incoh}\equiv 0$.
In this sense, generalized structural SRM extends SRM by lifting ``capacity control'' from hypothesis hierarchies alone
to the full structural interface controlled by $\theta$.

%%%%%%%%%%%%%%%%%%%%%%%%%%%%%

\subsection{PAC-Bayes Meta-Level Control}
\label{subsec:pacbayes-meta}

\paragraph{Role of PAC-Bayes in SMGI.}
In classical PAC-Bayes, the prior regulates hypothesis complexity under a fixed interface.
In SMGI, the goal is stronger: capacity control must remain meaningful under admissible interface evolution
(representation changes, regime switching, and memory-mediated dynamics). Accordingly, we treat the prior not merely
as a regularizer over parameters, but as a \emph{structural prior} that constrains admissible configurations of
the learning interface itself.

\paragraph{Typed objects and what is randomized.}
Fix a task/environment instance $\tau$ (or an induced environment distribution $D_\tau$ over interactions $z\in\mathcal Z$).
Let $\mathcal H(\theta)$ denote the effective hypothesis/policy class induced by the current structural interface,
and let $Q_\tau$ denote a posterior distribution over hypotheses (or hypothesis parameters) within $\mathcal H(\theta)$.
We write the task risk in the canonical form
\[
R_\tau(h) \;=\; \mathbb E_{z\sim D_\tau}\big[\ell(h,z)\big],
\]
and the posterior (Gibbs) risk
\[
R_\tau(Q_\tau) \;:=\; \mathbb E_{h\sim Q_\tau}\big[R_\tau(h)\big].
\]

\paragraph{Classical PAC-Bayes bound (baseline form).}
For a prior $\Pi$ over hypotheses and a posterior $Q_\tau$, a standard PAC-Bayes inequality yields, with probability at least $1-\delta$,
\begin{equation}
\label{eq:pacbayes-basic}
R_\tau(Q_\tau)
\;\le\;
\widehat R_\tau(Q_\tau)
+
\sqrt{
\frac{
\mathrm{KL}(Q_\tau \,\|\, \Pi)
+
\log \frac{1}{\delta}
}{2n}
},
\end{equation}
where $\widehat R_\tau(Q_\tau)$ is the empirical risk computed from $n$ samples (or $n$ interaction steps under the sampling assumptions
of the chosen PAC-Bayes variant).
Equation~\eqref{eq:pacbayes-basic} is included here as the baseline form; the paper’s structural contribution is to specify what $\Pi$ controls
and how the bound is used under structural evolution.

\paragraph{Structural prior as meta-capacity control.}
In SMGI, $\Pi$ is not merely a prior over parameter vectors: it is a \emph{structural prior} that regulates the admissible interface degrees of freedom.
Concretely, $\Pi$ may be understood as a prior over a structural hypothesis space
\[
\Theta_{\mathrm{struct}}
\;\supseteq\;
(r,\mathcal H,\mathcal L),
\]
so that complexity penalties constrain (i) representational interface families $r$, (ii) induced hypothesis interfaces $\mathcal H$, and
(iii) multi-regime evaluator families $\mathcal L$ (or the evaluator parameters governing regime weights/selection).
This is the precise sense in which PAC-Bayes becomes a \emph{meta-capacity} control mechanism: it bounds effective complexity of the interface,
not only of a fixed predictor class.

\paragraph{Sequential/online PAC-Bayes (compatibility with filtrations).}
When the induced dynamics $T_\theta$ generates dependent trajectories (memory, tool-use, regime switching), i.i.d.\ sampling is not appropriate.
SMGI therefore relies on sequential/online PAC-Bayes formalisms that are adapted to filtrations and streaming data, so that the complexity term remains meaningful
under dependence and continual updates. In particular, online PAC-Bayes bounds extend PAC-Bayes reasoning to online learning settings with dependent data streams,
and recent recursive PAC-Bayes constructions enable sequential prior updates without information loss.
We cite these as modern instances of PAC-Bayes-compatible capacity control under sequential evolution \citep{haddouche2022onlinepacbayes,wu2024recursivepacbayes}.

\paragraph{Connection to the unified admissibility bundle.}
Within the unified guarantee bundle (Definition~\ref{def:admissibility-bundle}), the PAC-Bayes term $\mathrm{KL}(Q\|\Pi)$ and its sequential analogue
provide the capacity component (iii), while stability witnesses control drift and evaluator invariance prevents the objective from becoming a moving target.
Thus PAC-Bayes meta-level control is not a standalone guarantee: it is the capacity-control module of a structurally admissible coupled dynamics $(\theta,T_\theta)$.

%%%%%%%%%%%%%%
\subsection{Memory and Forgetting as Operational Compression}
\label{subsec:memory-forgetting-compression}

\paragraph{Memory as a typed state component.}
Let $\mathcal M(\theta)$ denote the typed memory state space induced by the structural interface $\theta$, and let $m_t\in\mathcal M(\theta)$.
In SMGI, memory is not a passive store: it is an explicit component of the coupled dynamics $(\theta,T_\theta)$ whose evolution must remain
well-formed under admissible transformations and certified stability constraints.

\paragraph{Memory update and functional forgetting as explicit operators.}
Memory state evolves via a typed update operator
\[
m_{t+1} = U(m_t, z_t, k_t),
\]
where $z_t\in\mathcal Z$ is the interaction stream and $k_t\in\{1,\dots,K\}$ indexes the active evaluative regime.
We view these as typed maps $U:\mathcal M(\theta)\times\mathcal Z\times\{1,\dots,K\}\to\mathcal M(\theta)$ and
$F:\mathcal M(\theta)\times\{1,\dots,K\}\times\mathcal C\to\mathcal M(\theta)$ for a context space $\mathcal C$.
We introduce \emph{functional forgetting} as an explicit, context- and regime-conditioned operator
\[
\tilde{m}_{t+1} = F(m_{t+1}; k_t, c_t),
\]
where $c_t\in\mathcal C$ is a context descriptor (e.g., security regime, deployment mode, audit mode). This makes forgetting a first-class structural map rather than an implementation side effect.

Verification-oriented perspectives on machine unlearning clarify how forgetting can be made auditable via behavioral or parametric evidence \citep{xue2025unlearningverification,llmforgetting2024}.

\paragraph{Forgetting as constrained compression (MDL viewpoint).}
We define forgetting as constrained compression by selecting $F$ to trade off task performance (under the active evaluative regime) against a complexity penalty:
\[
F^* = \arg\min_F 
\Big(
\text{Loss}_{k_t}(a_t \mid \tilde{m}_{t+1})
+
\lambda \, \text{Comp}(\tilde{m}_{t+1})
\Big).
\]
This connects directly to MDL and Kolmogorov complexity: compression implements an operational simplicity bias and controls the effective capacity
of the memory-bearing system \citep{li2008kolmogorov}.

\paragraph{SMGI alignment: capacity, stability, and evaluative invariance.}
Within SMGI, forgetting is admissible only if it simultaneously supports the three obligations that interact with memory:
(i) \emph{capacity control} by reducing the effective complexity of memory-dependent hypotheses via $\text{Comp}(\tilde m_{t+1})$;
(ii) \emph{stability} by preventing uncontrolled drift or interference in the coupled dynamics (memory is a feedback state, so forgetting acts as a stabilizing operator);
and (iii) \emph{evaluative invariance} by preserving any protected evaluative core under regime switching (forgetting must not ``solve'' constraints by erasing the information required to enforce them).
A sufficient stability-oriented design condition is to require $F$ to be non-expansive (or contractive) under a chosen stability witness/metric on $\mathcal M(\theta)$, so that memory updates cannot amplify drift across regimes.
This is precisely why forgetting is modeled as an explicit operator $F(\cdot; k_t,c_t)$ rather than implicit parameter decay.

\paragraph{Why ``forgetting'' is a modern control problem (2024--2026).}
Recent continual-learning work treats selective forgetting and memory refresh as a controllable mechanism rather than merely a failure mode, including settings where test-time data can refresh or reshape memory without labels \citep{singh2024controllingforgetting}.
In parallel, the growing literature on \emph{digital forgetting} and \emph{machine unlearning} in large models emphasizes that forgetting must often be auditable and verifiable, not only heuristic \citep{llmforgetting2024,xue2025unlearningverification}.
Finally, in agentic deployments, persistent memory and retrieval stores introduce a concrete security attack surface (memory/RAG poisoning), making controlled forgetting part of robustness and controllability rather than optional optimization \citep{jing2026memorypoisoning}.

\paragraph{Interpretation.}
Forgetting is not erasure; it is selective operational reduction.
In SMGI terms, $F$ is an admissible structural operator whose role is to compress memory while preserving certified invariants,
thereby enabling long-horizon stability and capacity control under regime shifts.

%%%%%%%%%%%%%%
\subsection{Program Prior Option (Typed Description-Length Prior)}
\label{subsec:program-prior}

\paragraph{A typed description language for structural objects.}
Fix a typed description language (or grammar) $\mathcal G_\Theta$ whose well-formed strings encode admissible
meta-models $\theta=(r,\mathcal H,\Pi,\mathcal L,\mathcal E,\mathcal M)$ together with declared typing information.
Let $\mathrm{code}(\theta)\in\{0,1\}^*$ be a chosen encoding and define the (computable) description length
$|\theta| := |\mathrm{code}(\theta)|$.

\paragraph{Structural description-length prior.}
We define the optional \emph{structural} prior
\begin{equation}
\label{eq:structural-program-prior}
\Pi(\theta)\ \propto\ 2^{-|\theta|}\qquad\text{equivalently}\qquad \Pi(\theta)\ \propto\ \exp(-\lambda|\theta|),
\end{equation}
for some $\lambda>0$ (absorbing constant factors).
This is an MDL-style proxy for algorithmic simplicity: it does \emph{not} assume access to Kolmogorov complexity,
but it produces a falsifiable pressure toward structurally parsimonious interfaces.

\paragraph{What this prior controls (and what it does not).}
The prior \eqref{eq:structural-program-prior} regularizes \emph{interface complexity}:
it penalizes representational degrees of freedom ($r$), hypothesis/interface choices ($\mathcal H$),
memory/operator interfaces ($\mathcal M$), and regime structure ($\mathcal L$) when these are treated as selectable components.
It does \emph{not} impose a normative hierarchy between evaluators: protected evaluative invariance is handled separately
by obligation (iv) and its certificate-gated update rules.

\paragraph{PAC-Bayes linkage as an explicit inequality.}
Let $Q$ be any posterior over structural objects $\theta$ supported on well-typed models.
Then, up to an additive normalizing constant $c_\Pi$, the KL term expands as:
\begin{equation}
\label{eq:kl-length}
\KL(Q\|\Pi) \;=\; \mathbb E_{\theta\sim Q}[\,|\theta|\,]\cdot\ln 2\;+\;c_\Pi\;-\;H(Q),
\end{equation}
where $H(Q)$ is the Shannon entropy of $Q$ (in nats).
Thus, PAC-Bayes complexity control directly upper-bounds expected structural description length,
making obligation (iii) (bounded capacity) a checkable meta-level regularization statement.

\paragraph{Interpretation.}
This option can be switched off by restricting $\Pi$ to a fixed structural family; when enabled, it provides a
transparent and implementation-agnostic capacity module for structural evolution.

\subsection{Theorem: Structural Strict Inclusion}
\label{subsec:strict-inclusion}

\paragraph{Equivalence notion (evaluation-relative).}
To avoid vacuity, we fix an evaluation-relative notion of equivalence: two learning systems are said to be
\emph{core-equivalent} if, for every admissible task/regime instance (or every $\tau\in\mathcal T$ in scope),
they induce the same risk ordering on the protected evaluative core (cf.\ obligation (iv)).
This prevents ``reductions'' that merely redefine what counts as risk.

\paragraph{Classical learners as degenerate SMGI instances.}
Let $\mathcal A_{\mathrm{classical}}$ denote learning systems with a fixed interface $(r,\mathcal H,\ell)$ and no explicit regime structure
(i.e., $K=1$), and let $\mathcal A_{\mathrm{SG}}$ denote systems represented by the SMGI meta-structure $\theta$ with typed operators.
Every classical learner embeds into SMGI by the special-case choice $K=1$, $\sigma(\cdot\mid c_t,s_t)\equiv\delta_{1}$,$F\equiv\mathrm{Id}$, and by restricting $\mathcal T$ to identity transformations (or to the same task family used classically).

\begin{theorem}[Strict structural inclusion]
\label{thm:strict-structural-inclusion}
Under the above embedding, $\mathcal A_{\mathrm{classical}}\subseteq \mathcal A_{\mathrm{SG}}$.
Moreover, the inclusion is strict: there exist SMGI-admissible configurations with $K>1$ and typed regime switching
that are not core-equivalent to any classical single-regime system.
\end{theorem}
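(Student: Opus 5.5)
The proof splits into two parts: the inclusion, which follows directly from the embedding, and strictness, which comes from an explicit two-regime construction that no single fixed evaluator can reproduce on the protected core.

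\emph{Inclusion.} Take a classical learner with fixed interface $(r,\mathcal H,\ell)$ and learning rule. Map it to the meta-model $\theta=(r,\mathcal H,\Pi,\{\ell\},\mathcal E,\mathcal M)$, where:
\begin{itemize}
\item $\Pi$ is the learner's prior or regularizer (uniform if none is given);
\item $\mathcal E$ is the fixed task family;
\item $\mathcal M$ is a trivial memory, with $U$ the identity and $F\equiv\mathrm{Id}$;
\item $\sigma\equiv\delta_1$ and $\mathcal T=\{\mathrm{id}\}$.
\end{itemize}
The learner's update rule gives a kernel $T_\theta\in\mathsf{Sem}(\theta)$ on $S_\theta$. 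I then check the obligations of Definition~\ref{def:smgi} in the restricted sense used by the embedding:
\begin{itemize}
\item Closure (i) holds with $S^*_\theta$ equal to the reachable set under the identity transformation.
\item Evaluative invariance (iv) holds trivially, because $\ell_t\equiv\ell$.
\item Stability (ii) and capacity (iii) are inherited from whatever classical guarantees are assumed, for example a bounded parameter set and finite VC dimension or a finite $\mathrm{KL}$ term.
\end{itemize}
Since the embedding preserves the risk functional $R(h)=\mathbb E[\ell(h,z)]$, the embedded system is core-equivalent to the original. This gives $\mathcal A_{\mathrm{classical}}\subseteq\mathcal A_{\mathrm{SG}}$.

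\emph{Strictness.} I adapt the two-regime device from the proof of Theorem~\ref{thm:smgi-minimality}, but this time keep (iv) satisfied. Take $K=2$, a finite hypothesis set $\{h^a,h^b,h^c\}$, and a trivial, bounded, invariant state space, so that (i)--(iii) hold as in that proof. Let $\mathcal T=\{\tau_1,\tau_2\}$, where $\tau_j$ activates regime $j$ through $\sigma$. Design the evaluators so that:
\begin{itemize}
\item both $\ell_1$ and $\ell_2$ rank $h^c$ last; this ranking is the protected core, so (iv) holds;
\item $R_{\ell_1}(h^a)<R_{\ell_1}(h^b)$ while $R_{\ell_2}(h^a)>R_{\ell_2}(h^b)$, and the pair $(h^a,h^b)$ is included in the core-relevant ordering for each regime instance.
\end{itemize}
Now suppose a classical single-regime system with fixed $\ell$ were core-equivalent to this SMGI system. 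Core-equivalence is required for every $\tau$ in scope, and a classical system applies one fixed $\ell$ to both $\tau_1$ and $\tau_2$. When $\tau$ only relabels the regime and leaves the data distribution $D_\tau$ unchanged, that system induces the same ordering of $h^a$ and $h^b$ under both transformations. It therefore cannot match both opposite orderings, which is a contradiction.

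\emph{Main obstacle.} The hard step is ruling out a classical learner that simulates the regime by encoding it in $\mathcal Z$ or in $\mathcal H$, for instance by using the loss $\ell((h,k),z)=\ell_k(h,z)$. I would handle this by fixing the typed interface: the classical learner's $r$ and $\mathcal Z$ must coincide with the embedded ones, and the regime index $k$ is not an observable in $\mathcal Z$. Then $\tau_1$ and $\tau_2$ induce identical $D_\tau$ on $\mathcal Z$, and every fixed $\ell:\mathcal H\times\mathcal Z\to\mathbb R$ yields identical risks under both. This is exactly why the notion of equivalence was made evaluation-relative and interface-typed. I expect the main work of the argument to be stating this typing restriction carefully enough that the simulation loophole is excluded without trivializing the statement.
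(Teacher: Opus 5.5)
Your route is the paper's. The inclusion uses the same degenerate embedding ($K=1$, $\sigma\equiv\delta_1$, $F\equiv\mathrm{Id}$, $\mathcal T$ restricted to identities), and strictness uses the same flip $R_{\ell_1}(h^a)<R_{\ell_1}(h^b)$, $R_{\ell_2}(h^a)>R_{\ell_2}(h^b)$. Your requirement that $\tau_1,\tau_2$ induce the same $D_\tau$ on $\mathcal Z$, with the regime index unobservable, is a real improvement. The paper's step ``impossible for a single fixed ordering induced by one evaluator'' silently needs it: a fixed $\ell^\star(h,z)$ that reads features of $z$ differing between $D_{\tau_1}$ and $D_{\tau_2}$ can order $h^a,h^b$ differently in the two instances. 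For the inclusion itself you need not verify (i)--(iv). $\mathcal A_{\mathrm{SG}}$ only requires representability by $\theta$ with typed operators, and tying membership to inherited stability or capacity guarantees would wrongly exclude classical learners that have none.

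The gap is in making the strictness witness SMGI-admissible. Core-equivalence compares risk orderings on the \emph{protected} core of obligation (iv). You secure (iv) by declaring that core to be the regime-independent fact that $h^c$ is ranked last. But then a classical system need only reproduce that fact, and the single-regime system with $\ell^\star=\ell_1$ (under your identical $D_\tau$) does so for both $\tau_1$ and $\tau_2$. So your configuration \emph{is} core-equivalent to a classical one. Adding that $(h^a,h^b)$ lies in ``the core-relevant ordering for each regime instance'' silently enlarges the core to a relation that is not invariant across $\tau_1,\tau_2$, which is exactly what the $h^c$ device was meant to avoid.

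No device can rescue a cross-$\tau$ reading of (iv). If the core ordering is the same for every $\tau$, the $z$-independent evaluator $\ell^\star(h,z):=c_h$, with scores $c_h$ realizing that ordering, is core-equivalent. The fix is to drop $h^c$ and use what Definition~\ref{def:smgi}(iv) actually displays: $\Pi_{\mathcal C}(\ell_{t+1})=\Pi_{\mathcal C}(\ell_t)$ along each trajectory generated under a fixed $\tau$. Since $\tau_j$ pins regime $j$ for the whole trajectory, this holds even for a core containing the $(h^a,h^b)$ comparison. (Your trivial state does not store the regime, so $\ell_t=\ell_j$ for all $t$.) The flip then appears only between $\tau_1$ and $\tau_2$, which is exactly what core-equivalence, quantified over $\tau$, detects. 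Switching regimes \emph{within} a trajectory would instead violate (iv), as in the proof of Theorem~\ref{thm:smgi-minimality}, so this is the only form the witness can take.
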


\begin{proof}
\emph{Inclusion.} Let $S\in\mathcal A_{\mathrm{classical}}$ be any fixed-interface learner with tuple $(r,\mathcal H,\ell)$ and no regime structure. Embed it into SMGI by choosing $K=1$ (so $\mathcal L=\{\ell\}$), taking the regime selector $\sigma$ to be the constant Dirac distribution on the unique regime, and taking the forgetting operator (if present) to be the identity. Restrict the admissible transformation class $\mathcal T$ to identities (or to the same classical task family). Under this embedding, the induced semantics coincides with the classical learner, hence $\mathcal A_{\mathrm{classical}}\subseteq\mathcal A_{\mathrm{SG}}$.

\emph{Strictness.} Consider an SMGI configuration with $K=2$ regimes and two hypotheses/policies $h^a,h^b$ such that
$R_{1}(h^a)<R_{1}(h^b)$ but $R_{2}(h^a)>R_{2}(h^b)$, where $R_k$ denotes risk under evaluator $\ell_k$.
Assume, for contradiction, that there exists a single-regime classical system with evaluator $\ell^\star$ that is core-equivalent to this $K=2$ configuration in the sense of the paper (i.e., it induces the same protected risk ordering on the core for all admissible regimes). Then $\ell^\star$ must rank $h^a$ better than $h^b$ (to match regime 1) and simultaneously rank $h^b$ better than $h^a$ (to match regime 2), which is impossible for a single fixed ordering induced by one evaluator. Hence no $K=1$ classical system can be core-equivalent to this $K=2$ SMGI instance, proving strictness.
\end{proof}

%%%%%%%%%%%%%%%%%%%%%%%%%%%%%%%%%%%%%%%%%%%%%%%%%%%%%%%%%%%%%%%%%%%%%%%%
\section{Formal Foundations}
\label{sec:formal-foundations}
%%%%%%%%%%%%%%%%%%%%%%%%%%%%%%%%%%%%%%%%%%%%%%%%%%%%%%%%%%%%%%%%%%%%%%%%

This section fixes the minimal mathematical substrate used by the main results.
It is intentionally compact: the goal is not to re-teach learning theory, but to
make every later claim well-typed, checkable, and non-programmatic.

\subsection{Measurable spaces, stochastic kernels, and trajectories}
\label{sec:ff-kernels}

Let $(\mathcal Z,\mathcal A)$ be the measurable interaction space (tokens, observations, transitions).
Let $(S_\theta,\mathcal S)$ be the measurable state space induced by $\theta$ (Sec.~\ref{sec:formalsetup}).

An induced (possibly stochastic) learning-and-interaction dynamics compatible with $\theta$ is a Markov kernel
\[
T_{\theta,\tau} : (S_\theta\times\mathcal Z,\ \mathcal S\otimes\mathcal A)\ \to\ (S_\theta,\ \mathcal S),
\]
where $\tau\in\mathcal T$ denotes an admissible task/interface transformation.
Trajectories $(s_t)_{t\ge 0}$ are generated by iterating $T_{\theta,\tau}$ on $(s_t,z_{t+1})$.

We write $\mathcal F_t=\sigma(s_0,z_1,\ldots,z_t)$ for the natural filtration.

\subsection{Metrics, non-expansiveness, and contraction}
\label{sec:ff-contraction}

Assume $(S_\theta,d_S)$ is a complete metric space and $(\mathcal M(\theta),d_M)$ is a complete metric subspace.
A map $G:(X,d)\to(X,d)$ is:
(i) \emph{non-expansive} if $d(Gx,Gy)\le d(x,y)$;
(ii) a \emph{contraction} if $d(Gx,Gy)\le \gamma d(x,y)$ for some $\gamma\in(0,1)$.

These notions are used as checkable sufficient conditions for invariance (closure) and stability.

\subsection{Lyapunov drift conditions for stateful learning}
\label{sec:ff-lyapunov}

A (Foster--Lyapunov) drift condition is specified by a measurable witness $V:S_\theta\to\mathbb R_+$ and constants
$\alpha\in(0,1]$, $\beta\ge 0$ such that for trajectories under $T_{\theta,\tau}$:
\[
\mathbb E\!\left[V(s_{t+1})\mid\mathcal F_t\right] \le (1-\alpha)\,V(s_t)+\beta.
\]
This implies uniform moment bounds on $V(s_t)$ and prevents uncontrolled drift under sequential adaptation.

\subsection{Sequential PAC-Bayes as a capacity module}
\label{sec:ff-seq-pacbayes}

\begin{assumption}[Sequential PAC-Bayes module (filtration-adapted)]
\label{ass:seq-pacbayes}
Fix an admissible transformation \(\tau\in\mathcal T\) and the induced filtration \(\mathcal F_t=\sigma(s_0,z_1,\ldots,z_t)\) along trajectories under \(T_{\theta,\tau}\).
Assume the instantaneous loss used in the bound is measurable, \(\mathcal F_t\)-adapted, and uniformly bounded (e.g., in \([0,1]\)).
Assume there exist prior/posterior processes \(P_t,Q_t\) over the effective hypothesis/update component that are \(\mathcal F_t\)-adapted and satisfy the technical conditions of a sequential/online PAC-Bayes inequality for dependent data streams (e.g., the online PAC-Bayes setting of \citet{haddouche2022onlinepacbayes} and/or recursive prior-update constructions such as \citet{wu2024recursivepacbayes}), yielding with probability at least \(1-\delta\) a bound of the form
\[
R_{\tau}(Q_n)\;\le\;\widehat R_{\tau,n}(Q_n)\;+\;\mathrm{Gen}_n(Q_n,P_n,\delta),
\]
where \(\mathrm{Gen}_n\) contains a KL-type complexity term (possibly cumulative/iterated in time) and is finite on admissible trajectories.
We use this assumption as a \emph{capacity module} for SMGI-(iii).
\end{assumption}

\section{Capacity Analysis: Structural Generalization and Meta-Complexity}
\label{sec:capacity}

This section isolates the \emph{meta-level} capacity question induced by structural evolution.
Classical capacity control (VC/Rademacher/SRM, PAC-Bayes, MDL) bounds generalization for learning dynamics that operate inside a \emph{fixed} interface.
In SMGI, admissible evolution may modify parts of the interface (representation, regime structure, and memory operators), so capacity control must apply to the \emph{structural configuration} itself.

\subsection{Structural capacity as interface complexity}
Fix a realization $(\theta,T_\theta)$ with $\theta=(r,\mathcal H,\Pi,\mathcal L,\mathcal E,\mathcal M)$ and an admissible invariant set $S_\theta^*$.
We define a structural complexity functional that decomposes across the typed components that may vary under admissible evolution:
\begin{equation}
\label{eq:structural-capacity}
\mathcal C(\theta)
\;:=\;
\mathcal C_r(r)
\;+\;
\mathcal C_{\mathcal H}\!\big(\mathcal H(\theta)\big)
\;+\;
\mathcal C_{\mathcal L}(\mathcal L)
\;+\;
\mathcal C_{\mathcal M}\!\big(\mathcal M(\theta)\big),
\end{equation}
where each term is instantiated by the chosen capacity formalism (e.g., VC/Rademacher for $\mathcal H(\theta)$, a switching/description cost for $\mathcal L$, and an MDL-style or algorithmic-complexity proxy for memory operators and state).

A minimal admissibility requirement for the capacity obligation (SMGI-(iii)) is that $\mathcal C(\theta)$ remain uniformly bounded on admissible configurations:
\begin{equation}
\label{eq:structural-capacity-bounded}
\sup_{s\in S_\theta^*}\ \mathcal C\!\big(\theta(s)\big)\;<\;\infty,
\end{equation}
where $\theta(s)$ denotes the effective structural configuration active at state $s$ (e.g., via state-conditioned regime selection and memory policies).

\noindent\emph{Computability note.}
When $\mathcal{C}_{\mathcal M}$ (or the MDL/program-length proxy underlying it) is interpreted in an algorithmic-information sense,
the exact Kolmogorov description length is in general non-computable; any future implementation must therefore rely on computable upper bounds
or practical MDL-style surrogates as \emph{strictly necessary} approximations.

\subsection{Description-length option (MDL / program-prior view)}
To make \eqref{eq:structural-capacity} checkable without committing to a single parametric notion,
one may adopt a typed description language for structural objects and use a \emph{computable} description-length proxy $|\theta|$
(e.g., an MDL codelength, a prefix-free code length, or any certified upper bound on a description complexity).
This yields the optional \emph{structural} simplicity principle (cf.\ Sec.~\ref{subsec:program-prior}):
\begin{equation}
\label{eq:structural-mdl}
\theta^*
\;=\;
\arg\min_{\theta}
\Big[
\widehat R(\theta)
\;+\;
\lambda\,|\theta|
\Big],
\end{equation}
where $|\theta|$ is a computable surrogate (not the exact Kolmogorov complexity, which is non-computable in general).
This is compatible with PAC-Bayes via the standard KL--code-length linkage (e.g., Eq.~\eqref{eq:kl-length}).
In this reading, regime plurality and memory operators increase expressivity, but only those expansions that remain complexity-controlled are admissible under SMGI-(iii).

\subsection{Link to Structural Risk and PAC-Bayes Guarantees}
\label{sec:capacity-bridge}

The capacity analysis developed in this section relies on the structural risk control and PAC-Bayesian meta-level guarantees established earlier in Section~\ref{sec:unifying-structural-guarantees} and the subsequent subsections.

We therefore do not re-derive these results here. Instead, we analyze their implications in terms of meta-complexity: namely, how maintaining bounded structural risk and valid PAC-Bayes control constrains the growth of representational, policy, and memory capacity under structural expansion of environment families.

%%%%%%%%%%%%%%%%%%%%%%%%%%%%%%%%%%%%%%%%%%%%%%%%%%%%%%%%%%%%%%%%%%%%%%%%
\section{Positioning within Existing Theoretical Frameworks}
\label{sec:positioning}
%%%%%%%%%%%%%%%%%%%%%%%%%%%%%%%%%%%%%%%%%%%%%%%%%%%%%%%%%%%%%%%%%%%%%%%%

The present structural meta-model is not proposed in isolation.
It must be situated relative to existing formal, architectural,
epistemic, and large-scale empirical approaches to general intelligence.

Rather than comparing systems at the level of benchmark performance,
we compare research programs at the level of the structural meta-models
they induce and the semantic constraints they impose.

\paragraph{Programs as restrictions of the structural--dynamical universe.}
Throughout this section, a research program $\mathfrak P$ is interpreted as inducing
a restricted class of well-typed structural meta-models
\[
\Theta(\mathfrak P)\subseteq \Theta,
\]
together with a restricted (set-valued) realization semantics
\[
\mathsf{Sem}_{\mathfrak P}:\Theta(\mathfrak P)\rightrightarrows \mathfrak D,
\qquad
\mathsf{Sem}_{\mathfrak P}(\theta)\subseteq \mathsf{Sem}(\theta)\subseteq \mathfrak D,
\]
where $\mathfrak D$ denotes the global class of admissible induced dynamics, and
$\mathsf{Sem}$ is the general realization semantics defined in Section~\ref{sec:formalsetup}.
Equivalently, $\mathfrak P$ induces the restricted semantic graph
\[
\mathrm{Graph}(\mathsf{Sem}_{\mathfrak P})
\;:=\;
\{(\theta,T)\in \Theta(\mathfrak P)\times \mathfrak D:\; T\in \mathsf{Sem}_{\mathfrak P}(\theta)\},
\]
which satisfies
\[
\mathrm{Graph}(\mathsf{Sem}_{\mathfrak P})\subseteq \mathrm{Graph}(\mathsf{Sem}).
\]
Thus, program-level realizations are treated as restrictions of the general
structural--dynamical semantics.

\paragraph{Methodological consequence: a non-rhetorical comparison criterion.}
We never claim that a research program $\mathfrak P$ ``satisfies SMGI''.
Instead, comparative analysis reduces to a single testable criterion:
whether $\mathfrak P$ admits at least one realization satisfying the SMGI obligations, i.e.,
\[
\mathrm{Graph}(\mathsf{Sem}_{\mathfrak P}) \cap \mathrm{SMGI} \neq \varnothing.
\]
This separation prevents conflating capability, representation, and stability,
and forces dynamical and evaluative guarantees to be stated explicitly rather than
inferred from scale or empirical performance. The framework thus converts
heterogeneous AGI paradigms into a unified admissibility problem over the shared
structural space $\Theta$, replacing narrative comparison with a mathematically
checkable inclusion/intersection condition.

\subsection{Why Positioning is Needed: Fragmented Axes and No Joint Object}

The contemporary AGI landscape is characterized by a proliferation of
highly successful but axis-specific theories.
Statistical learning theory formalizes generalization under sampling assumptions;
reinforcement learning formalizes sequential decision-making;
algorithmic information theory formalizes universal prediction;
meta-learning addresses cross-task adaptation;
causal and epistemic logics formalize structured reasoning;
large-scale neural architectures unify heterogeneous modalities.

However, these frameworks do not specify a single mathematical object
that jointly coordinates representation, hypothesis evolution,
update operators, memory stratification, and evaluation constraints.
Each axis is formalized in isolation.
What remains under-specified is the level at which these axes must be
coordinated inside a single persistent system that remains well-formed
under admissible transformations.

This motivates an explicit positioning:
not to rank programs by ``AGI-ness,''
but to articulate the structural obligations required
for their mechanisms to coexist within a unified meta-model.

\subsection{From Fragmented Theories to a Meta-Model: The SMGI View}

The SMGI framework provides a unified language to analyze the structural restrictions of current paradigms. Positioning a research program $\mathfrak P$ within this axis consists of identifying which components of the structural interface are held fixed (degenerate cases, i.e., admissible transformations act as the identity on those components) and which are allowed to evolve under certified updates.

Formally, $\mathfrak P$ induces a restricted structural subclass $\Theta(\mathfrak P)\subseteq\Theta$ together with a restricted realization semantics $\mathsf{Sem}_{\mathfrak P}$, hence a restricted semantic graph $\mathrm{Graph}(\mathsf{Sem}_{\mathfrak P})\subseteq \Theta(\mathfrak P)\times \mathfrak D$. Comparisons are then formulated parsimoniously by testing the non-emptiness of the intersection between program realizations and the SMGI obligations:
\begin{equation}
\mathrm{Graph}(\mathsf{Sem}_{\mathfrak P}) \cap \mathrm{SMGI} \neq \varnothing.
\label{eq:inclusion-check}
\end{equation}

As will be detailed in Section~\ref{sec:positioning}, classical ERM is recovered as a special case where the representational interface and evaluator are fixed, while standard RL typically fixes the evaluative structure to a single reward functional; SMGI highlights that stability and invariance properties under task/interface transformations are not guaranteed by these formalisms unless made explicit as obligations on the coupled dynamics.

%%%%%%%%%%%%%%%%%%%%%%%%%%%%%%%%%%%%%%%%%%%%%%%%%%%%%%%%%%%%%%%%%%%%%%%%
% Comparative Table
%%%%%%%%%%%%%%%%%%%%%%%%%%%%%%%%%%%%%%%%%%%%%%%%%%%%%%%%%%%%%%%%%%%%%%%%

\subsection{Comparative Structural Overview}
\label{sec:comparative-frameworks}

To clarify the structural contribution of the present work,
we summarize how major theoretical frameworks relate to key
properties of a unified AGI meta-model in Table~\ref{tab:framework-positioning}.
We deliberately compare heterogeneous paradigms (theories, architectures, and cognitive hypotheses) only along a single axis: whether the paradigm makes the SMGI-relevant structural objects and obligations explicit (as first-class components and constraints), rather than treating them as implicit assumptions or external modules.
\begin{table}[ht]
\centering
\small
\begin{tabular}{|p{3.2cm}|p{3.2cm}|p{3.5cm}|p{4.5cm}|}
\hline
\textbf{Framework} & \textbf{Core Structural Assumption} & \textbf{Strengths} & \textbf{Structural Limitations (relative to unified AGI meta-model)} \\
\hline

Statistical Learning Theory (PAC / VC / Rademacher)\citep{vapnik1998,bousquet2002}
&
Fixed hypothesis class under i.i.d. sampling
&
Finite-sample guarantees; capacity control; generalization theory
&
Not first-class objects in the canonical i.i.d.\ SLT core: sequential composition of tasks/regimes, persistent memory dynamics, and explicit invariance obligations across typed transformations (these require an extension of the formal object beyond a fixed $(\mathcal H,\ell)$ under i.i.d.\ sampling).
\\
\hline

Solomonoff Induction / AIXI\citep{solomonoff1964,hutter2005}
&
Universal Bayesian mixture over programs
&
Asymptotic optimality under computability assumptions; universal prediction framework
&
Non-computable in its standard form; treats memory implicitly as part of the program/state without an explicit write/consolidate/forget operator interface; does not provide operator-level stability/admissibility obligations for certified structural evolution.
\\
\hline

Reinforcement Learning\citep{suttonbarto2018}
&
Markov decision processes with reward maximization
&
Sequential decision guarantees; regret bounds; convergence in stationary environments
&
Canonical RL formalizes sequential decision-making given fixed state/action semantics and a reward functional; even when enriched with partial observability (POMDPs/belief-state planning) and temporal abstraction (options/hierarchical RL), the core object typically keeps the interface and evaluative regime fixed, so cross-task/interface transformations, persistent multi-system memory operators (write/consolidate/forget), and invariance/certification obligations must be introduced explicitly beyond the base MDP/POMDP formalism.
%Canonical RL formalizes sequential decision-making given fixed state/action semantics and a reward functional; even with POMDP/belief-state and options/HRL extensions, cross-task/interface transformations, persistent memory operators, and invariance/certification obligations are not first-class in the standard formal object and must be added explicitly.
\\
\hline

(standard) Meta-learning (Learning-to-Learn)\citep{finn2017maml,nichol2018reptile}
&
Task-distribution adaptation through shared meta-parameters
&
Rapid adaptation across related tasks; few-shot generalization
&
Typically optimizes shared meta-parameters over a task distribution; persistent memory decomposition (with explicit write/consolidate/forget operators) and invariance obligations under heterogeneous typed transformations are not part of the canonical meta-learning object unless added explicitly.
\\
\hline

Transformer-based Architectures\citep{vaswani2017attention}
&
Fixed parametric attention-based function class
&
High expressive capacity; contextual reasoning; scalable sequence modeling
&
Standard transformer models fix a parametric sequence-to-sequence class; explicit persistent memory operators and certified stability/invariance under continual cross-task adaptation are external to the core architecture unless specified as part of the training/evaluation/control interface.
\\
\hline

Retrieval-Augmented Systems (RAG)\citep{lewis2020rag}
&
Parametric model coupled with external retrieval mechanism
&
Improved factual recall; scalable knowledge access; modular memory augmentation
&
Adds retrieval as an external interface; the write/consolidate/forget semantics of long-term memory and the stability/invariance of behavior under retrieval-policy drift are typically not formalized as certified obligations in the standard RAG formulation.
\\
\hline

Bayesian Brain Hypothesis\citep{knill2004bayesian,friston2010}
&
Probabilistic inference as neural computation
&
Normative probabilistic account of perception and cognition
&
No formal operator-based memory decomposition; lacks statistical generalization and dynamical stability guarantees in artificial systems
\\
\hline

Neuroscientific / Cognitive Multi-System Memory Models\citep{mcclelland1995complementary,tulving1985episodic}
&
Empirically motivated memory subsystems (episodic, semantic, procedural, etc.)
&
Biological plausibility; functional differentiation; empirical grounding
&
Primarily descriptive; no formal learning-theoretic guarantees or unified dynamical stability framework
\\
\hline

\textbf{Proposed Structured AGI Meta-Model}\citep{omgmof2019,omguml17}
&
Operator-based structural decomposition integrated in the learning tuple $\theta=(r,\mathcal H,\Pi,\mathcal L,\mathcal E,\mathcal M)$
&
Unified structural specification of representation, memory, evaluation, and admissible transformation classes
&
Stability, statistical control, and evaluative invariance are formulated at the level of realizations $(\theta, T_\theta)$; empirical validation at scale remains ongoing
\\
\hline

\end{tabular}

\caption{Comparative structural positioning of major theoretical frameworks relevant to AGI.
Each framework formalizes a principled axis of learning or intelligence,
but typically leaves implicit the joint structural constraints required
for a persistent, transformable learning system.}
\label{tab:framework-positioning}

\end{table}

\paragraph{Reading guide.}
Table~\ref{tab:framework-positioning} compares formalisms by whether they make SMGI-relevant objects explicit; Table~\ref{tab:smgi-program-map} then summarizes which SMGI obligations are structurally explicit in major AGI program classes. In both cases, the goal is diagnostic (explicitness), not a claim of satisfaction.

%%%%%%%%%%%%%%%%%%%%%%%%%%%%%%%%%%%%%%%%%%%%%%%%%%%%%%%%%%%%%%%%%%%%%%%%
% Engineering Meta-Model Table
%%%%%%%%%%%%%%%%%%%%%%%%%%%%%%%%%%%%%%%%%%%%%%%%%%%%%%%%%%%%%%%%%%%%%%%%

\subsection{Methodological Lineage of Meta-Models}
\label{sec:methodological-lineage}

A methodological precedent for meta-structural formalization comes from the formal
sciences of software and systems design.
In model-driven engineering, a \emph{meta-model} specifies (i) the class of \emph{well-formed}
models (via a conformance relation), (ii) the admissible \emph{transformations} between them,
and (iii) the \emph{invariants} and constraints that must be preserved under such transformations
\citep{omgmof2019,omguml17}.
This is not an analogy for style: it is a mature methodology for making \emph{admissibility},
\emph{closure under transformation}, and \emph{invariant constraints} explicit at the level of
system definition, rather than leaving them implicit in implementations.

Concretely, MOF makes explicit the meta-level at which a modeling language is defined,
and UML specifies a structured language of objects, relations, inheritance, and behavioral
interfaces; in both cases, correctness is ensured by constraints that apply to \emph{families}
of models rather than only to individual instances \citep{omgmof2019,omguml17}.
Object-oriented foundations similarly show that inheritance, refinement, and subtyping are
formal constraints governing admissible transformations while preserving type-level invariants
\citep{meyer1988oosc,booch1994ooad}.

In the present work, SMGI adopts the same methodological stance in a strictly formal sense. The claim is methodological: we import the discipline of explicit well-formedness, admissible transformations, and invariant constraints, without assuming that learning systems inherit the determinism or completeness typical of engineered meta-models.
A structural meta-model $\theta\in\Theta$ defines an admissible class of learning-system
descriptions, while its semantics $\mathsf{Sem}(\theta)$ fixes which induced learning
dynamics and task/regime transformations are meaningful.
In this perspective, ``well-formedness'' corresponds to semantic admissibility
(i.e., membership in $\mathrm{Graph}(\mathsf{Sem})$), and ``model transformation''
corresponds to task/environment operators acting on the semantic side while respecting
the invariants required by SMGI.

\begin{table}[ht]
\centering
\small
\begin{tabular}{|l|l|l|}
\hline
\textbf{Structural Concept} & \textbf{Engineering Principle} & \textbf{AGI Meta-Model Analogue} \\
\hline
Inheritance / refinement & Structural inclusion & Submodel / restriction theorems \\
\hline
Interface contracts & Invariant constraints & Normative/evaluative invariant core \\
\hline
Meta-model & Defines admissible models & $\theta=(r,\mathcal H,\Pi,\mathcal L,\mathcal E,\mathcal M)$ \\
\hline
Model conformance & Well-formedness predicate & Semantic admissibility: $(\theta,T)\in \mathrm{Graph}(\mathsf{Sem})$ \\
\hline
Model transformation & Closure under refinement & Task/environment transformation operators \\
\hline
Well-formedness rules & Constraint enforcement & Stability and capacity admissibility constraints \\
\hline
\end{tabular}
\caption{Methodological precedent from engineering meta-models: explicit admissibility, transformation, and invariants.}
\label{tab:eng-smgi-analogy}
\end{table}

SMGI therefore does not propose a single architecture.
It makes explicit a meta-level \emph{admissibility space} together with invariants that must be preserved
under task and regime transformations, so that heterogeneous mechanisms can be composed inside a single
persistent learning system.

\paragraph{Stratification Levels and Meta-Modeling Hierarchy.}

Model-driven engineering further distinguishes multiple levels of abstraction
(M0–M3) in which instances, models, meta-models, and meta-meta-models
are formally separated \citep{omgmof2019}.
A similar stratification clarifies the present framework.

Concrete learning trajectories and executions correspond to instance-level realizations (analogous to M0).
Specific implemented agent architectures correspond to model-level descriptions (M1).
The structural tuple $\theta\in\Theta$ plays the role of a meta-model (M2),
defining the admissible structural language of learning systems.
Finally, the theory specifying $\Theta$ together with its realization semantics
$\mathsf{Sem}$ plays a meta-meta role (analogous to M3), defining the admissibility
space in which structural meta-models themselves are well-formed.

This stratification is not an analogy for exposition; it ensures that structural
definitions, admissibility constraints, and transformation laws are specified
at the appropriate abstraction level rather than conflated with implementation details.

%%%%%%%%%%%%%%%%%%%%%%%%%%%%%%%%%%%%%%%%%%%%%%%%%%%%%%%%%%%%%%%%%%%%%%%%
\subsection{Meta-Logical and Symbolic Foundations for Structural Admissibility}
\label{sec:meta-logical-foundations}
%%%%%%%%%%%%%%%%%%%%%%%%%%%%%%%%%%%%%%%%%%%%%%%%%%%%%%%%%%%%%%%%%%%%%%%%

The methodological stance of SMGI (and its M0--M3 stratification) is not merely expository.
It is motivated by constraints that arise when a persistent learning system must
(i) revise parts of its own learning interface, while (ii) preserving certified invariants under
task and regime transformations. We summarize four complementary foundations that justify treating
typed structure, admissible transformations, and invariant-preserving updates as first-class objects.

\paragraph{(I) Meta-logical necessity: typed ascent for specification and verification.}
A classical limitation of self-referential systems is that semantic notions such as truth, consistency,
or correctness of an object-language cannot, in general, be fully expressed and validated within that
same language without collapsing levels \citep{tarski1936concept}.
In learning systems, the risk is not a literal paradox but an \emph{untyped collapse}:
attempting to revise the evaluator $\ell$ or the admissible update rules using only object-level operators
can make it unclear \emph{which} obligations are being preserved, \emph{relative to what} semantics,
and \emph{under which} admissibility constraints.
SMGI addresses this by treating the structural interface $\theta$ as a typed meta-description for the induced dynamics $T_\theta$,
so that interface revisions are represented as elements of a typed transformation class and subjected to explicit certificate/monitor
conditions (cf.\ Sec.~\ref{sec:reflective-equilibrium}).

\paragraph{(II) Systemic necessity: hierarchical decomposability and evolvability.}
Complex adaptive systems evolve efficiently when organized around stable intermediate subsystems and nearly-decomposable interfaces
\citep{simon1962architecture}.
In SMGI, the components $(r,\mathcal H,\Pi,\mathcal L,\mathcal E,\mathcal M)$ play the role of typed sub-assemblies:
they specify \emph{what may vary}, \emph{what must remain invariant}, and \emph{what must be checked} during structural evolution.
This targets the failure mode of non-stratified approaches where behavioral adaptation is conflated with admissible change of the interface itself:
SMGI makes evolvability a property of certified transformations on $(\theta,T_\theta)$ rather than an informal expectation from scale.

\paragraph{(III) Structural subsumption: degeneracy as restricting typed degrees of freedom.}
The positioning criterion of Sec.~\ref{sec:positioning} can be sharpened by interpreting many classical paradigms as \emph{restricted}
instances obtained by freezing (or restricting) specific structural degrees of freedom.
A compact formal vocabulary for such restrictions is provided by category-theoretic \emph{forgetful} mappings that preserve only part
of the available structure \citep{maclane1971categories}.
We emphasize that this is an \emph{optional} formal viewpoint: it is used to express, concisely, that a program class corresponds to a
restricted realization semantics that keeps selected components fixed. For example, canonical RL can be viewed as restricting the evaluator
to a scalar reward and treating representational plasticity as exogenous:
\begin{equation}
    U(\theta_{\mathrm{SMGI}}) = (h, \mathcal L_{\mathrm{fixed}}) \in \mathbf{P}_{\mathrm{RL}},
\qquad \mathcal L_{\mathrm{fixed}}=\{\ell_{\mathrm{fixed}}\}.
\end{equation}
In this sense, ``degenerate'' means ``structurally constrained'' rather than inferior: it expresses that SMGI measures expressiveness at the level
of admissible structural transformations, not only at the level of parameterized function classes.

\paragraph{(IV) Symbolic and computational grounding: typed hypotheses, explicit memory, and contracts.}
The explicit typing of $\theta$ provides clean interfaces to symbolic and neuro-symbolic instantiations, without committing SMGI to any one substrate.
First, $\mathcal H(\theta)$ can be instantiated as a space of relational or programmatic hypotheses, connecting structural generalization to ILP-style
\emph{structural synthesis} \citep{muggleton1991inductive,cropper2022ilp,dumancic2024learning}.
Second, $\mathcal M(\theta)$ admits realizations as queryable structured memory systems, including knowledge-graph or description-logic stores
\citep{baader2003description,pan2024llmkg}, which makes write/consolidate/forget operators explicit objects rather than implicit side effects.
Third, the MDE/PLT stance treats $\theta$ as a specification layer and $T_\theta$ as an implementation layer:
admissible updates correspond to contract-respecting transformations, supported by model conformance and type-safety principles
\citep{omgmof2019,meyer1988oosc,pierce2002types}. This aligns with verification-guided agent control and guardrail enforcement
\citep{corsi2024verificationguided,dong2024guardrails}, while internalizing the relevant objects (evaluators, invariants, admissible transformations)
directly into the typed meta-model.

\paragraph{Synthesis: structural continuity under interface change.}
Finally, certified evolution must control the transition itself: interface updates $\theta\to\theta'$ should preserve a minimal certified core during the
transition, to avoid intervals where monitors or invariants are undefined. This motivates modeling interface evolution with explicit continuity constraints,
in the spirit of correctness conditions studied for dynamic software updating \citep{hicks2005dynamic}.
In SMGI terms, such constraints are expressed as admissibility conditions on typed meta-transformations, ensuring that certified invariants remain meaningful
along the evolution of the coupled dynamics $(\theta,T_\theta)$ and enabling subsequent stability analysis.

%%%%%%%%%%%%%%%%%%%%%%%%%%%%%%%%%%%%%%%%%%%%%%%%%%%%%%%%%%%%%%%%%%%%%%%%

\subsection{SMGI Instantiation Map}
\label{sec:smgi-instantiation-map}

To relate SMGI to contemporary AGI programs without over-claim,
we adopt a conservative structural inclusion schema.

A research program $\mathfrak P$ determines a class of admissible
implementations and therefore induces a subset
\[
\Theta(\mathfrak P) \subseteq \Theta
\]
through an interpretation map
\[
\Phi_{\mathfrak P} : \mathrm{Impl}(\mathfrak P) \longrightarrow \Theta,
\]
where $\mathrm{Impl}(\mathfrak P)$ denotes the class of concrete instantiations
(architectures together with training and evaluation specifications)
advocated by $\mathfrak P$.
Importantly, $\Phi_{\mathfrak P}$ is not assumed canonical: different implementations advocated by the same program may induce different $\theta$ (e.g., via different memory operators, evaluators, or tool interfaces), hence different admissibility properties.
The map identifies representational operators, update rules,
memory mechanisms, and evaluative structures as components of
\[
\theta=(r,\mathcal H,\Pi,\mathcal L,\mathcal E,\mathcal M).
\]

\paragraph{Inclusion as a conditional structural statement.}

We do not assert that a program $\mathfrak P$ solves SMGI.
Instead, inclusion is formulated as a condition of possibility at the level of realizations:
\[
\mathrm{Graph}(\mathsf{Sem}_{\mathfrak P})
\cap
\mathrm{SMGI}
\neq \varnothing,
\]
that is, there exist structurally admissible realizations
$(\theta,T_\theta)$ compatible with the SMGI obligations.
Equivalently, we make explicit the additional obligations under which a particular
$\theta\in\Theta(\mathfrak P)$ together with some induced
$T_\theta\in\mathsf{Sem}(\theta)\subseteq\mathfrak D$
would qualify as an SMGI instance.

\paragraph{A diagnostic structural map.}

Table~\ref{tab:smgi-program-map} summarizes which SMGI requirements
are explicitly made formal within major AGI program classes.
Marked entries indicate that a requirement is structurally
made explicit in the program's formal narrative or mechanism class
(including, for (ii), stability properties stated at the level of induced learning dynamics).
This does not constitute a claim of completeness or satisfaction.

\begin{table}[t]
\centering
\small
\begin{tabular}{p{3.7cm}ccccp{5.0cm}}
\toprule
Program class & (i) Closure & (ii) Stability & (iii) Capacity & (iv) Eval.\ inv. & Canonical structural emphasis \\
\midrule
World Models / latent simulators \citep{ha2018worldmodels,hafner2023dreamerv3}
& \(\circ\) & \(\circ\) & \(\circ\) & \(\times\) &
Latent predictive dynamics \(m_{t+1}=F(m_t,a_t)\) supporting planning \\

JEPA / World-Model Hypothesis \citep{lecun2022path}
& \(\circ\) & \(\circ\) & \(\circ\) & \(\times\) &
Predictive latent representations and invariances for autonomy \\

System-2 / Consciousness prior \citep{bengio2017consciousness,bengio2019system2}
& \(\circ\) & \(\times\) & \(\circ\) & \(\circ\) &
Sparse causal abstractions; deliberative control and constraint-like structure \\

Gödel machines / self-improving systems \citep{schmidhuber2006godel,schmidhuber2010formal}
& \(\circ\) & \(\circ\) & \(\times\) & \(\circ\) &
Self-modification under proof-based improvement criteria \\

Generalist agents (Transformer agents) \citep{reed2022gato,silver2021reward}
& \(\circ\) & \(\times\) & \(\circ\) & \(\times\) &
Unified sequence modeling across modalities and action spaces \\

Causal modeling / Ladder of Causation \citep{pearl2009}
& \(\circ\) & \(\times\) & \(\circ\) & \(\circ\) &
Interventions/counterfactual structure constraining representation and evaluation \\

Value alignment under uncertainty \citep{russell2019human}
& \(\times\) & \(\times\) & \(\times\) & \(\circ\) &
Objectives as uncertain; normative constraints as central objects \\

Theory of Mind agents (social/epistemic modeling) \citep{mccarthyhayes1969,fagin1995reasoning,wooldridge2000reasoning}
& \(\circ\) & \(\times\) & \(\circ\) & \(\circ\) &
Explicit modeling of agents' beliefs, knowledge, and intentions within $\mathcal E$ and $r$ \\

\bottomrule
\end{tabular}
\caption{Program classes as structural emphases relative to the SMGI requirements.
Symbols indicate whether a requirement is made explicit within the program’s formal structure.
This does not constitute a claim that any concrete system satisfies SMGI.}
\label{tab:smgi-program-map}
\end{table}

The map should be interpreted conservatively.
Different programs make different structural dimensions explicit:
some emphasize latent predictive dynamics,
others abstraction and causal structure,
others self-modification or value uncertainty.
SMGI functions here as a unifying structural criterion
that makes explicit which additional obligations would be required
for these mechanisms to coexist within a single persistent learning system.

\paragraph{Formal interpretation of table annotations.}
For a program $\mathfrak P$ and requirement 
$k \in \{(i),(ii),(iii),(iv)\}$,
we define the explicitness predicate
\[
\mathrm{Exp}_{\mathfrak P}(k)
\]
to mean that requirement $k$ is represented as an explicit formal object,
constraint, admissibility condition, or structural invariant
within the program’s specification.

The symbols in Table~\ref{tab:smgi-program-map} are interpreted as:
\[
\circ \;\Longleftrightarrow\; \mathrm{Exp}_{\mathfrak P}(k),
\qquad
\times \;\Longleftrightarrow\; \neg \mathrm{Exp}_{\mathfrak P}(k).
\]

No global satisfaction claim is implied by this notation.
The predicate concerns structural explicitness only.

%%%%%%%%%%%%%%%%%%%%%%%%%%%%%%%%%%%%%%%%%%%%%%%%%%%%%%%%%%%%%%%%%%%%%%%%

\subsection{Structural Embeddings of Major AGI Paradigms}

We now refine the analysis via structured embeddings.

%%%%%%%%%%%%%%%%%%%%%%%%%%%%%%%%%%%%%%%%%%%%%%%%%%%%%%%%%%%%%%%%%%%%%%%%
\subsubsection{A. Predictive and Representational Paradigms}
%%%%%%%%%%%%%%%%%%%%%%%%%%%%%%%%%%%%%%%%%%%%%%%%%%%%%%%%%%%%%%%%%%%%%%%%

\paragraph{World Models and Latent Predictive Dynamics.}

World-model agents \citep{ha2018worldmodels,hafner2023dreamerv3,lecun2022path}
factor learning into representation, latent transition, and policy.
Under $\Phi_{\mathrm{WM}}$,
$r$ is instantiated as an encoder,
$\mathcal M$ contains predictive latent state,
$\Pi$ includes transition and policy updates.

\begin{lemma}[Conditional SMGI embedding of latent world-model agents]
\label{lem:wm-smgi}
Let $\theta\in\Theta(\mathrm{WM})$ be induced by a world-model program.
If admissible task transformations preserve a nonempty invariant manifold $S^*$,
the induced dynamics is stable on $S^*$,
capacity control holds for the hypothesis/update classes,
and regime changes preserve an invariant evaluative core,
then $(\theta,T_\theta)$ satisfies SMGI structural obligations.
\end{lemma}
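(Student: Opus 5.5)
The argument is a direct verification against Definition~\ref{def:smgi}. The hypotheses of Lemma~\ref{lem:wm-smgi} are essentially the four obligations specialised to the world-model instantiation, so the only real work is to check that the objects supplied by $\Phi_{\mathrm{WM}}$ have the types the definition demands. That means an invariant set inside $S_\theta$, a transformation class acting on $\mathcal E$, a witness $V:S_\theta\to\mathbb R_+$, a complexity functional on $\mathcal H(s)$, and a core $\mathcal C\subseteq\mathcal L$. Each hypothesis can then be read off as the corresponding clause (i)--(iv).

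\textbf{Step 1 (typing).} I would write the induced state as $s_t=(r_t,h_t,\pi_t,m_t)\in S_\theta$. Here $r_t$ is the encoder, $m_t$ is the predictive latent state updated by $m_{t+1}=F(m_t,a_t)$, and $(h_t,\pi_t)$ are the transition and policy components governed by $\Pi$. The chosen $T_\theta\in\mathsf{Sem}(\theta)$ composes encoding, latent transition, and policy/model update into a Markov kernel $S_\theta\times\mathcal Z\to\Delta(S_\theta)$, as in Sec.~\ref{sec:ff-kernels}. For each $\tau\in\mathcal T$, the operator $T_{\theta,\tau}$ is the same kernel driven by the transformed environment.

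\textbf{Step 2 (closure).} I would set $S^*_\theta:=S^*$, which is nonempty by hypothesis. The ``invariant manifold'' hypothesis gives exactly $T_{\theta,\tau}(S^*)\subseteq S^*$ for all $\tau\in\mathcal T$, which is clause (i).

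\textbf{Step 3 (stability).} I would take $V$ from the stability hypothesis. If that hypothesis is stated as a Foster--Lyapunov drift condition (U4) rather than as a uniform moment bound, I would iterate the drift inequality. Taking expectations gives $\E[V(s_t)]\le(1-\alpha)^tV(s_0)+\beta/\alpha$, exactly as in the proof of Theorem~\ref{thm:unified-structural-guarantee}. This yields clause (ii) with $B=\max(1,\beta/\alpha)$.

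\textbf{Step 4 (capacity and evaluative invariance).} I would instantiate $\mathrm{Comp}$ as the given capacity functional on the transition and policy classes, for example the decomposition \eqref{eq:structural-capacity} with $\mathcal C_{\mathcal M}$ accounting for the latent memory. The hypothesis is uniform control over $S^*$, which is clause (iii). For clause (iv), I would take $\mathcal C$ to be the invariant evaluative core from the hypothesis; its preservation under regime changes gives $\Pi_{\mathcal C}(\ell_{t+1})=\Pi_{\mathcal C}(\ell_t)$ along trajectories in $S^*$.

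\textbf{Main obstacle.} The hard part is the coupling between Steps 2--4, not any single step: the witnesses must be simultaneous and uniform in $\tau$. Definition~\ref{def:smgi} requires a single $V$, a single pair of constants $B,\lambda$, a single $S^*_\theta$, and a single core $\mathcal C$ that work for every $\tau\in\mathcal T$. The hypotheses could instead be read per-$\tau$. I would resolve this by reading the lemma's hypotheses uniformly in $\tau$, stating that reading explicitly. If only per-$\tau$ witnesses were available, I would try intersecting the invariant sets, taking $V=\sup_\tau V_\tau$ with $B=\sup_\tau B_\tau$, and adding a finiteness or boundedness assumption on $\mathcal T$ such as Definition~\ref{def:admissible_transformations} to keep these suprema finite. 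A secondary check is that the evaluator $\ell$ is never rewritten by the world-model update, so that the core is not altered implicitly through $m_t$.
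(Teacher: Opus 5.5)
Your proposal is correct and follows the paper's own proof (Appendix~\ref{app:wm-smgi-proof}). That proof matches each hypothesis directly to clauses (i)--(iv) of Definition~\ref{def:smgi}, iterates the drift inequality for (ii), and settles uniformity as you do, by reading every witness as holding for all $\tau$ in the $\theta$-restricted class $\mathcal T_\theta$. Drop your per-$\tau$ fallback, though: $T_{\theta,\tau}$ need not preserve $S^*_{\tau'}$ or control $V_{\tau'}$ for $\tau'\neq\tau$, so neither $\bigcap_\tau S^*_\tau$ nor $\sup_\tau V_\tau$ is a valid common witness in general.
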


\noindent\emph{Proof (idea).}
Closure corresponds to invariance of $S^*$;
stability to bounded evolution under $T_\theta$;
capacity control to uniform generalization bounds;
evaluative invariance to persistence of protected constraint subspaces.
A complete proof depends on the chosen stability and capacity formalism
(e.g., PAC-Bayes + stability).

%% PATCH B (inserted) ----------------------------------------------------------
\begin{theorem}[From witnesses to obligations (world-model instantiation)]\label{thm:wm-witnesses}
Assume a world-model induced $\theta\in\Theta(\mathrm{WM})$ admits
(i) a typed transformation class $\mathcal T$ preserving a nonempty invariant manifold $S^*$,
(ii) a concrete stability witness on $S^*$ for the induced dynamics $T_\theta$,
(iii) a concrete capacity witness for the joint hypothesis/update class as defined below,
and (iv) an invariant evaluative core $\Phi$ preserved under regime switching.
Then the four obligations (closure, stability, capacity, evaluative invariance) hold for $(\theta,T_\theta)$
in the minimal checkable sense of Definition~\ref{def:admissibility-bundle}.
\end{theorem}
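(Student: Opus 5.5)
The plan is to verify items (U1)--(U5) of Definition~\ref{def:admissibility-bundle} one at a time, each from the corresponding witness (i)--(iv) in the hypothesis, together with the world-model typing supplied by $\Phi_{\mathrm{WM}}$. That typing makes $r$ an encoder, $\mathcal M$ the predictive latent state, and $\Pi$ the transition/policy update. The proof is a matching argument, as in Lemma~\ref{lem:wm-smgi}. The only substantive work is to check that each witness has the exact quantitative form the bundle requires, and to fix constants $\epsilon_{\max},L_\ell,c_V$ and one witness $V$ that serve uniformly over $\tau\in\mathcal T$.

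\textbf{(U1) typed closure.} I take $S^*_\theta:=S^*$ from hypothesis (i). For each $\tau\in\mathcal T$ I take $T_{\theta,\tau}$ to be the induced world-model dynamics: encoder, then latent transition $m_{t+1}=F(m_t,a_t)$, then policy update. Invariance of $S^*$ gives $T_{\theta,\tau}(S^*)\subseteq S^*$. Well-typedness of $r,\mathcal H(\theta),\mathcal M(\theta)$ and the evaluator interface under $\tau$ is exactly what ``typed transformation class'' asserts.

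\textbf{(U2)--(U3) bounded shift.} I will read ``typed'' in the topologically admissible sense of Definition~\ref{def:admissible_transformations}. That definition supplies $\epsilon_{\max}$ directly, which gives (U2). For (U3) I use the invariant core $\Phi$ of hypothesis (iv). On the protected core the projected evaluator is unchanged, so the shift $\ell_\tau-\ell$ is nonzero only through the regime-weighted non-core part. I will assume, as part of the witness, that this part is Lipschitz in $D_{\mathcal E}$, and set $L_\ell$ to that constant. If no such constant is available, I fall back on bounded losses in $[0,1]$ and take $L_\ell=1/\epsilon_{\max}$. This makes (U3) hold, though the term is vacuous.

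\textbf{(U4) stability.} This is where the main obstacle lies. Hypothesis (ii) gives only ``a concrete stability witness'', and Definition~\ref{def:smgi}(ii) allows the weaker form $\sup_t\E V(s_t)\le BV(s_0)+B$ or a contraction certificate. The bundle, by contrast, demands a one-step Foster--Lyapunov drift. My first attempt is the contraction case. If $T_{\theta,\tau}$ is a $\gamma$-contraction on $(S^*,d_S)$ with fixed point $s^\star$ (via Sec.~\ref{sec:ff-contraction}), set $V(s)=d_S(s,s^\star)$. Then $V(s_{t+1})\le\gamma V(s_t)+\sup_z d_S(T(s^\star,z),s^\star)$, which is the drift form with $\alpha=1-\gamma$ and $\beta$ the latter supremum. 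For the supermartingale/ISS variants I would similarly extract $(\alpha,\beta)$. I will state explicitly that the witness is \emph{required} to be of drift or contraction type, since a bare non-explosion bound does not imply one-step drift. I will also take the maximum of $\beta$ and the minimum of $\alpha$ over $\tau$, which requires the witness to be uniform in $\tau$.

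\textbf{(U5) capacity.} I use the joint hypothesis/update capacity witness of hypothesis (iii). Treating the encoder, transition and policy parameters as one effective component, I place adapted priors/posteriors $(P_t,Q_t)$ on it. Assumption~\ref{ass:seq-pacbayes} then yields a finite $\mathrm{Gen}_n$, and finiteness holds because the capacity is bounded uniformly on $S^*$. With (U1)--(U5) established, the four obligations hold in the checkable sense. As a corollary, Theorem~\ref{thm:unified-structural-guarantee} then applies.
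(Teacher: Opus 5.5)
Your route differs from the paper's, and as written it does not prove the stated theorem.

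The paper's proof treats each hypothesis as, by definition, the corresponding obligation. This is the witness-level reading of ``minimal checkable'' sketched in the introduction: a declared typed class $\mathcal T$ with invariant $S^*$, a stability certificate, a bounded complexity functional, and a protected core $\Phi$. The proof is therefore a four-clause identification and never touches (U2), (U3), or the one-step drift form of (U4). You instead try to discharge the quantitative items (U1)--(U5) of Definition~\ref{def:admissibility-bundle}. The hypotheses do not supply these, so at every nontrivial item you import something new:
\begin{itemize}
\item topological admissibility (Definition~\ref{def:admissible_transformations}) to obtain $\epsilon_{\max}$;
\item a Lipschitz bound on the non-core evaluator shift to obtain $L_\ell$;
\item a drift or contraction form of the stability certificate;
\item Assumption~\ref{ass:seq-pacbayes} for (U5).
\end{itemize}
On the last point, hypothesis (iii) permits VC/SRM-style control, which supplies no KL term. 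Finiteness of $\mathrm{Gen}_n$ therefore comes from that assumption, not from the capacity bound. What you would prove is a strengthened-hypothesis variant, which is legitimate only if stated as such. Separately, evaluative invariance is not among (U1)--(U5); the paper treats it as a well-posedness condition outside the bundle. So ``(U1)--(U5), hence the four obligations'' never establishes obligation (iv). It follows at once from preservation of $\Phi$, as in the paper, but you must say so.

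The one place you try to avoid an added hypothesis fails: the fallback $L_\ell=1/\epsilon_{\max}$. With losses in $[0,1]$ and (U2), it gives $L_\ell\,D_{\mathcal E}(e,\tau(e))=D_{\mathcal E}(e,\tau(e))/\epsilon_{\max}\le 1$. That is the wrong direction: it does not dominate a loss gap of size up to $1$ whenever $D_{\mathcal E}(e,\tau(e))<\epsilon_{\max}$. Consider a regime switch that alters the non-core part of the evaluator while leaving the environment fixed. It has $D_{\mathcal E}(e,\tau(e))=0$ but $\ell_\tau\ne\ell$, nothing in hypotheses (i)--(iv) excludes it, and then no finite $L_\ell$ exists. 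Preservation of $\Phi$ contributes nothing quantitative to (U3), so under the literal bundle the stated hypotheses do not imply (U3).

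By contrast, the step you call the main obstacle is not one. The witness $V$ in (U4) is existentially quantified, and $V\equiv 0$ with $\beta=0$ satisfies the drift inequality. The paper uses this same vacuous witness in the proof of Theorem~\ref{thm:smgi-minimality}. Your contraction construction is needed only if the supplied certificate itself must serve as $V$. In that case it additionally requires $\sup_z d_S(T(s^\star,z),s^\star)<\infty$ and a deterministic (or Wasserstein-contractive) kernel.
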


\begin{proof}
(i) gives closure by invariance of $S^*$ under $\mathcal T$; (ii) is stability by definition of the witness on $S^*$;
(iii) is capacity by the declared bound on the complexity functional; (iv) is evaluative invariance by preservation of $\Phi$.
\end{proof}
%% ---------------------------------------------------------------------------

\paragraph{Making the assumptions checkable (capacity and stability).}
To avoid implicit assumptions, we instantiate the two nontrivial hypotheses in
Lemma~\ref{lem:wm-smgi} as explicit admissibility conditions.

\emph{Capacity control.}
Let $\mathcal H(\theta)$ denote the joint hypothesis class encoding the encoder, latent transition,
and policy components induced by $\theta$.
We say that \emph{capacity control holds} if there exists a complexity functional
$\mathcal C:\mathcal H(\theta)\to\mathbb R_+$ and a bound $B<\infty$ such that all admissible updates
preserve $\mathcal C(h)\le B$, and the chosen generalization framework
yields a uniform bound for the predictive/control losses along admissible trajectories
(e.g., VC/SRM-style control \citep{vapnik1998statistical} or PAC-Bayes control \citep{mcallester1999pacbayes,catoni2007pacbayes}).

\emph{Stability on the invariant manifold.}
Let $T_\theta$ be the induced learning-and-control dynamics (possibly stochastic) on the state space $S_\theta$,
and let $S^*\subseteq S_\theta$ be the invariant manifold postulated in Lemma~\ref{lem:wm-smgi}.
We say that \emph{stability holds on $S^*$} if there exists a metric (or Lyapunov-like) witness $V:S^*\to\mathbb R_+$
and constants $(\lambda,C)$ such that trajectories starting in $S^*$ remain bounded and do not amplify perturbations
under the admissible update operator (e.g., uniform stability viewpoints for learning dynamics \citep{bousquet2002,hardt2016}).
We emphasize that this is a property of $(\theta,T_\theta)$ (not $\theta$ alone), consistent with the SMGI layer separation.
\paragraph{Predictive Representation Programs (JEPA).}

JEPA-style approaches emphasize invariances in latent space.
They contribute strongly to $r$ and partial $\mathcal M$ structure.
SMGI inclusion requires that learned invariances align with
admissible transformation classes $\mathcal T$.

\begin{proposition}[Transformation-aligned invariance as a closure witness]
\label{prop:jepa-alignment}
Let $\mathcal T$ be the admissible class of task/environment transformations acting on $\mathcal E$,
and let $r$ be the JEPA-induced representation component.
Assume that there exists a mapping $\Gamma:\mathcal T\to \mathrm{End}(\mathcal R)$ such that for all $\tau\in\mathcal T$
and all admissible observations $x$,
\[
r(\tau\cdot x)\;=\;\Gamma(\tau)\big(r(x)\big),
\]
with $\Gamma(\tau)$ an automorphism (equivariance) or the identity (invariance) on the representation space.
Then the representation layer $r$ is compatible with SMGI closure requirement (i)
under the action of $\mathcal T$ (at the structural level), and any remaining failure of SMGI must arise from
dynamical stability (ii), capacity control (iii), or evaluative invariance (iv).
\end{proposition}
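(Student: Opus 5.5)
The plan is to read the proposition as a structural compatibility claim and reduce it to the closure obligation (i) of Definition~\ref{def:smgi}, as reformulated in (U1) of Definition~\ref{def:admissibility-bundle}. The first step is to make the meaning of ``compatible with closure at the structural level'' precise. I would take it to mean the following. For every $\tau\in\mathcal T$, the representational component of the transformed system is again a well-typed map $r_\tau:\mathcal Z\to\mathcal X$. Moreover, it is obtained from $r$ by a map internal to the representation space, namely $r_\tau=\Gamma(\tau)\circ r$. Consequently, the representational coordinate of any state in an admissible set stays inside the range of $r$, up to the action of $\Gamma(\mathcal T)$. This is exactly the condition \eqref{eq:repr-equivariance-structural} singled out earlier as the minimal typed statement making closure operational at the representational level. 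The proposition then amounts to verifying that this condition delivers the representational part of (i).

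The key construction is a candidate invariant set. Let $\mathcal R_0$ be the orbit of $r$ under the semigroup generated by $\Gamma(\mathcal T)$. Define $S^*_r\subseteq S_\theta$ as the set of states $s=(r',h,\pi,m)$ with $r'\in\mathcal R_0$ and with $h,\pi,m$ typed over the image of $r'$. I would then check three things in turn.
\begin{enumerate}
\item Since $\Gamma(\tau)\in\mathrm{End}(\mathcal R)$, the element $\Gamma(\tau)\circ r'$ is again typed $\mathcal Z\to\mathcal X$. Hence the typing requirements of (U1) on $r$ are preserved.
\item In the equivariant case $\Gamma(\tau)$ is an automorphism. Any hypothesis $h$ defined on $\mathcal X$ can therefore be transported to $h\circ\Gamma(\tau)^{-1}$, so $\mathcal H(\theta)$ remains defined over the new interface. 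In the invariant case the transport is trivial.
\item Projected onto the $r$-coordinate, the transformed operator $T_{\theta,\tau}$ acts by $r'\mapsto\Gamma(\tau)r'$. This maps $\mathcal R_0$ into itself by construction of the orbit. This gives $T_{\theta,\tau}(S^*_r)\subseteq S^*_r$ on the representational coordinate.
\end{enumerate}
If the representation is additionally required to be Lipschitz in the sense of Definition~\ref{def:admissible_transformations}, the equivariance identity gives $d_{\mathcal R}(r(e),r(\tau e))=d_{\mathcal R}(r(e),\Gamma(\tau)r(e))$. This quantity is bounded whenever $\Gamma(\tau)$ is close to the identity, so topological admissibility is also respected.

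The second clause, that any remaining failure must come from (ii)--(iv), I would argue by elimination. By the minimality construction in Theorem~\ref{thm:smgi-minimality}, the four obligations are logically independent. Closure is the only one that concerns well-formedness of the interface. Given that the $r$-part of closure is secured, a failure of the full closure obligation can only arise from the non-representational coordinates $(h,\pi,m)$ leaving $S^*$. In the structural reading, which restricts attention to the representation layer, such a failure is attributed to the dynamics $T_\theta$. It therefore falls under stability (ii), capacity (iii), or evaluative invariance (iv).

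The main obstacle is that ``compatible with closure'' is genuinely weaker than (i). Full closure also constrains $h$, $\pi$ and $m$ under $T_{\theta,\tau}$, and equivariance of $r$ alone cannot force invariance of those coordinates. I would handle this by stating the result explicitly as closure of the projection of $S^*$ onto the $\mathcal R(\theta)$ factor, together with typing preservation. I would add a remark that the missing coordinates are exactly those controlled by the stability and memory witnesses. A second delicate point is that $\Gamma$ need not be a homomorphism, so the orbit $\mathcal R_0$ should be generated by arbitrary finite compositions rather than assumed to be a group orbit.
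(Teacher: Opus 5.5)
\textbf{Verdict: genuine gap.} The paper gives no separate proof of this proposition. It relies only on the earlier remark that \eqref{eq:repr-equivariance-structural} is ``the minimal typed statement that makes closure under $\mathcal T$ operational at the representational level''. In other words, the claim is that alignment discharges the well-typedness part of (i), the ``remain well-typed under $\tau$'' clause of (U1) in Definition~\ref{def:admissibility-bundle}. Your steps 1 and 2 establish exactly that and are fine. The problems are step 3 and the second clause.

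\textbf{Step 3.} You assert that the $r$-projection of $T_{\theta,\tau}$ is $r'\mapsto\Gamma(\tau)\circ r'$, but nothing in the hypothesis gives this.
\begin{itemize}
\item The identity $r(\tau\cdot x)=\Gamma(\tau)(r(x))$ concerns the fixed map $r$ evaluated on transformed observations. It describes the interface as seen under $\tau$.
\item The representation coordinate $r_t\in\mathcal R(\theta)$ of the state is a time-indexed quantity updated by the learning dynamics, as the paper's typing section states explicitly.
\item An encoder update can therefore move $r_{t+1}$ outside the orbit $\mathcal R_0$.
\end{itemize}
So the invariance of $S^*_r$ is assumed, not derived.

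\textbf{The elimination argument for the second clause fails.}
\begin{itemize}
\item If a trajectory leaves $S^*$ through $(h,\pi,m)$, that is, by Definition~\ref{def:smgi}, a violation of (i) itself.
\item Obligations (ii)--(iv) speak only about trajectories started in $S^*$ or configurations in $S^*$. None of them asserts invariance, so calling the failure ``dynamical'' does not move it into (ii)--(iv).
\item The theorem you cite refutes the step. The closure counterexample in the proof of Theorem~\ref{thm:smgi-minimality} can be realized with a constant $r$, which satisfies your hypothesis with $\Gamma\equiv\mathrm{Id}$. There the designated $S^*$ violates (i) while (ii)--(iv) hold.
\end{itemize}

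\textbf{How to repair it.} The second clause is true only through the existential quantifier on $S^*$. Once typing is secured by your steps 1--2, replace any candidate $S^*$ by its forward-reachable hull under $\{T_{\theta,\tau}\}_{\tau\in\mathcal T}$, or simply by $S_\theta$. This set satisfies $T_{\theta,\tau}(S^*)\subseteq S^*$ by construction. Hence SMGI membership can fail only because no invariant set supports (ii)--(iv). This makes step 3 unnecessary. It also shows that equivariance contributes only the typing half of (i).

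Your Lipschitz remark is likewise unnecessary. Moreover, ``$\Gamma(\tau)$ close to the identity'' does not yield the linear bound $L_r\,D_{\mathcal E}(e,\tau(e))$ required by Definition~\ref{def:admissible_transformations}.
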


\noindent
This formulation connects invariance-first programs to standard group/equivariance constraints
when $\mathcal T$ admits a group-like structure \citep{cohenwelling2016groupconv},
but does not assume that such structure holds in general.
\paragraph{Abstraction and System-2 Constraints.}

Programs targeting abstraction and deliberation
\citep{bengio2017consciousness,bengio2019system2}
can be embedded as structural constraints on $(r,\mathcal M)$.
They primarily affect capacity control and evaluative invariance,
while global stability remains a property of the induced dynamics.

\paragraph{Memory stratification as the missing explicit layer.}
A recurrent gap in abstraction/System-2 proposals is that the abstraction layer is specified at the level of $r$
(or at the interface of $r$ and $\mathcal E$), while the persistence of abstractions across time, regimes, and tasks
is a property of the memory operator $\mathcal M$ and its interaction with the update operator $\Pi$.
In SMGI terms, this suggests making explicit a stratified memory decomposition
\[
\mathcal M \;=\; \mathcal M_{\mathrm{epis}} \oplus \mathcal M_{\mathrm{sem}} \oplus \mathcal M_{\mathrm{proc}},
\]
together with admissibility constraints limiting cross-stratum interference under updates (stability (ii))
and ensuring that abstraction-relevant information is retained under task transformations (closure (i)).
This perspective aligns naturally with continual-learning formalisms where stability and retention are stated explicitly
(e.g., consolidation/regularization and replay-style mechanisms) \citep{kirkpatrick2017ewc,parisi2019continual}.

\paragraph{Summary (SMGI positioning).}
Predictive and representational paradigms provide strong structural leverage on
closure (i) and capacity control (iii) when the representation map $r$ and hypothesis/update classes
are explicitly typed and complexity-controlled.
However, stability (ii) and evaluative invariance (iv) remain conditional properties of the induced dynamics
$(\theta,T_\theta)$ and its interaction with memory $\mathcal M$ and evaluation family $\mathcal L$ (or $\mathcal L$):
they typically require additional admissibility constraints (e.g., stability witnesses on invariant sets,
and explicitly protected evaluative subspaces) beyond what is guaranteed by representation learning alone.

\begin{example}[Two-regime tool-use with certified evaluator update]\label{ex:two-regime-eval-update}
Consider an agent equipped with a tool interface (e.g., retrieval or code execution) operating under two regimes
$\rho \in \mathcal R$ that index sub-families $\mathcal E_{\rho}(\theta)\subseteq \mathcal E(\theta)$:
$\rho_0$ (\emph{benign}) and $\rho_1$ (\emph{adversarial / prompt-injection}).
The shift $\rho_0\!\to\!\rho_1$ is formalized as an admissible task transformation $\tau\in\mathcal T$
acting on the environment family, i.e., $\tau:\mathcal E_{\rho_0}(\theta)\to \mathcal E_{\rho_1}(\theta)$.

Let the evaluator be regime-aware,
\[
\ell(\pi;\rho) \;=\; \mathbb E\!\left[L_{\mathrm{task}}(\pi)\mid \rho\right]
\;+\; \lambda(\rho)\,\mathbb E\!\left[L_{\mathrm{safety}}(\pi)\mid \rho\right],
\]
with $\lambda(\rho_1)>\lambda(\rho_0)$.
An invariant evaluative core can be expressed as a protected constraint (or testable axiom)
\[
\Phi:\quad \mathbb E\!\left[L_{\mathrm{safety}}(\pi)\mid \rho_1\right] \le \varepsilon.
\]
When applying $\tau$ (i.e., under $\rho_0\!\to\!\rho_1$), an evaluator update $\ell\to\ell'$ is admissible only if it preserves $\Phi$
and passes a declared test class (e.g., an adversarial prompt/tool-misuse suite) witnessing regime-robustness.
This illustrates evaluative invariance as \emph{certified} regime adaptation rather than a fixed external norm.
\end{example}

\paragraph{Bridge.}
Example~\ref{ex:two-regime-eval-update} concretizes why stability and evaluative invariance are not guaranteed by representation learning alone:
they require explicit witnesses (stability) and protected invariants/tests (evaluation) that remain meaningful under typed interface transformations $\tau\in\mathcal T$.
%% ---------------------------------------------------------------------------
%% ---------------------------------------------------------------------------

%%%%%%%%%%%%%%%%%%%%%%%%%%%%%%%%%%%%%%%%%%%%%%%%%%%%%%%%%%%%%%%%%%%%%%%%
\subsubsection{B. Meta-Level and Self-Referential Systems}
%%%%%%%%%%%%%%%%%%%%%%%%%%%%%%%%%%%%%%%%%%%%%%%%%%%%%%%%%%%%%%%%%%%%%%%%
%%%%%%%%%%%%%%%%%%%%%%%%%%%%%%%%%%%%%%%%%%%%%%%%%%%%%%%%%%%%%%%%%%%%%%%%
\paragraph{Gödel Machines and Proof-Based Self-Modification.}
%%%%%%%%%%%%%%%%%%%%%%%%%%%%%%%%%%%%%%%%%%%%%%%%%%%%%%%%%%%%%%%%%%%%%%%%

Gödel-machine architectures \citep{schmidhuber2006godel,schmidhuber2010formal}
formalize self-improvement via proof search:
the system may rewrite its own code if it proves that the rewrite
increases expected utility according to a formally specified objective.

Within the SMGI meta-model, a Gödel-machine program $\mathfrak P$
induces structural components as follows:

\begin{itemize}
\item $\Pi$ includes proof-triggered self-modification operators;
\item $\mathcal M$ contains the formal theory, proof traces, and utility specification;
\item $\mathcal E$ encodes the utility functional used in correctness proofs;
\item $r$ and $\mathcal H$ encode the program state and hypothesis space subject to rewrite.
\end{itemize}

Self-modification therefore acts directly on the structural level $\theta$
through admissible rewrite operators,
making Gödel machines natural candidates for modeling meta-level dynamics.

\begin{proposition}[Structural admissibility of proof-based self-modification]
\label{prop:godel-structural-admissibility}
Let $\theta\in\Theta(\mathfrak P)$ be induced by a Gödel-machine program.
If:

\begin{enumerate}
\item the proof system remains sound under admissible task transformations $\tau\in\mathcal T$;
\item self-rewrite operators preserve well-typedness of $(r,\mathcal H,\Pi,\mathcal M,\mathcal E)$;
\item complexity of admissible hypotheses remains bounded under rewrite;
\end{enumerate}

then self-modification can be embedded as an admissible update operator in $\Pi$,
and the induced dynamics $T_\theta$ remains structurally well-defined.
\end{proposition}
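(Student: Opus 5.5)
The plan is to turn the three hypotheses into the two conclusions: self-modification is an admissible update operator in $\Pi$, and $T_\theta$ remains structurally well-defined. Everything is checked against the typing discipline of Sec.~\ref{sec:formalsetup} and the semantics $\mathsf{Sem}(\theta)$. First, I would make the rewrite mechanism explicit as a map on structural states. Let $\mathrm{Prf}$ be the proof-search component stored in $\mathcal M$. A rewrite $\rho$ is a partial map $\theta\mapsto\rho(\theta)$ that fires only when $\mathrm{Prf}$ has derived a theorem asserting that $\rho$ increases expected utility under the specified objective in $\mathcal E$. Outside that event it acts as the identity. I would define the induced self-modification operator $G:S_\theta\times\mathcal Z\to S_\theta$ in this way: on the proof-found event it applies $\rho$ to the $(r,\mathcal H,\Pi,\mathcal M)$-coordinates of $s_t$, and otherwise it acts trivially. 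I would then set $T_\theta := G\circ T^{\mathrm{base}}_\theta$, where $T^{\mathrm{base}}_\theta\in\mathsf{Sem}(\theta)$ is the ordinary learning kernel.

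Second, I would verify that $G$ is a legitimate element of the update operators collected in $\Pi$, which means it must be typed and measurable. Hypothesis~2 says the output is again a well-typed tuple, so $G$ maps $S_\theta$ into $S_\theta$ (componentwise into $\mathcal R(\theta),\mathcal H(\theta),\mathcal P(\theta),\mathcal M(\theta)$). Then $T_\theta:S_\theta\times\mathcal Z\to\Delta(S_\theta)$ is well-typed, being a composition of a Markov kernel with a map; measurability is assumed for the proof-found event. Hypothesis~1 enters at this point. Because the proof system is sound under every $\tau\in\mathcal T$, the firing condition keeps the same meaning in every transformed regime. Hence $G_\tau$ is defined uniformly across regimes, and for each $\tau$ there is a compatible $T_{\theta,\tau}\in\mathsf{Sem}(\theta)$, which is the typed-closure clause (U1) of Definition~\ref{def:admissibility-bundle} at the level of well-formedness. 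Hypothesis~3 gives $\sup_{s}\mathrm{Comp}(\mathcal H(G(s)))<\infty$ over the rewrite orbit. That is precisely the form of obligation (iii) in Definition~\ref{def:smgi}, so capacity is not destroyed by rewriting. I would take $S^*_\theta$ to be the forward orbit closure of the initial admissible states under $T_{\theta,\tau}$, $\tau\in\mathcal T$; closure under each $T_{\theta,\tau}$ then holds by construction, and nonemptiness follows from well-typedness.

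The main obstacle is making ``proof system remains sound under $\tau$'' strong enough to guarantee that the rewrite trigger is well-defined after $\tau$ alters $\mathcal E$. The utility functional that the proofs refer to lives in $\mathcal E$, and $\tau$ changes $\mathcal E$. I would first read soundness as soundness relative to the transformed utility specification $\tau(\mathcal E)$. If that reading is too weak, I would fall back on requiring that the utility axioms lie in a protected core, in the spirit of obligation (iv). This would keep the claim purely structural, namely that $T_\theta$ is well-defined and typed; it is not a stability claim, consistent with the proposition asserting nothing about (ii).
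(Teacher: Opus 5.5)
\noindent The paper gives no proof of this proposition; it is followed only by the remark that soundness yields neither (ii) nor (iv). Your argument therefore has to stand on its own. The plan is the natural one: a proof-triggered rewrite operator composed with a base kernel, with typing from hypothesis~2 and capacity from hypothesis~3. Its central step, however, has a type error. You define $\rho$ on the structural tuple $\theta$ and then apply it to ``the $(r,\mathcal H,\Pi,\mathcal M)$-coordinates of $s_t$''. In Sec.~\ref{sec:formalsetup}, $S_\theta=\mathcal R(\theta)\times\mathcal H(\theta)\times\mathcal P(\theta)\times\mathcal M(\theta)$ is itself built from $\theta$. A state $s_t=(r_t,h_t,\pi_t,m_t)$ carries an element $h_t$, not the class $\mathcal H$, and it has no $\Pi$-coordinate at all (the paper insists $\Pi\neq\pi$). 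A rewrite of $(r,\mathcal H,\Pi,\mathcal M,\mathcal E)$ yields $\rho(\theta)\in\Theta$ with its own carrier $S_{\rho(\theta)}$. So ``hypothesis~2 says the output is again a well-typed tuple, so $G$ maps $S_\theta$ into $S_\theta$'' does not follow: well-typedness means membership in $\Theta$, not $S_{\rho(\theta)}=S_\theta$. The same defect appears dynamically. With one fixed ordinary learning kernel $T^{\mathrm{base}}_\theta\in\mathsf{Sem}(\theta)$, the system $G\circ T^{\mathrm{base}}_\theta$ keeps executing the pre-rewrite update rules forever. A rewrite of $\Pi$ is then at best recorded and never takes effect, which is exactly what embedding self-modification as an update operator was supposed to deliver.

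To repair this, lift the state. Let $\mathcal O(\theta)\subseteq\Theta$ be the tuples reachable from $\theta$ by finitely many rewrites, and put $\widehat S=\{(\theta',s):\theta'\in\mathcal O(\theta),\ s\in S_{\theta'}\}$. Let the kernel at $(\theta',s)$ draw $s'\sim T^{\mathrm{base}}_{\theta'}(s,z)$ and, on the proof-found event, jump to $(\rho(\theta'),\rho_\ast(s'))$ with $\rho_\ast(s')\in S_{\rho(\theta')}$. This is the paper's own device of a state-dependent configuration $\theta(s)$ in \eqref{eq:structural-capacity-bounded}. Hypothesis~2, with the state-transport part $\rho_\ast$ made explicit, is precisely what makes it well-typed. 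Alternatively, assume rewrites preserve carriers and let $T^{\mathrm{base}}$ be a fixed interpreter that reads the current code from the state; but that is an extra hypothesis you must state.

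Two smaller corrections. First, hypothesis~3 must be read uniformly over $\mathcal O(\theta)$ and all $\tau$. Per-rewrite finiteness does not suffice, as the paper's counterexample for (iii) in Theorem~\ref{thm:smgi-minimality} shows: there a counter $n_t$ grows inside $\theta$. Second, hypothesis~1 is doing the wrong job in your argument. The trigger ``a proof of the target theorem has been found'' is a syntactic event, defined whether or not the proof system is sound, so $G_\tau$ exists in every regime without it. What soundness under $\tau$ buys is that the rewrites which do fire are genuinely utility-improving relative to $\tau(\mathcal E)$; it supports ``admissible'', not ``well-defined''. Hence the obstacle you single out is not where the difficulty lies. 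Your protected-core fallback would also import an (iv)-type assumption that the statement lacks, and that the remark after the proposition treats as a separate obligation beyond soundness.
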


\noindent
Importantly, proof-based self-modification alone does not ensure
stability (ii) or evaluative invariance (iv):
these remain dynamical and structural obligations
that must be explicitly constrained beyond logical soundness.

%%%%%%%%%%%%%%%%%%%%%%%%%%%%%%%%%%%%%%%%%%%%%%%%%%%%%%%%%%%%%%%%%%%%%%%%
\paragraph{Inductive Logic Programming and Relational Hypothesis Update.}
%%%%%%%%%%%%%%%%%%%%%%%%%%%%%%%%%%%%%%%%%%%%%%%%%%%%%%%%%%%%%%%%%%%%%%%%

Inductive Logic Programming (ILP) investigates the induction of first-order clausal theories
from examples and background knowledge, using logic programs as a universal representation
for data, hypotheses, and prior structure \citep{muggleton1991ilp}.
This provides an explicit semantics for learning and reasoning within a single formal object.

Under an ILP interpretation map $\Phi_{\mathrm{ILP}}$, a program class $\mathfrak P$ induces
a structural meta-model $\theta=(r,\mathcal H,\Pi,\mathcal L,\mathcal E,\mathcal M)$ where:

\begin{itemize}
\item $r$ encodes relational/symbolic representations (facts, predicates, typed signatures);
\item $\mathcal H$ is a hypothesis space of logic programs (e.g., clausal theories with mode/type constraints);
\item $\Pi$ is an explicit induction operator (search, refinement, compression, or constraint-driven update);
\item $\mathcal M$ contains background knowledge, derivations, and (optionally) learned theory traces;
\item $\ell$ encodes the inductive criterion (coverage, compression/MDL, predictive loss, or hybrid);
\item $\ell$ (or, in the multi-regime case, $\mathcal L$) specifies evaluators in terms of entailment-based accuracy and admissibility constraints.
\end{itemize}

\begin{proposition}[Canonical structural embedding of ILP programs]
\label{prop:ilp-embedding}
Assume an ILP program class specifies (i) a hypothesis language $\mathcal H$ of first-order theories,
(ii) an explicit background knowledge base $B$,
and (iii) an induction/update operator $\Pi$ producing theories consistent with $B$ and data.
Then there exists $\theta\in\Theta(\mathfrak P)$ such that $r$ and $\mathcal H$ encode the relational language,
$\mathcal M$ contains $B$ and induced traces, and $\Pi$ induces an admissible learning dynamics $T_\theta$
whose realizations are restricted by the stated entailment and admissibility conditions.
\end{proposition}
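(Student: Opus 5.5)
The proof is constructive: from the ILP data $(\mathcal H, B, \Pi)$ we write down each component of $\theta=(r,\mathcal H,\Pi,\mathcal L,\mathcal E,\mathcal M)$, check that it has the type prescribed in Sec.~\ref{sec:formalsetup}, and then exhibit one selection $T_\theta\in\mathsf{Sem}(\theta)$ whose trajectories obey the entailment constraints.

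\textbf{Step 1: the representation map.} Fix a typed first-order signature $\Sigma$, namely the predicates and types fixed by the mode declarations. Take $\mathcal X$ to be the set of finite sets of ground atoms over $\Sigma$. Let $r:\mathcal Z\to\mathcal X$ send an interaction token (a labelled example $e^{\pm}$) to its ground-atom encoding. This already has the type of Eq.~\eqref{eq:repr-map-typed}.

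\textbf{Step 2: the static components.} Let $\mathcal H(\theta)\subseteq\mathfrak H$ be the given hypothesis language of clausal theories over $\Sigma$. Take $\mathcal L=\{\ell\}$ with $K=1$, where $\ell(H,z)$ is the 0/1 entailment loss of $B\cup H$ on $r(z)$, optionally with an MDL term. Let $\mathcal E(\theta)$ be the family of distributions over labelled examples. Let the memory space $\mathcal M(\theta)$ consist of pairs $(B',\text{trace})$ with $B'\supseteq B$ a finite theory. The write operator appends induced clauses and derivations, and the forgetting operator is $F=\mathrm{Id}$.

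\textbf{Step 3: the induction operator and the dynamics.} Let $\Pi$ be the given induction operator, viewed as a map from the current theory, the memory and the data seen so far to a new theory in $\mathcal H$. Define the deterministic Dirac kernel $T_\theta$ on $S_\theta$ by
\[
(r,h_t,\pi_t,m_t)\mapsto (r,\Pi(h_t,m_t,z_{1:t+1}),\pi_t,U(m_t,z_{t+1})).
\]
Compatibility $T_\theta\in\mathsf{Sem}(\theta)$ is immediate once $T_\theta$ is a well-defined measurable map. Measurability holds because all the spaces involved are countable and carry discrete $\sigma$-algebras.

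\textbf{Step 4: the realization restriction.} Let $S^*_\theta$ be the set of states $s$ in which $B\cup h$ is consistent and entails the data accumulated in $m$. Hypothesis (iii) says $\Pi$ produces theories consistent with $B$ and the data, so $T_\theta(S^*_\theta)\subseteq S^*_\theta$. This is exactly the sense in which realizations are restricted by the stated entailment and admissibility conditions.

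\textbf{Main obstacle.} Making Step 3 precise will be the hardest part. ILP induction is generally a set-valued search procedure, possibly non-terminating or with several admissible outputs. I would first treat $\Pi$ as a relation, which is what the set-valued $\mathsf{Sem}$ allows. Any measurable selection of it, for example the first output under a fixed enumeration of $\mathcal H$ (which is countable), then gives a valid $T_\theta$. If $\Pi$ can fail to return a theory, I would add a \emph{no-update} default that keeps $h_t$ unchanged. This keeps the kernel total and preserves membership in $S^*_\theta$ only when no new data arrive; I would restrict the statement to data streams that admit a consistent theory, as hypothesis (iii) implicitly assumes.
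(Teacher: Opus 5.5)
Your construction is correct and takes essentially the paper's own route. The paper gives no separate proof: it treats the proposition as immediate from the componentwise interpretation map $\Phi_{\mathrm{ILP}}$ listed just before it ($r$ the relational encoding, $\mathcal H$ logic programs, $\Pi$ the induction operator, $\mathcal M$ holding background knowledge and traces, $\ell$ the entailment/coverage criterion). Your Steps 1--4 make that assignment explicit and add a concrete kernel and invariant set that the paper never writes down.

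One small repair is needed in Step 3. A kernel on $S_\theta\times\mathcal Z$ only sees $(s_t,z_{t+1})$, so it cannot use $z_{1:t+1}$ directly. Let $U$ also store $r(z_{t+1})$ in $m_t$, as your Step 4 already presupposes, and write the update as $\Pi(h_t,m_t,z_{t+1})$ rather than $\Pi(h_t,m_t,z_{1:t+1})$.
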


\noindent
ILP frameworks typically make explicit (i) closure constraints at the representational level
(via a stable symbolic language and admissibility constraints on refinements),
and can make explicit (iii) capacity control when the hypothesis language is complexity-bounded
(e.g., by MDL-style or mode/type restrictions). However, (ii) stability and (iv) evaluative invariance
remain properties of $(\theta,T_\theta)$ and must be stated as explicit obligations when required.

%%%%%%%%%%%%%%%%%%%%%%%%%%%%%%%%%%%%%%%%%%%%%%%%%%%%%%%%%%%%%%%%%%%%%%%%
\paragraph{Probabilistic ILP and Statistical Relational Learning.}
%%%%%%%%%%%%%%%%%%%%%%%%%%%%%%%%%%%%%%%%%%%%%%%%%%%%%%%%%%%%%%%%%%%%%%%%

Probabilistic inductive logic programming extends ILP by integrating probabilistic semantics
with first-order representations, yielding a principled axis for uncertainty-aware structural learning
\citep{deraedt2004probabilisticilp,gettoortaskar2007srl}.
This family includes formalisms such as weighted logic templates (e.g., Markov logic networks)
\citep{richardsondomingos2006mln}, and probabilistic logic programming approaches.

Within SMGI, probabilistic ILP strengthens the typing of $\mathcal E$ and $\ell$ by making uncertainty explicit,
but it does not by itself guarantee (ii) dynamical stability or (iv) invariance of evaluators
under regime changes unless these are encoded as admissibility constraints.

%%%%%%%%%%%%%%%%%%%%%%%%%%%%%%%%%%%%%%%%%%%%%%%%%%%%%%%%%%%%%%%%%%%%%%%%
\paragraph{Algorithmic Probability, Universal Induction, and Idealized Agent Semantics.}
%%%%%%%%%%%%%%%%%%%%%%%%%%%%%%%%%%%%%%%%%%%%%%%%%%%%%%%%%%%%%%%%%%%%%%%%

Algorithmic probability provides one of the deepest formal foundations of induction.
Solomonoff’s theory of universal induction
\citep{solomonoff1964}
defines a universal mixture over computable hypotheses,
weighted by description length relative to a universal reference machine. Up to an additive constant depending on the chosen universal machine,
the induced complexity measure is invariant.
%Cela montre la maîtrise de la subtilité d’invariance de la complexité de Kolmogorov.
Comprehensive treatments in algorithmic information theory
\citep{li2008ait}
and universal prediction \citep{hutter2005}
clarify its structural implications.

\begin{definition}[Program-prior structural bias]
Let $\mathcal H_{\mathrm{comp}}$ denote a class of computable predictors
induced by programs on a fixed universal Turing machine.
A structural meta-model $\theta$ is said to admit a \emph{Solomonoff bias}
if $\mathcal H \subseteq \mathcal H_{\mathrm{comp}}$
and the loss $\ell$ incorporates description-length–weighted penalization
consistent with a universal semimeasure.
\end{definition}

\begin{proposition}[Capacity control via universal priors]
\label{prop:universal-capacity}
If $\theta$ admits a Solomonoff bias,
then requirement (iii) of SMGI (capacity control)
is explicitly instantiated as a structural prior over $\mathcal H$,
enforcing complexity-weighted hypothesis selection.
\end{proposition}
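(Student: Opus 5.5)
The statement is essentially definitional, so the plan is to unfold the definitions and exhibit the structural prior explicitly. Suppose $\theta$ admits a Solomonoff bias, so that $\mathcal H\subseteq\mathcal H_{\mathrm{comp}}$. Each $h\in\mathcal H$ is then computed by at least one program on the fixed universal machine $U$. Let $K_U(h)$ be the length of the shortest prefix-free program computing $h$. I would define
\[
\Pi(h)\;:=\;2^{-K_U(h)}.
\]
By Kraft's inequality, $\sum_{h}\Pi(h)\le 1$, so $\Pi$ is a subprobability measure on $\mathcal H$. It can be normalized to a probability measure over the countable set $\mathcal H$ whenever the sum is positive.

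Next I would check that this $\Pi$ is a structural prior in the sense of Section~\ref{subsec:program-prior}. It has the form of Eq.~\eqref{eq:structural-program-prior} with $|h|$ equal to program length. For any posterior $Q$ on $\mathcal H$, Eq.~\eqref{eq:kl-length} then gives
\[
\KL(Q\|\Pi)=\E_{h\sim Q}[K_U(h)]\ln 2 + c_\Pi - H(Q).
\]
So the PAC-Bayes complexity term in Eq.~\eqref{eq:pacbayes-basic}, and the $\mathrm{Gen}_n$ term of Assumption~\ref{ass:seq-pacbayes}, equal the expected description length up to constants. This is exactly the sense in which capacity control is ``instantiated'' by the prior.

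The second hypothesis of the definition says that the loss $\ell$ includes a description-length-weighted penalty consistent with a universal semimeasure. This means selection minimizes an objective of the form $\widehat R(h)+\lambda K_U(h)$, which is Eq.~\eqref{eq:structural-mdl}. I would observe that this is the MAP or Gibbs selection under $\Pi$, which is the step that justifies the phrase ``enforcing complexity-weighted hypothesis selection.''

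To connect this with requirement (iii) of Definition~\ref{def:smgi}, I would take $\mathrm{Comp}$ to be the Kraft/PAC-Bayes complexity induced by $\Pi$. I would also note that if admissible configurations restrict posteriors to $\E_Q[K_U]\le B$, then $\sup_{s\in S^*_\theta}\mathrm{Comp}(\mathcal H(s))<\infty$.

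The main obstacle is this last point. The proposition claims that (iii) is \emph{explicitly instantiated}, not that the supremum in (iii) is automatically finite. Finiteness needs an additional budget on expected description length along admissible evolution, because $K_U$ is unbounded on $\mathcal H_{\mathrm{comp}}$. I would resolve this by reading the conclusion as providing the complexity functional and prior witness demanded by (iii). Uniform boundedness would then hold exactly under a declared length budget, following the Computability note that replaces $K_U$ with a computable upper bound $|\cdot|$. The dependence on the choice of $U$ only shifts the constant $c_\Pi$ by the invariance theorem, so the result does not depend on that choice.
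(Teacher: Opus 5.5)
Your proposal is correct and follows essentially the paper's route. The paper gives no separate proof; it treats the proposition as immediate from the definition of a Solomonoff bias, together with the description-length prior \eqref{eq:structural-program-prior} and the KL--code-length identity \eqref{eq:kl-length}, which is the unfolding you make explicit. Your caveat is also accurate: the supremum in requirement (iii) is finite only under an additional declared description-length budget, which is consistent with the paper's weak reading of ``explicitly instantiated.''

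Two small precision points. First, because $-H(Q)$ varies with $Q$, the KL term is bounded by $\ln 2\,\E_{h\sim Q}[K_U(h)]$ rather than equal to it up to constants; this uses $c_\Pi=\ln\sum_{h}2^{-K_U(h)}\le 0$ and $H(Q)\ge 0$. Second, changing $U$ perturbs $K_U$ by a bounded but $h$-dependent amount, so it moves the bound by at most a constant rather than shifting $c_\Pi$ exactly.
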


\noindent
However, universal induction alone does not guarantee:

\begin{itemize}
\item (ii) dynamical stability of $(\theta,T_\theta)$,
\item (iv) invariance of evaluators $\mathcal L$
under admissible task transformations.
\end{itemize}

These remain obligations on update operators $\Pi$
and memory evolution $\mathcal M$.

Hutter’s AIXI formalism \citep{hutter2005}
extends universal induction to sequential decision-making,
defining an idealized Bayes-optimal agent over computable environments.
Within SMGI, AIXI can be interpreted as a \emph{semantics-level restriction}
on admissible realizations of $(\theta,T_\theta)$,
rather than as a structural guarantee.

Schmidhuber’s work on universal search and Gödel machines
\citep{schmidhuber2002fast,schmidhuber2006godel}
provides a constructive counterpart to these idealized mixtures,
connecting algorithmic probability to computable self-improving systems.
However, even in these constructive formulations,
claims of asymptotic optimality do not by themselves ensure
stability (ii) or evaluator invariance (iv);
additional admissibility constraints are required
to prevent uncontrolled structural drift under self-modification.
 %progression logique claire :  Solomonoff -> prior,  Hutter -> idealized agent,  Schmidhuber -> constructive/self-modifying realization
 
\paragraph{Verification relative to SMGI obligations.}

\begin{itemize}
\item (i) Closure: classes of computable environments are closed under
computable transformations but not necessarily under arbitrary regime shifts
that alter evaluator structure or admissibility constraints.
\item (ii) Stability: incomputable universal mixtures do not imply bounded learning dynamics.
\item (iii) Capacity: explicitly controlled via description-length prior.
\item (iv) Evaluative invariance: must be separately encoded.
\end{itemize}

%%%%%%%%%%%%%%%%%%%%%%%%%%%%%%%%%%%%%%%%%%%%%%%%%%%%%%%%%%%%%%%%%%%%%%%%
\paragraph{Modern consolidation and explicit SMGI obligations.}
%%%%%%%%%%%%%%%%%%%%%%%%%%%%%%%%%%%%%%%%%%%%%%%%%%%%%%%%%%%%%%%%%%%%%%%%

For a modern consolidation of universal induction and its philosophical/technical status
(including the role of universal semimeasures and invariance caveats),
see contemporary treatises and monographs
\citep{li2008ait,hutter2005}.
This literature makes explicit a key SMGI distinction:
universal priors provide a principled \emph{structural bias} (supporting (iii)),
while agent-level optimality statements are \emph{semantics-level} and often incomputable,
hence require realizability constraints at the level of $(\theta,T_\theta)$.

\begin{lemma}[Realizability obligations for universal-mixture approximations]
Any realizable approximation of universal-mixture prediction or AIXI-style decision rules
induces additional admissibility constraints $\Omega$ on $(\Pi,\mathcal M,\mathcal E)$.
In SMGI terms, (ii) stability and (iv) evaluative invariance are satisfied only if
$\Omega$ explicitly controls (a) update-induced drift in $\Pi$,
(b) memory interference across regimes in $\mathcal M$,
and (c) evaluator preservation in $\mathcal L$ under admissible transformations.
\end{lemma}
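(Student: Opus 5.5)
The lemma has two parts. The first is that realizability induces a constraint set $\Omega$. The second is an ``only if'' direction: (ii) and (iv) can hold only when $\Omega$ controls drift, interference and evaluator preservation. I would prove it as a pair of implications read against Definition~\ref{def:smgi}.

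\emph{Existence of $\Omega$.} A realizable approximation replaces the incomputable universal mixture by a computable surrogate. Examples are a time- and length-bounded mixture over programs of description length at most $n_t$, or a truncated AIXI planner with finite horizon and finite program budget. Any such surrogate is a specific selection $T_\theta\in\mathsf{Sem}(\theta)$, which forces three commitments. First, $\Pi$ becomes a concrete computable update rule (the posterior reweighting and truncation schedule). Second, $\mathcal M$ holds a finite history or sufficient statistic with explicit write and forget maps. Third, $\mathcal E$ is restricted to the environment class on which the approximation is defined. I would take $\Omega$ to be exactly these restrictions, i.e.\ the set of $(\Pi,\mathcal M,\mathcal E)$ for which the chosen surrogate is well-typed and computable. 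This step is close to definitional, and I would present it as such.

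\emph{Necessity of (a)--(c).} Here I would argue by contraposition, reusing the counterexample constructions from the proof of Theorem~\ref{thm:smgi-minimality}.
\begin{itemize}
\item[(a)] Suppose $\Omega$ places no bound on update-induced drift in $\Pi$. For instance, the truncation budget $n_t$ grows without a Lyapunov-compatible schedule. Then the state can carry an unbounded counter, as in the construction $s_{t+1}=s_t+1$ for failure of (ii), so $\sup_t\mathbb E[V(s_t)]=\infty$ for the natural witness.
\item[(b)] Suppose $\Omega$ does not control cross-regime memory interference. Then a regime switch $\tau\in\mathcal T$ can write into $\mathcal M$ and shift the posterior weights arbitrarily, which again yields unbounded $V$.
\item[(c)] Suppose $\Omega$ does not preserve the evaluator. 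Then I would reuse the two-regime ordering flip ($R_{\ell_1}(h^a)<R_{\ell_1}(h^b)$, $R_{\ell_2}(h^a)>R_{\ell_2}(h^b)$) to violate $\Pi_{\mathcal C}(\ell_{t+1})=\Pi_{\mathcal C}(\ell_t)$.
\end{itemize}
In each case, I need to check that the counterexample is itself realizable as a computable mixture approximation. For instance, take a mixture over two programs whose weights are reset by the regime signal. Only then does the failure follow from the absence of the corresponding constraint and not from some other defect.

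\emph{Main obstacle.} The difficult step is making ``only if'' meaningful. Stated literally, one could always satisfy (ii) and (iv) by luck for a particular trajectory, without $\Omega$ encoding anything. I would therefore fix the quantifier as uniformity over the admissible class. The lemma then reads: (ii) and (iv) hold \emph{uniformly over all realizations compatible with $\Omega$} only if $\Omega$ contains (a)--(c). With that reading, the contrapositive constructions above suffice.
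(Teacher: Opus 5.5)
The paper states this lemma without proof; it is followed only by a one-line positioning remark. So there is nothing to compare against, and I judge your argument on its own terms. Your existence step, and your reading of ``only if'' as uniformity over all realizations compatible with $\Omega$, are sensible. The necessity step, however, has a genuine gap.

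\textbf{The stability witness is vacuous.} Obligation (ii) of Definition~\ref{def:smgi} is existential in the witness. It asks for \emph{some} nonnegative $V$ and finite $B$ with $\sup_t\mathbb E[V(s_t)]\le BV(s_0)+B$, and $V\equiv 0$ meets this for every dynamics. The paper itself uses exactly this choice to make (ii) ``vacuously true'' in the failure-of-(i) case of Theorem~\ref{thm:smgi-minimality}. So the failure-of-(ii) construction you are borrowing is itself defective: showing $\sup_t\mathbb E[V(s_t)]=\infty$ for the natural witness $V(s)=s$ does not refute (ii). Your items (a) and (b), which both end in ``unbounded $V$'', therefore do not establish that (ii) fails.

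You must fix the quantifier over witnesses just as you fixed the one over realizations. For example, make $V$ part of the admissibility data and require it to dominate the quantity you want controlled, such as $V(s)\ge d(\Pi(s),\Pi(s_0))$, or take $V$ coercive in the state metric. With that in place, (ii)$\Rightarrow$(a) becomes a direct implication rather than a contrapositive: a uniform bound on $\mathbb E[V(s_t)]$ is then a uniform bound on expected drift over every compatible realization.

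The same problem affects (c). Obligation (iv) only asks for \emph{some} nontrivial core $\mathcal C$ and projection $\Pi_{\mathcal C}$, and a projection that forgets which regime is active stays constant through your ordering flip. You need a working definition of ``nontrivial'' before the flip refutes (iv), say that the core must fix the risk ordering on a declared protected set containing $h^a,h^b$.

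\textbf{Item (b) is not isolated from (a).} For (b) to be necessary on its own, your counterexample must live in a realization where $\Omega$ \emph{does} control $\Pi$-drift and evaluator preservation. But your mechanism, a regime switch that shifts the posterior weights, is governed by the posterior-reweighting rule you placed in $\Pi$ in your first step. It is therefore blocked once (a) holds. Moreover, in your realizable surrogate (a finite mixture whose weights are reset by the regime signal) the weights live in a compact simplex. Reweighting alone cannot drive a witness that is continuous on the closed simplex to infinity.

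A genuine witness for (b) should instead target (iv). One option is a cross-regime write to $\mathcal M$ that overwrites or erases the stored protected constraint while $\Pi$-drift stays bounded. This is the failure mode the paper flags when it says forgetting must not ``solve'' constraints by erasing the information needed to enforce them. Another option is unbounded history growth under a coercive witness.

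\textbf{Two bookkeeping points.} First, in your first step $\Omega$ is \emph{exactly} the well-typedness and computability restrictions. For the ``only if'' to have content, $\Omega$ must range over admissibility bundles that extend them. Second, (c) concerns $\mathcal L$ while $\Omega$ acts on $(\Pi,\mathcal M,\mathcal E)$, so you must say where evaluator preservation is encoded. For AIXI-style agents the reward is part of the percept, so it can be imposed on $\mathcal E$ and on which $\tau$ are admitted.
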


\noindent
This positions the algorithmic-probability axis as complementary to SMGI:
it strengthens (iii) at the structural level, while SMGI supplies the missing
cross-regime admissibility layer required for (ii) and (iv).

Thus, algorithmic-probability–based frameworks contribute
a principled structural prior,
but intersect with SMGI only under additional admissibility constraints.

%%%%%%%%%%%%%%%%%%%%%%%%%%%%%%%%%%%%%%%%%%%%%%%%%%%%%%%%%%%%%%%%%%%%%%%%
\paragraph{Meta-Learning as Structured Higher-Order Adaptation.}
%%%%%%%%%%%%%%%%%%%%%%%%%%%%%%%%%%%%%%%%%%%%%%%%%%%%%%%%%%%%%%%%%%%%%%%%

Meta-learning (``learning to learn'') elevates the update operator itself
to a first-class object.
Rather than merely optimizing parameters within a fixed hypothesis class,
a meta-learning system modifies the way hypotheses are updated across tasks.

Within SMGI, this corresponds to enriching the update component $\Pi$
and its induced dynamics $T_\theta$,
while often introducing additional memory strata in $\mathcal M$.
We distinguish four structurally distinct forms of meta-adaptation.

%%%%%%%%%%%%%%%%%%%%%%%%%%%%%%%%%%%%%%%%%%%%%%%%%%%%%%%%%%%%%%%%%%%%%%%%
\subparagraph{(1) Parameter-Level Meta-Updates (Gradient-Based Meta-Learning).}
%%%%%%%%%%%%%%%%%%%%%%%%%%%%%%%%%%%%%%%%%%%%%%%%%%%%%%%%%%%%%%%%%%%%%%%%

Gradient-based meta-learning (GBML) frameworks such as MAML
\citep{finn2017maml}
learn an initialization $\theta$ such that
task-specific parameters $\phi_i$ are obtained by
\[
\phi_i = \theta - \alpha \nabla_\theta L_{T_i}(\theta).
\]
First-order approximations (e.g., Reptile)
simplify this bi-level optimization
\citep{nichol2018reptile}.

\begin{definition}[Two-Level Update Structure]
Let $\Pi_{\mathrm{in}}$ be a task-level update operator
and $\Pi_{\mathrm{out}}$ a meta-update acting across tasks.
We say that $\theta$ admits explicit meta-structure
if $\Pi = (\Pi_{\mathrm{in}}, \Pi_{\mathrm{out}})$
induces a coupled dynamics $T_\theta$
on task parameters and meta-parameters.
\end{definition}

Recent work extends GBML to long-context state-space architectures,
demonstrating that continual or rapid adaptation
can be implemented within selective state-space models
\citep{zhao2024mamba}.
This strengthens the structural role of $\Pi$
but leaves stability (ii) dependent on explicit regularity constraints.

%%%%%%%%%%%%%%%%%%%%%%%%%%%%%%%%%%%%%%%%%%%%%%%%%%%%%%%%%%%%%%%%%%%%%%%%
\subparagraph{(2) Metric and Embedding-Based Meta-Adaptation.}
%%%%%%%%%%%%%%%%%%%%%%%%%%%%%%%%%%%%%%%%%%%%%%%%%%%%%%%%%%%%%%%%%%%%%%%%

Metric-based approaches (Matching Networks,
Prototypical Networks, Relation Networks)
\citep{vinyals2016matching,snell2017prototypical,sung2018relation}
replace gradient adaptation with geometric comparison
in a learned embedding space.

Under $\Phi_{\mathrm{metric}}$:

\begin{itemize}
\item $r$ defines an embedding map into a metric space;
\item $\mathcal M$ stores support examples or prototypes;
\item $\Pi_{\mathrm{in}}$ reduces to prototype computation or similarity evaluation.
\end{itemize}

Capacity control (iii) becomes a geometric constraint
on the embedding family,
while stability (ii) depends on invariance of the metric representation
under admissible task transformations.

%%%%%%%%%%%%%%%%%%%%%%%%%%%%%%%%%%%%%%%%%%%%%%%%%%%%%%%%%%%%%%%%%%%%%%%%
\subparagraph{(3) Memory-Based and Black-Box Meta-Learning.}
%%%%%%%%%%%%%%%%%%%%%%%%%%%%%%%%%%%%%%%%%%%%%%%%%%%%%%%%%%%%%%%%%%%%%%%%

Memory-augmented meta-learning systems
encode fast adaptation within internal or external memory states
\citep{santoro2016mann,mishra2017snail}.
Rather than updating parameters,
adaptation is implemented via evolution of hidden state.

More recent architectures introduce persistent neural memory modules
trained to update during inference
(e.g., Titans-style architectures) \citep{behrouz2025titans}.

Within SMGI:

\begin{itemize}
\item $\mathcal M$ is stratified into persistent memory and fast-adaptation memory;
\item $\Pi$ includes admissible memory-update operators;
\item stability (ii) becomes a dynamical constraint
on interference and drift in memory state.
\end{itemize}

%%%%%%%%%%%%%%%%%%%%%%%%%%%%%%%%%%%%%%%%%%%%%%%%%%%%%%%%%%%%%%%%%%%%%%%%
\subparagraph{(4) Inference-Time Meta-Learning (ICL and Test-Time Adaptation).}
%%%%%%%%%%%%%%%%%%%%%%%%%%%%%%%%%%%%%%%%%%%%%%%%%%%%%%%%%%%%%%%%%%%%%%%%

In-context learning (ICL) exhibits adaptation without parameter updates
\citep{brown2020gpt3}.
MetaICL explicitly optimizes for this capability
by training on distributions of task episodes
\citep{min2022metaicl}.

Theoretical analyses demonstrate that
Transformers trained for ICL
can implement multi-step gradient descent internally
\citep{vonoswald2023gdicl},
establishing that forward-pass attention dynamics
may instantiate an implicit update operator.

%ce qui suit est un cas pur de $\Pi_{\mathrm{param}}$ actif à l’inférence. Il clarifie la frontière entraînement / raisonnement. et montre que SMGI traite les deux uniformément.
Test-Time Training (TTT) frameworks explicitly activate
gradient-based parameter updates during inference,
optimizing auxiliary self-supervised objectives on incoming data
\citep{sun2024testtime}.
Unlike pure in-context learning,
TTT modifies persistent parameters,
thus directly affecting $\Pi_{\mathrm{param}}$ at deployment.

In SMGI terms, TTT makes explicit that
inference-time adaptation is not merely contextual,
but alters the structural state of $\theta$.
Stability (ii) therefore requires bounded test-time drift
to avoid regime-dependent structural collapse.

In SMGI terms:

\begin{itemize}
\item $\mathcal M$ contains a transient context layer $\mathcal M_{\mathrm{ctx}}$;
\item $\Pi_{\mathrm{in}}$ may be implemented by forward-pass computation
or local test-time gradient descent;
\item stability (ii) requires bounded interaction between context memory
and persistent parameters;
\item evaluative invariance (iv) requires preservation of $\mathcal L$
under extended reasoning loops.
\end{itemize}

\paragraph{PAC-Bayesian Meta-Generalization.}

Recent work formalizes meta-learning within PAC-Bayesian frameworks,
providing explicit generalization guarantees across task distributions.
Let $Q$ denote a posterior over meta-parameters and $P$ a prior.
Meta-generalization bounds take the schematic form
\[
\mathcal L_{\mathrm{meta}}(Q)
\le
\hat{\mathcal L}_{\mathrm{meta}}(Q)
+
\sqrt{\frac{KL(Q\|P)+\ln(1/\delta)}{2n_{\mathrm{tasks}}}},
\]
where $n_{\mathrm{tasks}}$ is the number of observed tasks.

Within SMGI, such bounds strengthen requirement (iii)
by explicitly controlling inter-task capacity,
but do not by themselves guarantee
dynamical stability (ii) or evaluator invariance (iv).

%%%%%%%%%%%%%%%%%%%%%%%%%%%%%%%%%%%%%%%%%%%%%%%%%%%%%%%%%%%%%%%%%%%%%%%%
\paragraph{Structural Verification Relative to SMGI.}
%%%%%%%%%%%%%%%%%%%%%%%%%%%%%%%%%%%%%%%%%%%%%%%%%%%%%%%%%%%%%%%%%%%%%%%%

Across all four families:

\begin{itemize}
\item (i) Closure requires explicit typing of task families and update admissibility.
\item (ii) Stability is a property of $(\theta,T_\theta)$,
not of $\theta$ alone.
\item (iii) Capacity control must be enforced
either via hypothesis complexity,
meta-regularization,
or geometric constraints.
\item (iv) Evaluative invariance must remain stable
across task and context regimes.
\end{itemize}

Thus, meta-learning enriches $\Pi$ and $\mathcal M$
by introducing higher-order update structures and memory stratification,
but intersects with SMGI only when its dynamical behavior
is constrained by explicit admissibility conditions.

%%%%%%%%%%%%%%%%%%%%%%%%%%%%%%%%%%%%%%%%%%%%%%%%%%%%%%%%%%%%%%%%%%%%%%%%
\paragraph{B5. A Transversal Comparative Theorem (Meta-Update, Memory, and Admissibility).}
%%%%%%%%%%%%%%%%%%%%%%%%%%%%%%%%%%%%%%%%%%%%%%%%%%%%%%%%%%%%%%%%%%%%%%%%

To compare heterogeneous meta-learning and self-referential paradigms without reductionism,
we formulate a transversal result at the level of admissible embeddings into SMGI.

\begin{definition}[Meta-adaptation mode as an update factorization]
Fix $\theta=(r,\mathcal H,\Pi,\mathcal L,\mathcal E,\mathcal M)$ and its induced dynamics $T_\theta$.
We say that a program class $\mathfrak P$ admits a \emph{meta-adaptation factorization}
if its effective adaptation can be typed as one (or a composition) of:
\[
\Pi \;\equiv\; \Pi_{\mathrm{param}} \circ \Pi_{\mathrm{mem}} \circ \Pi_{\mathrm{ctx}} \circ \Pi_{\mathrm{inf}},
\]
where:
$\Pi_{\mathrm{param}}$ updates persistent parameters (training-time or test-time training),
$\Pi_{\mathrm{mem}}$ updates persistent/episodic memory strata,
$\Pi_{\mathrm{ctx}}$ updates transient context memory $\mathcal M_{\mathrm{ctx}}$,
and $\Pi_{\mathrm{inf}}$ denotes inference-time internal computation implementing an implicit update
(e.g., multi-step internal optimization).
This factorization is purely typological: it does not assert that any component is stable or optimal.
\end{definition}

\begin{theorem}[Transversal SMGI obligations for meta-level systems]
\label{thm:transversal-meta-smgi}
Let $\mathfrak P$ be any program class whose adaptation mechanism admits a meta-adaptation factorization,
and let $\Theta(\mathfrak P)\subseteq\Theta$ be its induced structural subclass with semantics $\mathsf{Sem}_{\mathfrak P}$.
Assume there exists $\theta\in\Theta(\mathfrak P)$ such that each update component in the factorization
is well-typed and admissible (domain/codomain consistent with $(r,\mathcal H,\mathcal M)$).

Then the following are \emph{necessary} structural/dynamical obligations for
\[
\mathrm{Graph}(\mathsf{Sem}_{\mathfrak P}) \cap \mathrm{SMGI} \neq \varnothing:
\]

\begin{enumerate}
\item \textbf{Closure typing (i).}
The admissible transformation class $\mathcal T$ must preserve the typing of the factorization:
under $\tau\in\mathcal T$, each component $\Pi_{\mathrm{param}},\Pi_{\mathrm{mem}},\Pi_{\mathrm{ctx}},\Pi_{\mathrm{inf}}$
remains a well-defined operator on its intended state space (parameters, memory strata, context window, inference trace).

\item \textbf{Dynamical stability (ii) is factor-sensitive.}
Stability of the overall evolution is a property of $(\theta,T_\theta)$ and requires explicit constraints on
\emph{each} active factor:
bounded parameter drift for $\Pi_{\mathrm{param}}$,
bounded interference for $\Pi_{\mathrm{mem}}$,
bounded context amplification for $\Pi_{\mathrm{ctx}}$,
and bounded inference recursion depth/energy for $\Pi_{\mathrm{inf}}$.

\item \textbf{Capacity control (iii) decomposes across factors.}
A sufficient capacity statement must control not only $\mathcal H$,
but also the effective complexity induced by meta-updates:
model class complexity (parameters),
memory complexity (storage/retrieval),
context complexity (prompt bandwidth),
and inference-computation complexity (implicit optimization steps).

\item \textbf{Evaluative invariance (iv) requires a protected core.}
If $\mathcal E$ is allowed to change under regime shifts or self-modification,
then evaluative invariance fails unless there exists an explicitly protected evaluative core
$\mathcal E^\star\subseteq\mathcal E$ invariant under admissible updates and transformations.
\end{enumerate}

Conversely, if there exists $\theta\in\Theta(\mathfrak P)$ and an admissibility bundle $\Omega$
that explicitly enforces (i)--(iv) at the factor level, then
\[
(\theta,T_\theta)\in\mathrm{SMGI}
\quad\Rightarrow\quad
\mathrm{Graph}(\mathsf{Sem}_{\mathfrak P}) \cap \mathrm{SMGI} \neq \varnothing.
\]
\end{theorem}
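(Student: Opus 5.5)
The plan is to prove the two directions separately. The necessity direction argues item by item by contraposition against Definition~\ref{def:smgi}. The converse is an immediate unpacking of the definition of the semantic graph.

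For necessity, fix a realization $(\theta,T)\in\mathrm{Graph}(\mathsf{Sem}_{\mathfrak P})\cap\mathrm{SMGI}$, with its witnesses $S^*_\theta$, $\mathcal T$, $(T_{\theta,\tau})_{\tau\in\mathcal T}$, $V$, $\mathrm{Comp}$ and the protected core $\mathcal C$. The key observation is that, by the meta-adaptation factorization, every induced operator $T_{\theta,\tau}$ acts on $S_\theta$ through the composite $\Pi_{\mathrm{param}}\circ\Pi_{\mathrm{mem}}\circ\Pi_{\mathrm{ctx}}\circ\Pi_{\mathrm{inf}}$. Each SMGI property of the composite can then be pulled back to a constraint on each factor.

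\emph{Closure typing (i).} If some factor failed to be a well-defined operator on its intended carrier under some $\tau$, the composite would not be a kernel $S_\theta\times\mathcal Z\to\Delta(S_\theta)$. Then $T_{\theta,\tau}\notin\mathsf{Sem}(\theta)$, contradicting requirement (i), whose statement $T_{\theta,\tau}(S^*_\theta)\subseteq S^*_\theta$ presupposes well-typedness.

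\emph{Stability (ii).} Restrict $V$ to the coordinates touched by each factor. If one factor had unbounded drift, interference, amplification or recursion energy, I would adapt the failure-of-(ii) construction in the proof of Theorem~\ref{thm:smgi-minimality}. Freezing the other factors at the identity, which is admissible by hypothesis, yields a trajectory with $V(s_t)\to\infty$. This contradicts $\sup_t\mathbb E[V(s_t)]\le BV(s_0)+B$.

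\emph{Capacity (iii).} Since $\mathcal H(s)$ at state $s$ is determined jointly by parameters, memory content, context and inference trace, $\mathrm{Comp}(\mathcal H(s))$ dominates, up to monotonicity of $\mathrm{Comp}$, the complexity contributed by each factor's reachable image. Finiteness of the supremum over $S^*_\theta$ therefore bounds each factor's contribution. A counter-based blow-up, as in the failure-of-(iii) case of Theorem~\ref{thm:smgi-minimality}, shows that this bound cannot be dropped.

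\emph{Evaluative invariance (iv).} Define $\mathcal E^\star$ as the part of the evaluative/environment interface on which $\Pi_{\mathcal C}$ is constant along admissible trajectories. Nontriviality of $\mathcal C$ gives $\mathcal E^\star\neq\varnothing$. If no such invariant core existed, the ordering-flip argument from the strictness part of Theorem~\ref{thm:strict-structural-inclusion} produces two consecutive states with $\Pi_{\mathcal C}(\ell_{t+1})\neq\Pi_{\mathcal C}(\ell_t)$.

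The converse is direct. If $(\theta,T_\theta)\in\mathrm{SMGI}$ with $T_\theta\in\mathsf{Sem}_{\mathfrak P}(\theta)$, then $(\theta,T_\theta)\in\mathrm{Graph}(\mathsf{Sem}_{\mathfrak P})$ by definition of the graph, so the intersection is nonempty. The bundle $\Omega$ only serves to guarantee the membership hypothesis, via Theorem~\ref{thm:wm-witnesses}-style witness-to-obligation reasoning.

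The main obstacle is step (ii), together with (iii). Stability of a composite does not in general imply a bound for each factor, because drift in one factor could be cancelled by contraction in another. I would handle this by reading ``necessary'' as necessity of the existence of factor-level constraints. Concretely, I would show that the restriction of $V$ and of $\mathrm{Comp}$ to each factor's carrier must be bounded along admissible trajectories, which follows by projection. I would not claim that each factor is stable in isolation. If projection fails for a given $V$, I would fall back on the freezing argument above, which uses only admissibility of the identity on the other factors.
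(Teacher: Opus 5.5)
The paper states this theorem without proof, so there is no argument of its own to compare yours with. Judged on its own, your converse is fine. It only needs the tacit hypothesis $T_\theta\in\mathsf{Sem}_{\mathfrak P}(\theta)$, and $\Omega$ is not even used, since SMGI membership is already the antecedent. Item (iv) is immediate, because a nontrivial protected core is literally clause (iv) of Definition~\ref{def:smgi}, provided you read $\mathcal E^\star$ as that core $\mathcal C\subseteq\mathcal L$ rather than as a subset of $\mathcal E$. The ordering-flip argument of Theorem~\ref{thm:strict-structural-inclusion} is unnecessary there; it concerns core-equivalence with a single-regime learner, not $\Pi_{\mathcal C}$ along a trajectory.

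The genuine gap is the necessity of (ii) and (iii). Your freezing argument is logically inverted. Replacing the other factors by the identity yields a \emph{different} realization: nothing in the hypotheses makes identity factors admissible or places the frozen system in $\mathsf{Sem}_{\mathfrak P}(\theta)$. Moreover, the witness $V$ and constant $B$ of the given $(\theta,T)\in\mathrm{SMGI}$ are certified only for trajectories of its own $T_{\theta,\tau}$, so the divergence of the modified system contradicts nothing. That is an indispensability statement in the style of Theorem~\ref{thm:smgi-minimality}, not a necessity statement. The same applies to your counter-based blow-up for (iii).

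The projection fallback fails for a more basic reason. Definition~\ref{def:smgi}(ii) only asks for \emph{some} nonnegative $V$ with $\sup_t\E[V(s_t)]\le BV(s_0)+B$, and nothing forces $V$ to see the parameter, memory, context or inference coordinates. For a counterexample, take:
parameters $w\in\mathbb N$ with $\Pi_{\mathrm{param}}(w)=w+1$ and the other factors trivial;
$S^*_\theta=S_\theta$, a fixed finite $\mathcal H$, and a single fixed evaluator;
$V\equiv 0$, or any $V$ independent of $w$.
This realization is in $\mathrm{SMGI}$, yet its parameter drift is unbounded. The minimality construction you want to adapt only exhibits failure for the particular witness $V(s)=s$.

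Likewise, clause (iii) bounds only $\mathrm{Comp}(\mathcal H(s))$, which need not register memory storage, context bandwidth or inference steps. An append-only memory that never enters a fixed finite $\mathcal H(s)$ is a counterexample, so ``$\mathrm{Comp}$ dominates each factor's contribution'' is an extra assumption.

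To close the gap you must add factor-sensitive hypotheses on the witnesses. For example, require $V(s)\ge\sum_j\phi_j(s^{(j)})$ with each $\phi_j$ coercive on the block of state touched by factor $j$, and a capacity functional monotone in a per-factor decomposition like \eqref{eq:structural-capacity}. Then projection genuinely gives $\sup_t\E[\phi_j(s^{(j)}_t)]\le BV(s_0)+B$ for each block. This bounds each block along trajectories; as you rightly note, it is not stability of each factor in isolation. Without such hypotheses, items (ii)--(iii) can only be read as requirements on what a sufficient certificate must contain, not as consequences of SMGI membership.

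Item (i) needs a similar explicit hypothesis. You assume every $T_{\theta,\tau}$ is the composite of the four factors on all of $S_\theta$. The paper, however, posits the factorization only for $\Pi$, and closure constrains $T_{\theta,\tau}$ only on $S^*_\theta$. Also, a composite can be a well-defined kernel even when an outer factor is undefined off the image of the inner one.
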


\begin{corollary}[Why no paradigm ``satisfies SMGI'' by default]
For each of the major families discussed above (GBML, metric/meta-embedding, memory-based meta-learning,
ICL/test-time adaptation, self-referential update systems),
SMGI membership is not a built-in property of the narrative mechanism.
It only arises for those realizations whose admissibility bundle $\Omega$
stabilizes the induced dynamics and preserves an evaluative core.
\end{corollary}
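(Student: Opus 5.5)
The statement to prove is the Corollary: no paradigm satisfies SMGI by default. I would derive it directly from Theorem~\ref{thm:transversal-meta-smgi}, read as a statement about what a program's mechanism guarantees, and combine it with the minimality constructions of Theorem~\ref{thm:smgi-minimality}. Precisely, I interpret ``SMGI membership is a built-in property of the narrative mechanism'' as the claim that every realization $(\theta,T)\in\mathrm{Graph}(\mathsf{Sem}_{\mathfrak P})$ lies in $\mathrm{SMGI}$. The corollary then follows from two facts, for each listed family $\mathfrak P$ (GBML, metric, memory-based, ICL/TTT, self-referential):
\begin{enumerate}
\item[(a)] this universal inclusion fails;
\item[(b)] every realization that does lie in $\mathrm{SMGI}$ carries an admissibility bundle $\Omega$ with the stated stabilizing and core-preserving properties.
\end{enumerate}

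\textbf{Step 1 (part (b)).} This is essentially the necessity direction of Theorem~\ref{thm:transversal-meta-smgi}. Each family was typed earlier via the meta-adaptation factorization:
\begin{itemize}
\item GBML activates $\Pi_{\mathrm{param}}$ through $(\Pi_{\mathrm{in}},\Pi_{\mathrm{out}})$;
\item metric methods activate $\Pi_{\mathrm{mem}}$ through stored prototypes;
\item memory-based systems activate $\Pi_{\mathrm{mem}}$;
\item ICL activates $\Pi_{\mathrm{ctx}}$ and $\Pi_{\mathrm{inf}}$, and TTT additionally activates $\Pi_{\mathrm{param}}$;
\item Gödel machines act on $\Pi$ itself through rewrite operators.
\end{itemize}
So the theorem applies to each family. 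If $(\theta,T)\in\mathrm{SMGI}$, its four necessary obligations hold, and I take $\Omega$ to be exactly the conjunction of the factor-level constraints from items 2--4 of that theorem. Item 2 gives that $\Omega$ stabilizes the induced dynamics, and item 4 gives that it preserves an evaluative core.

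\textbf{Step 2 (part (a)).} For each family I would exhibit one realization in $\mathrm{Graph}(\mathsf{Sem}_{\mathfrak P})$ that violates SMGI. I would reuse the counterexamples in the proof of Theorem~\ref{thm:smgi-minimality}, instantiated on the family's active factor:
\begin{itemize}
\item For $\Pi_{\mathrm{param}}$ (GBML, TTT), take an unregularized update whose drift is $s\mapsto s+1$, as in the stability counterexample, so (ii) fails.
\item For $\Pi_{\mathrm{mem}}$ and $\Pi_{\mathrm{ctx}}$, take a growing memory or context that indexes classes $\mathcal H_n$ of VC dimension $n$, so (iii) fails.
\item For self-referential rewrites, take a rewrite that replaces $\ell_1$ by $\ell_2$ and flips the risk ordering on the core, as in the (iv) counterexample, so (iv) fails.
\end{itemize}
Since the narrative mechanism places no constraint excluding these instances, each is an admissible member of $\Theta(\mathfrak P)$. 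Hence membership in $\mathrm{SMGI}$ is not implied by the mechanism alone.

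\textbf{Main obstacle.} The hard part is justifying in Step 2 that each degenerate instance really belongs to $\mathsf{Sem}_{\mathfrak P}$. For a fixed family, one could argue that the counterexample dynamics is not what the program ``advocates''. I would handle this through the non-canonicity of $\Phi_{\mathfrak P}$ noted in Sec.~\ref{sec:smgi-instantiation-map}: different implementations of the same program may induce different $\theta$. The family's defining description, such as the MAML update rule or a memory write operator, fixes only the typing of a factor, not a bound on it. Any well-typed choice of step size, memory growth, or rewrite target is therefore a legitimate implementation, and that is enough to realize the counterexamples. If this argument is too informal, I would weaken the corollary's claim to a conditional form: for each family there is a realization outside $\mathrm{SMGI}$, stated and proved relative to the explicit implementation class.
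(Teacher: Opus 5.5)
\textbf{Part (b) is fine; the gap is in Step 2, where you go beyond the paper.}

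Part (b) is essentially all the paper offers. The corollary is stated without a separate proof, as a direct reading of Theorem~\ref{thm:transversal-meta-smgi}. Its necessity items 2--4 impose constraints on top of a factorization declared ``purely typological'', and its converse clause ties membership to an enforcing bundle $\Omega$. In fact (b) is immediate from Definition~\ref{def:smgi}, since ``stabilizes the induced dynamics and preserves an evaluative core'' just restates clauses (ii) and (iv).

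\textbf{Why Step 2 fails as written.} Membership $(\theta,T)\in\mathrm{SMGI}$ is existential in the whole witness tuple $(S^*_\theta,\mathcal T,T_{\theta,\tau},V,B,\mathrm{Comp},\mathcal C)$. To place a realization outside $\mathrm{SMGI}$ you must show that \emph{every} choice fails. The constructions of Theorem~\ref{thm:smgi-minimality} that you reuse exhibit only one failing witness each. Under the literal definition this is fatal:
\begin{itemize}
\item For the drift $s\mapsto s+1$, the witness $V\equiv 0$ gives $\sup_t\E[V(s_t)]=0\le BV(s_0)+B$. The proof you cite itself uses $V\equiv 0$ to make (ii) hold ``vacuously'' in its closure example. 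So your GBML/TTT instance, with $S^*_\theta=\mathbb N$ and the finite class and fixed core of that construction, actually lies in $\mathrm{SMGI}$.
\item Nothing in the definition fixes $\mathrm{Comp}$, so unbounded VC dimension does not by itself violate (iii).
\item An order flip on $h^a,h^b$ does not exclude a nontrivial core on which $\ell_1$ and $\ell_2$ agree, e.g.\ both ranking a third hypothesis last.
\item Since $\mathcal T$ is also chosen existentially, $\mathcal T=\varnothing$ with a singleton $S^*_\theta$ makes all four clauses hold for essentially any realization.
\end{itemize}
So (a), as you formalize it, cannot be established from Definition~\ref{def:smgi} as written. Your stated ``main obstacle'' is not the real one: membership of the degenerate instance in $\mathsf{Sem}_{\mathfrak P}$ is the easy part, and non-canonicity of $\Phi_{\mathfrak P}$ does handle it.

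\textbf{How to repair it.} Make the witnesses part of the realization's declared data, or restrict them to non-degenerate classes, and then check failure against \emph{all} admissible witnesses. The following conventions suffice:
\begin{itemize}
\item $\mathcal T$ is the program's claimed class and contains the identity.
\item $S^*_\theta$ contains the designated initial states and is forward-invariant.
\item $V$ is proper with respect to a declared metric on $S_\theta$. Then along $s\mapsto s+1$, every forward-invariant $S^*_\theta$ contains an orbit on which $V(s_t)\to\infty$.
\item $\mathrm{Comp}$ is fixed, e.g.\ $\mathrm{VC}(\mathcal H(s))$. Then strictly growing memory makes the supremum infinite on every forward-invariant set.
\item ``Nontrivial core'' is defined so that a strict ordering flip on designated hypotheses violates $\Pi_{\mathcal C}$-invariance.
\end{itemize}
The paper gestures at this reading: clause (i) refers to the changes ``claimed by the system'', and the minimal-checkability pointer asks for declared witnesses. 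With these conventions your per-family instances do establish (a), which genuinely strengthens the paper's argument. Without them, fall back on the paper's weaker reading, in which ``not built in'' means only that the obligations of Theorem~\ref{thm:transversal-meta-smgi} are not implied by the typological factorization.
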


\begin{remark}[Direct link to the SMGI meta-structure]
The theorem makes explicit the role of SMGI as a \emph{coordination layer}:
it does not replace any paradigm’s internal learning principle,
but imposes cross-component admissibility constraints ensuring that
meta-updates, memory stratification, and evaluators can coexist in a single persistent system.
\end{remark}

%%%%%%%%%%%%%%%%%%%%%%%%%%%%%%%%%%%%%%%%%%%%%%%%%%%%%%%%%%%%%%%%%%%%%%%%
\subsubsection{C. Structural Constraints on Environment and Interaction}
%%%%%%%%%%%%%%%%%%%%%%%%%%%%%%%%%%%%%%%%%%%%%%%%%%%%%%%%%%%%%%%%%%%%%%%%

\paragraph{Causal and Intervention Models.}

Structural causal models \citep{pearl2009, spirtes2000}
constrain admissible transformations and representation.
They provide explicit transformation semantics but do not
by themselves define persistent multi-component learning dynamics.

Invariant causal prediction \citep{peters2016causal}
formalizes robustness of structural relations across environments,
providing a statistical interpretation of admissible transformation invariance.

\begin{definition}[Causal structural embedding]
Let $\mathfrak P$ be a program class based on structural causal models.
A structural embedding into $\Theta$ consists of:
\begin{itemize}
\item $r$ encoding structural equations and intervention semantics;
\item $\mathcal H$ restricted to hypothesis classes compatible with causal graph constraints;
\item $\mathcal L$ including counterfactual or interventional evaluation functionals;
\item $\Pi$ preserving acyclicity or admissible structural updates.
\end{itemize}
\end{definition}

\begin{definition}[Intervention-typed admissible transformations]
Let $\mathcal T$ denote the admissible transformation class in SMGI.
In causal frameworks, we call $\tau\in\mathcal T$ \emph{intervention-typed}
if it admits an explicit semantics as an intervention (e.g., do-operator or
equivalent structural manipulation) on a structural causal model, so that
causal queries and invariance claims remain well-typed under $\tau$.
\end{definition}

\begin{proposition}[Causal formulation of SMGI-(i) closure]
If admissible regime shifts are restricted to intervention-typed transformations,
then requirement (i) (closure) can be stated as preservation of the causal
semantics under $\mathcal T$ (invariance of the relevant structural equations
and interventional predictions under admissible transformations).
\end{proposition}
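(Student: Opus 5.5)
The plan is to unfold requirement (i) of Definition~\ref{def:smgi} in the causal embedding and show that, once $\mathcal T$ is restricted to intervention-typed transformations, the abstract invariance $T_{\theta,\tau}(S^*_\theta)\subseteq S^*_\theta$ becomes equivalent to preservation of the causal semantics. I will proceed in three steps.

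First, I fix the admissible set. I will take $S^*_\theta$ to be the set of states $s=(r,h,\pi,m)$ in which three conditions hold: $r$ encodes a structural causal model $\mathcal S$ on a fixed graph $G$; $h$ lies in the graph-compatible class $\mathcal H$ of the causal structural embedding; and the evaluator restricted to the protected core is an interventional or counterfactual functional of $\mathcal S$. Well-typedness of $S^*_\theta$ is then the statement that every interventional query $P_{\mathcal S}(Y\mid do(X=x))$ used by $\mathcal L$ is defined on $s$.

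Second, I use the intervention-typed hypothesis. Each $\tau\in\mathcal T$ acts as a do-operator. It replaces the equations of a designated subset of nodes and leaves the remaining mechanisms unchanged, which is the modularity or autonomy assumption. The update operator $\Pi$ preserves acyclicity by the causal embedding definition. Hence $T_{\theta,\tau}$ maps an SCM state to another SCM state on a mutilated graph $G_\tau\subseteq G$, and the result is again in $S^*_\theta$. This gives (i) in the sense of Definition~\ref{def:smgi}. Conversely, if closure holds on this $S^*_\theta$, then every transformed state still supports the same causal queries. Invariance of the non-intervened structural equations, and therefore of the interventional predictions they determine, is then exactly the content of $T_{\theta,\tau}(S^*_\theta)\subseteq S^*_\theta$. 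This is the "can be stated as" reformulation. Where invariance of predictors across environments is required, I will invoke the invariant causal prediction reading \citep{peters2016causal}: the conditional of the target given its causal parents is preserved under interventions not on the target.

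Third, I connect this to the representational side. Intervention-typed $\tau$ induce an action $\Gamma(\tau)$ on the representation, namely graph mutilation. This gives the alignment condition~\eqref{eq:repr-equivariance-structural}, so Proposition~\ref{prop:jepa-alignment} applies and certifies closure at the level of $r$.

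The main obstacle is the second step. Interventions on the target variable, or on nodes that feed the protected evaluator, can change the interventional predictions themselves. So "invariance of the relevant structural equations" has to be scoped explicitly. My first attempt will be to restrict $\mathcal T$ to interventions on non-target, non-core nodes. If that turns out to be too restrictive, I will instead define $S^*_\theta$ relative to the family of mutilated models $\{\mathcal S_{do(\cdot)}\}$, so that closure asserts only that this family is preserved.
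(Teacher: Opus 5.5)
The decisive gap is the converse half of your second step. Your $S^*_\theta$ is a well-typedness set. It asks only that $r$ encode \emph{some} SCM on $G$, that $h$ be graph-compatible, and that the core evaluator be an interventional functional. Closure of such a set says that transformed states remain causally typed. It says nothing about whether $s$ and its image share the non-intervened mechanisms.

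For instance, on $G: X_1\to X_2$, take a kernel that replaces $X_2=f(X_1)+U_2$ by $X_2=g(X_1)+U_2$ with $g\neq f$. It leaves your $S^*_\theta$ invariant while changing $P(X_2\mid do(X_1=x))$. So the claim that invariance of the non-intervened equations ``is exactly the content of $T_{\theta,\tau}(S^*_\theta)\subseteq S^*_\theta$'' is false for your set. Invariance is a relation between a state and its image, and closure can only express it if it is built into the set.

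The missing move is to take $S^*_\theta$ to be a level set of the causal semantics. Fix the reference mechanisms of the relevant non-intervened nodes $J$ and the protected interventional queries, call this data $\mathsf{C}_J^\star$, and put $S^*_\theta=\{s:\mathsf{C}_J(s)=\mathsf{C}_J^\star\}$; it is nonempty once $\mathsf{C}_J^\star$ is read off an initial state. Then $T_{\theta,\tau}(S^*_\theta)\subseteq S^*_\theta$ for all $\tau\in\mathcal T$ is, by definition, preservation of that causal semantics. Intervention-typedness is used only to ensure that $\mathsf{C}_J$ stays well-defined after $\tau$. Your scoping to non-target, non-core nodes is the right choice of $J$, and it belongs in the definition rather than in a fallback. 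This definitional re-expression is all the statement claims. The paper gives no proof and treats the proposition exactly as such a reformulation; its verification list simply records that closure is enforced by restricting $\mathcal T$ to intervention-typed transformations.

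The rest of your plan tries to prove more than is asked, and it does not go through. The proposition does not assert that closure \emph{holds} for intervention-typed $\mathcal T$. Your forward argument also conflates $\tau$, which acts on $\mathcal E$, with $T_{\theta,\tau}$, which is the learner's induced dynamics on $S_\theta$ and also rewrites $r$ from data. ``$\Pi$ preserves acyclicity'' gives at most that $r$ remains an SCM. It does not give that $r$ becomes the mutilated model or keeps its other equations. Even granting that, a state on a mutilated graph $G_\tau\neq G$ is not in your $S^*_\theta$, which fixes $G$.

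Step 3 should be dropped. Proposition~\ref{prop:jepa-alignment} requires $\Gamma(\tau)$ to be an automorphism or the identity, and graph mutilation is not invertible. Moreover, an intervention on the data-generating process does not define a pointwise action $z\mapsto\tau\cdot z$ from which $r(\tau\cdot z)=\Gamma(\tau)(r(z))$ would follow.
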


\begin{lemma}[Factorization preservation under admissible transformations]
Let $\mathcal G$ denote the directed acyclic graph associated with a structural causal model embedded in $r$.
If admissible transformations $\tau \in \mathcal T$ preserve the Markov factorization induced by $\mathcal G$,
then closure (i) reduces to invariance of the induced conditional independence structure.
\end{lemma}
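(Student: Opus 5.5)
The proof is a reduction: the closure obligation (i) is an invariance statement about the realization semantics. Under the causal embedding, the relevant semantic content of a state in $S^*_\theta$ is carried by the Markov factorization induced by $\mathcal G$. I will therefore show that "$T_{\theta,\tau}(S^*_\theta)\subseteq S^*_\theta$ for all $\tau\in\mathcal T$" is equivalent to "each $\tau$ preserves the set of conditional independences encoded by $\mathcal G$". The plan has three steps.

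\emph{Step 1: pin down $S^*_\theta$ causally.} Take $S^*_\theta$ to be the set of configurations $s=(r,h,\pi,m)$ whose represented joint distribution over the variables of the structural causal model factorizes as $\prod_i p(x_i\mid \mathrm{pa}_{\mathcal G}(x_i))$. By the Causal structural embedding definition, $r$ encodes the structural equations and $\mathcal H$ is restricted to $\mathcal G$-compatible hypotheses, so this set is well-typed and nonempty: it contains the true model.

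\emph{Step 2: invoke the standard equivalence.} For a fixed DAG, a distribution factorizes according to $\mathcal G$ if and only if it satisfies the local (equivalently, global $d$-separation) Markov property. This is the classical theorem of Lauritzen and Pearl, valid for positive densities or via the global property in general. Hence membership in $S^*_\theta$ is characterized exactly by the conditional independence structure $\mathcal I(\mathcal G)$.

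\emph{Step 3: transport through $\tau$.} By hypothesis, each admissible $\tau$ maps $\mathcal G$-factorizing environments to $\mathcal G$-factorizing environments. By the Intervention-typed transformations definition and the preceding proposition, the induced update $T_{\theta,\tau}$ acts on representations through $\tau$. Consequently $T_{\theta,\tau}(S^*_\theta)\subseteq S^*_\theta$ holds if and only if $\mathcal I(\mathcal G)$ is preserved. This yields the claimed reduction of (i) to invariance of the conditional independence structure.

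The main obstacle is Step 3. It requires that the learning and update part of $T_{\theta,\tau}$ (the operators $\Pi$ and $\mathcal M$) does not by itself leave the $\mathcal G$-compatible class, because preservation by $\tau$ alone does not control the updates. I would handle this with the embedding clause that $\Pi$ preserves acyclicity and admissible structural updates, read as "$\Pi$ maps $\mathcal G$-compatible hypotheses to $\mathcal G$-compatible ones". A second technical point is interventions: a do-intervention replaces the factor of the intervened node, so its conditional independences are those of the mutilated graph $\mathcal G_{\bar X}$. I would state that the hypothesis "preserves the Markov factorization" is meant as preserving the factorization structure relative to the declared graph, including its mutilations, and note that positivity may be needed for the equivalence in Step 2.
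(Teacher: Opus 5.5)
The paper states this lemma without proof, so there is no argument of its own to match. Judged on its merits, your Step~2 is exactly the mathematics the statement relies on: for a DAG, recursive factorization, the local Markov property and the global ($d$-separation) Markov property are equivalent. One correction: for DAGs this equivalence holds for every distribution with a density with respect to a product measure (Lauritzen, Dawid, Larsen and Leimer). Positivity is needed only in the undirected (Hammersley--Clifford) case, so you need not assume it here.

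The genuine gap is in Step~3, and it comes from targeting the wrong equivalence.

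\textbf{Why your announced equivalence cannot be proved.} By Step~2, your right-hand side (``each $\tau$ preserves $\mathcal I(\mathcal G)$'') is just a restatement of the lemma's hypothesis. So the equivalence you claim amounts to saying that closure \emph{always} holds under that hypothesis, and that does not follow:
\begin{itemize}
\item The hypothesis constrains $\tau$ acting on environments, whereas closure constrains $T_{\theta,\tau}$ acting on states $(r,h,\pi,m)$.
\item Neither the intervention-typed definition nor the causal-closure proposition says that $T_{\theta,\tau}$ ``acts on representations through $\tau$''.
\item Your patch, reading ``$\Pi$ preserves acyclicity'' as ``$\Pi$ preserves $\mathcal G$-compatibility'', is an additional and strictly stronger hypothesis. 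An update to an acyclic graph with more edges than $\mathcal G$ preserves acyclicity but generically destroys independences in $\mathcal I(\mathcal G)$.
\end{itemize}

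\textbf{What Steps~1--2 actually establish.} With no further assumption, they give the reduction the lemma asserts. For your causally defined witness, $T_{\theta,\tau}(S^*_\theta)\subseteq S^*_\theta$ holds iff every state produced by $T_{\theta,\tau}$ from a state whose represented distribution satisfies $\mathcal I(\mathcal G)$ again represents a distribution satisfying $\mathcal I(\mathcal G)$. That is, closure becomes invariance of the conditional-independence structure \emph{along the induced dynamics}. Since $S^*_\theta$ is existentially quantified in Definition~\ref{def:smgi}, this makes that invariance sufficient for (i), not equivalent to it; sufficiency is the sensible reading of ``reduces to''. The $\tau$-hypothesis only ensures that the data-generating side stays in the $\mathcal G$-Markov class under every $\tau$, which is what makes this $\tau$-independent witness the natural one. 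If you want closure itself to follow, state explicitly as an assumption that the update and memory operators map $\mathcal G$-compatible states to $\mathcal G$-compatible ones.

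\textbf{Interventions.} Your worry points the wrong way. A hard intervention (or a soft one that does not enlarge parent sets) yields a distribution Markov to $\mathcal G_{\bar X}$. Deleting edges only adds $d$-separations, so $\mathcal I(\mathcal G)\subseteq\mathcal I(\mathcal G_{\bar X})$, and the post-intervention distribution still factorizes according to $\mathcal G$. No amendment of the hypothesis is needed. What does change is the exact set of independences that hold. So ``invariance of the induced conditional independence structure'' must be read as continued satisfaction of $\mathcal I(\mathcal G)$, not equality of CI sets; equality would fail under interventions and, independently, under non-faithfulness.
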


\paragraph{Verification relative to SMGI obligations (causal axis).}
\begin{itemize}
\item (i) Closure: enforced by restricting $\mathcal T$ to intervention-typed transformations.
\item (ii) Stability: requires bounds on update-induced drift when causal models are learned online.
\item (iii) Capacity: depends on complexity control of the causal hypothesis class (graphs/structural equations).
\item (iv) Evaluative invariance: requires separating causal goals/constraints in $\mathcal L$ from regime-contingent proxies.
\end{itemize}

\paragraph{Epistemic and Theory-of-Mind Foundations.}

McCarthy's epistemological program
\citep{mccarthy1959programs,mccarthyhayes1969,mccarthy1979ascribing}
treats belief and context as first-class objects.
Epistemic logic \citep{fagin1995reasoning}
and BDI architectures \citep{wooldridge2000reasoning}
formalize multi-agent reasoning.

Under $\Phi_{\mathrm{ToM}}$,
$r$ constructs belief states,
$\mathcal M$ stores epistemic traces,
$\mathcal E$ becomes multi-agent.
SMGI obligations require bounded recursive belief depth,
stable belief-update operators,
and closure under social configuration transformations.

\begin{proposition}[Bounded epistemic recursion as stability condition]
Let $B_i$ denote the belief operator for agent $i$.
If recursive belief nesting depth is uniformly bounded,
and belief-update operators are contraction mappings in a suitable metric space,
then stability (ii) can be formulated as bounded epistemic divergence.
\end{proposition}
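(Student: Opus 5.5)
The plan is to reduce the statement to the Lyapunov/contraction machinery of Sec.~\ref{sec:ff-contraction} and Sec.~\ref{sec:ff-lyapunov}. First I will make the epistemic component of the state explicit. Let $\mathcal B$ be the space of (finite-depth) belief profiles $b=(b_1,\dots,b_N)$, where each $b_i$ is a nested belief structure of depth at most $D<\infty$, the assumed uniform bound. I will equip $\mathcal B$ with a complete metric $d_{\mathcal B}$, for instance a sup over agents and nesting levels of a bounded probability metric (total variation or Wasserstein on a compact support), with level $k$ weighted by $w_k>0$. Because $D$ is finite, only finitely many levels appear, so $d_{\mathcal B}$ is a finite weighted maximum of complete metrics and is therefore complete. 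This is the only place where bounded depth is used structurally: without it the weighted sum could diverge, or completeness could fail.

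Second, I will take the belief-update map $G_z:=B(\cdot,z):\mathcal B\to\mathcal B$ for each interaction token $z$, assumed to satisfy $d_{\mathcal B}(G_z b,G_z b')\le\gamma\,d_{\mathcal B}(b,b')$ with $\gamma\in(0,1)$ uniformly in $z$. I will then define \emph{epistemic divergence} as $V(s):=d_{\mathcal B}(b(s),b^\dagger)$, or, when no common fixed point exists, as the pairwise divergence $d_{\mathcal B}(b_t,b'_t)$ between two trajectories driven by the same stream. The key one-step estimate is
\[
d_{\mathcal B}(b_{t+1},b^\dagger_{z})\le\gamma\,d_{\mathcal B}(b_t,b^\dagger)+d_{\mathcal B}(G_z b^\dagger,b^\dagger).
\]
With $\beta:=\sup_z d_{\mathcal B}(G_zb^\dagger,b^\dagger)<\infty$, which is finite because $\mathcal B$ is bounded at finite depth, this gives $V(s_{t+1})\le(1-\alpha)V(s_t)+\beta$ with $\alpha=1-\gamma$. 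That is exactly the Foster--Lyapunov drift condition of Sec.~\ref{sec:ff-lyapunov}. Taking expectations and iterating yields $\sup_t\E[V(s_t)]\le V(s_0)+\beta/\alpha$, as in the proof of Theorem~\ref{thm:unified-structural-guarantee}. In turn, this gives the non-explosion bound of Definition~\ref{def:smgi}(ii) with $B=\max(1,\beta/\alpha)$. The pairwise version gives the stronger statement $d_{\mathcal B}(b_t,b'_t)\le\gamma^t d_{\mathcal B}(b_0,b'_0)$, namely exponentially vanishing divergence.

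Third, I will lift the result from $\mathcal B$ to $S_\theta$. I will take $V$ to depend on $s$ only through its belief component stored in $\mathcal M(\theta)$ and $r$. I must check that the remaining components do not feed back into the belief update. If they do, I will require uniform contraction in $b$ for every fixed value of the other components, so the estimate above holds conditionally on $\mathcal F_t$.

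I expect the main obstacle to be the interplay between nesting and contraction. An update at level $k$ depends on beliefs at level $k-1$, and errors may be amplified as they propagate upward. Rather than assuming a global contraction outright, I would first try a triangular argument. If each level-$k$ update is $\gamma_k$-contractive in its own argument and $L_k$-Lipschitz in level $k-1$, then choosing geometrically decreasing weights $w_k$ (possible only because $D<\infty$) makes the weighted max-metric contractive with some $\gamma<1$. This is the standard weighted-norm trick for triangular systems, and it shows why bounded recursion depth is genuinely needed for the stability formulation.
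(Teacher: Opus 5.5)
The paper gives no proof of this proposition. It appears only as an informal remark in the epistemic/Theory-of-Mind part of the positioning section, so there is no argument of the authors' to compare against. Your proposal is a correct way to make the claim precise, and it uses exactly the tools the paper provides for this purpose: contraction as a checkable stability condition (Sec.~\ref{sec:ff-contraction}), the Foster--Lyapunov drift of Sec.~\ref{sec:ff-lyapunov}, and the iteration $\sup_t\E[V(s_t)]\le V(s_0)+\beta/\alpha$ from the proof of Theorem~\ref{thm:unified-structural-guarantee}. Note also that Definition~\ref{def:smgi}(ii) explicitly admits contraction as a certified stability notion. So once the belief update is a uniform contraction, the proposition is almost immediate, and your drift reduction is a concrete version of that observation.

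Some things to tighten:
\begin{itemize}
\item \textbf{The one-step estimate.} It should read $d_{\mathcal B}(b_{t+1},b^\dagger)\le\gamma\,d_{\mathcal B}(b_t,b^\dagger)+d_{\mathcal B}(G_z b^\dagger,b^\dagger)$, which is just the triangle inequality through $G_z b^\dagger$. With $b^\dagger_z$ on the left, read as the fixed point of $G_z$, the inequality is false in general: the second term then carries a factor $\gamma/(1-\gamma)$. In any case it would not be a statement about your $V$.
\item \textbf{What the drift argument actually proves.} You justify $\beta<\infty$ by the boundedness of $\mathcal B$. The same boundedness gives $V\le\mathrm{diam}(\mathcal B)$ pointwise, so the non-explosion bound of Definition~\ref{def:smgi}(ii) for this $V$ holds without contraction at all. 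For that bound, the reference-point drift argument adds nothing beyond the boundedness of your metric. What contraction genuinely contributes is the rate $\alpha=1-\gamma$ and, above all, your incremental estimate $d_{\mathcal B}(b_t,b'_t)\le\gamma^t d_{\mathcal B}(b_0,b'_0)$ for trajectories driven by the same stream. That is the meaningful reading of ``bounded epistemic divergence,'' so make it the primary witness.
\item \textbf{Uniformity in $\tau$.} Definition~\ref{def:smgi}(ii) requires a single constant $B$ for all $\tau\in\mathcal T$. The contraction modulus must therefore be uniform in $\tau$, not only in $z$, just as you already demand uniformity over the non-belief components.
\item \textbf{The role of finite depth.} Your comments are inconsistent (``the only place'' versus ``possible only because $D<\infty$'') and overstated. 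With bounded level metrics, constants $\gamma_k\le\bar\gamma<1$ and $L_k\le\bar L$, and geometrically decaying weights, both completeness and the weighted-norm trick survive infinite depth. Finiteness only lets you take a maximum over finitely many $\gamma_k,L_k$ without assuming uniformity across levels.
\end{itemize}
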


\paragraph{Active Inference and Free-Energy Minimization.}

Active inference frameworks \citep{friston2010, friston2017activeinference}
model agents as minimizing variational free energy under a generative model.
Under $\Phi_{\mathrm{AI}}$:
\begin{itemize}
\item $r$ encodes a generative world model;
\item $\mathcal M$ stores posterior beliefs;
\item $\Pi$ corresponds to variational updates;
\item $\mathcal E$ reflects expected free-energy minimization.
\end{itemize}

\paragraph{Verification relative to SMGI obligations.}

\begin{itemize}
\item (i) Closure: holds only if structural equations or belief-update rules remain invariant under admissible regime transformations.
\item (ii) Stability: depends on bounded belief revision or variational update convergence.
\item (iii) Capacity: controlled via structural graph complexity or generative model class.
\item (iv) Evaluative invariance: requires preservation of interventional or epistemic semantics across regimes.
\end{itemize}

Causal and epistemic paradigms primarily constrain admissible transformation classes $\mathcal T$
and refine the internal structure of $r$ through structural equations or belief operators.
However, they do not by themselves impose cross-regime dynamical admissibility
on $(\theta,T_\theta)$.
SMGI extends these paradigms by coupling representational invariance
with explicit stability (ii) and evaluator preservation (iv),
thereby elevating transformation semantics into a persistent meta-model constraint.
%%%%%%%%%%%%%%%%%%%%%%%%%%%%%%%%%%%%%%%%%%%%%%%%%%%%%%%%%%%%%%%%%%%%%%%%
\subsubsection{D. Cognitive and Hybrid Architectures}
%%%%%%%%%%%%%%%%%%%%%%%%%%%%%%%%%%%%%%%%%%%%%%%%%%%%%%%%%%%%%%%%%%%%%%%%
%la stratification mémoire est un objet de type, pas juste un “détail d’archi”. 

These architectures are historically important because they make
\emph{memory stratification and control} explicit design constraints rather than emergent side effects:
ACT-R enforces separations between declarative/procedural stores and buffers \citep{anderson1996actr,anderson2004integrated},
SOAR formalizes semantic/episodic memory with explicit learning mechanisms \citep{laird2012soar},
LIDA operationalizes global-workspace-driven cognitive cycles with distinct memory systems \citep{franklin2011lida},
and NARS treats reasoning under insufficient knowledge/resources with explicit memory and control constructs \citep{wang2006nal}.

Under $\Phi_{\mathrm{HYB}}$,
these systems induce structured meta-models
in which:

\begin{itemize}
\item $r$ includes typed symbolic or sub-symbolic representational layers;
\item $\mathcal M$ is stratified into working memory, episodic memory,
procedural memory, and long-term knowledge stores;
\item $\Pi$ can encode inductive biases and admissibility constraints that are realized in $T_\theta$ via rule-based, production-based,
or hybrid update operators.
\end{itemize}

\begin{definition}[Memory stratification]
A structural meta-model $\theta=(r,\mathcal H,\Pi,\mathcal L,\mathcal E,\mathcal M)$
admits an explicit \emph{memory stratification} if
$\mathcal M$ is given as a typed tuple of strata (e.g., working/episodic/semantic/procedural)
and $\Pi$ contains explicit sub-operators governing admissible read/write/consolidation/forgetting
maps between these strata.
\end{definition}

\begin{lemma}[Memory stratification is a structural constraint, not an implementation detail]
If $\mathcal M$ is stratified and $\Pi$ contains typed inter-stratum operators,
then admissibility under regime shifts cannot be stated solely at the level of task loss $\ell$:
it must constrain (a) interference across strata, (b) consolidation/forgetting operators,
and (c) evaluator access to protected memory subspaces.
Hence stability (ii) and evaluative invariance (iv) are inherently coupled to the stratification design.
\end{lemma}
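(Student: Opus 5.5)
The plan is to argue by contrapositive: assume admissibility under regime shifts is stated \emph{solely} in terms of the task loss $\ell$, and exhibit, for each of (a), (b), (c), a pair of realizations that are indistinguishable at the level of $\ell$ but differ on stability (ii) or evaluative invariance (iv). This is the same style as the constructions in the proof of Theorem~\ref{thm:smgi-minimality}. Concretely, I fix a stratified memory $\mathcal M=\mathcal M_1\times\mathcal M_2$ (say working and long-term strata) with typed inter-stratum operators in $\Pi$. I also fix a regime switch $\tau\in\mathcal T$, and take the active loss $\ell_t$ to depend on the state only through a readout of $\mathcal M_1$. Any admissibility predicate that is a function of $\ell$ alone is then blind to how $\mathcal M_2$ evolves and to what the operators between strata do.

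\emph{Step (a), interference.} I build two realizations with identical $\ell$-trajectories on $\mathcal M_1$. In the first, the cross-stratum write $\mathcal M_1\to\mathcal M_2$ is non-expansive in the sense of Sec.~\ref{sec:ff-contraction}. In the second, it amplifies, for example $m^{(2)}_{t+1}=m^{(2)}_t+\|m^{(1)}_t\|+1$. The second realization reproduces the divergence pattern used for the failure of (ii) in Theorem~\ref{thm:smgi-minimality}, with a witness $V$ including $\|m^{(2)}\|$. So no $\ell$-only criterion separates a stable realization from an unstable one, and interference has to be constrained directly.

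\emph{Step (b), consolidation and forgetting.} I let the forgetting operator $F$ of Sec.~\ref{subsec:memory-forgetting-compression} act on $\mathcal M_2$. One version is the identity; the other erases the information needed to evaluate the protected constraint $\Phi$ on $\mathcal E_{\mathrm{audit}}$. After the erasure, the certified update \eqref{eq:certupdate} no longer has a well-defined feasible set $\mathcal K_\Phi$. Yet $\ell$, which reads only $\mathcal M_1$, is unchanged, so (iv) fails while every $\ell$-level quantity is preserved.

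\emph{Step (c), evaluator access.} I place the normative core in a memory subspace that the update operator can write to. I then reuse the ordering-flip construction from the (iv) case of Theorem~\ref{thm:smgi-minimality} and the strictness part of Theorem~\ref{thm:strict-structural-inclusion}, this time implemented through a write to that subspace rather than through an evaluator swap. This shows that protection of access is an independent requirement.

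The coupling conclusion follows because the same design choices, namely which operators are non-expansive, which strata $F$ may compress, and which subspaces are write-protected, appear in both the Lyapunov witness for (ii) and the projection $\Pi_{\mathcal C}$ for (iv). The main obstacle is making ``stated solely at the level of $\ell$'' precise enough for the indistinguishability argument to bite. I would formalize it as any predicate that factors through the map $(\theta,T_\theta)\mapsto(\ell_t)_{t\ge0}$, and check that each pair of constructions has equal images under this map. Step (b) is the delicate case, since I must make sure that erasing the audit information does not itself alter $\ell_t$.
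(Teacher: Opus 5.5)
The paper states this lemma without proof, so your argument has to stand on its own. The indistinguishability template is a sensible way to make ``cannot be stated solely at the level of $\ell$'' precise. However, it breaks in steps (b) and (c), and the problem goes beyond the ``delicate'' point you flag.

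Under your formalization, requirement (iv) as formalized in Definition~\ref{def:smgi} is itself an $\ell$-level predicate. It asks that $\Pi_{\mathcal C}(\ell_{t+1})=\Pi_{\mathcal C}(\ell_t)$ along admissible trajectories for some core $\mathcal C\subseteq\mathcal L$. So it factors through $(\theta,T_\theta)\mapsto(\ell_t)_{t\ge0}$ together with $\mathcal L$, which your paired realizations share. Hence no pair with equal $\ell$-images can differ on (iv), and your own constructions show this.

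In (c), realizing the ordering flip by a write into the core subspace changes how the active evaluator ranks $h^a$ and $h^b$. The protected and unprotected realizations therefore have different $\ell$-images, and an $\ell$-level check already separates them. What (c) actually proves is a necessity statement: if (iv) holds, writes into that subspace must be restricted. It does not prove that the restriction is invisible at the level of $\ell$.

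In (b) the two halves of your argument are incompatible. By definition $\ell_{t+1}$ is the output of $\mathrm{CertUpdate}_\Phi$ in \eqref{eq:certupdate}. If erasure leaves $\mathcal K_\Phi$ ill-defined, the $\ell$-trajectory changes or becomes undefined. If instead $(\ell_t)$ really is unchanged (your evaluator reads only $\mathcal M_1$), then the core projection is untouched and (iv) holds. All you have destroyed is the ability to certify future evaluator updates.

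To repair (b) and (c), you need an $\ell$-level map that forgets the normative core. Record only the task component (e.g.\ $L_{\mathrm{task}}$ in Example~\ref{ex:two-regime-eval-update}, or the realized regime losses). Then read (iv) through the $\Phi$/$\mathcal K_\Phi$ formulation of Sec.~\ref{sec:reflective-equilibrium}, with $\Phi$ and its audit evidence stored in a normative memory stratum. State explicitly that this is the reading of ``task loss'' under which the lemma holds, since with the literal Definition~\ref{def:smgi}(iv) evaluative invariance is an $\ell$-level condition. Alternatively, prove (b) through stability, using an expansive consolidation/forgetting operator on a stratum that $\ell$ never reads, and keep (c) as the necessity statement.

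Step (a), and any stability-based variant of (b), needs one more fix. Requirement (ii) is existential in $V$ and is satisfied by $V\equiv0$; the paper uses exactly this in the first case of Theorem~\ref{thm:smgi-minimality}. So divergence of one chosen witness containing $\|m^{(2)}\|$ does not violate (ii). You must impose a non-vacuous stability notion, e.g.\ $V$ coercive on $S_\theta$, or $V\ge\|m^{(2)}\|$. You should also check that your stable realization satisfies (i), (iii) and (iv), so that the pair differs in SMGI membership rather than in a single obligation.
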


\begin{proposition}[Hybrid-architectural embedding]
If a cognitive architecture specifies explicit typing of its memory strata
and admissible update operators between modules,
then it induces a subclass $\Theta(\mathfrak P)\subseteq\Theta$
in which memory stratification is structurally explicit.
\end{proposition}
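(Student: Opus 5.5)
The statement is essentially definitional, so the plan is to build an explicit witness in $\Theta(\mathfrak P)$ and check membership against the requirements defining that subclass.

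\emph{Step 1: fix the interpretation map.} Fix the interpretation map $\Phi_{\mathrm{HYB}}:\mathrm{Impl}(\mathfrak P)\to\Theta$ of Sec.~\ref{sec:smgi-instantiation-map} for the given architecture. Take the declared memory strata $(\mathcal M_1,\dots,\mathcal M_p)$, for instance working, episodic, procedural and long-term stores. Set
\[
\mathcal M(\theta):=\mathcal M_1\times\cdots\times\mathcal M_p
\]
as a typed tuple; a product of measurable (or complete metric) spaces is again such a space, so this stays compatible with Sec.~\ref{sec:ff-kernels}--\ref{sec:ff-contraction}. Collect the architecture's typed inter-module operators (read, write, consolidate, forget) as sub-operators $\Pi_{ij}:\mathcal M_i\times\mathcal Z\to\mathcal M_j$ inside $\Pi$. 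Let $r$, $\mathcal H$, $\mathcal L$ and $\mathcal E$ be whatever the architecture specifies, or frozen trivial choices if it specifies nothing.

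\emph{Step 2: check well-typedness.} Verify that the resulting $\theta$ is well-typed in the sense of Sec.~\ref{sec:formalsetup}. The point is that each $\Pi_{ij}$ has its domain and codomain among the declared strata, so the induced state space $S_\theta$ is well defined.

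\emph{Step 3: exhibit a compatible dynamics.} Show that $\mathsf{Sem}(\theta)\neq\varnothing$, so that $\theta$ genuinely belongs to $\Theta(\mathfrak P)$ as a realizable object. Define $T_\theta$ componentwise: for stratum $j$, apply the composition of the operators $\Pi_{ij}$ targeting it, following the architecture's fixed update order (its cognitive cycle or production-firing order). Then lift the result to a Dirac kernel on $S_\theta$.

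\emph{Step 4: conclude.} By construction $\theta$ satisfies the Definition of memory stratification, so every $\theta$ in the image of $\Phi_{\mathrm{HYB}}$ lies in the subclass of $\Theta$ where stratification is explicit. This subclass is exactly $\Theta(\mathfrak P)$.

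The main obstacle is Step 3, in the case where the architecture's operators are not given as functions. This happens, for example, with conflict resolution among several productions that could fire on the same state. I would handle this by making $T_\theta$ a stochastic kernel that puts uniform weight on the admissible resolutions. This is allowed because $\mathsf{Sem}$ is set-valued and kernel-valued. Measurability follows if the hypothesis is read as requiring each typed operator to be measurable. Beyond this step, the proof is a direct unpacking of definitions; no stability, capacity or invariance claim is made or needed.
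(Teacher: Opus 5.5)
Your Steps 1, 2 and 4 are the whole argument, and they match what the paper does. The paper gives no separate proof: it treats the proposition as immediate from the Definition of memory stratification and from $\Theta(\mathfrak P)$ being the image of the interpretation map. Its hybrid instance $\Phi_{\mathrm{HYB}}$ is stipulated to stratify $\mathcal M$ into working/episodic/procedural/long-term stores and to place the rule- or production-based inter-module operators in $\Pi$. The only substantive point is that $\Phi_{\mathrm{HYB}}$ must \emph{preserve} the declared strata; an interpretation map that merged them into one untyped memory space would defeat the conclusion. Your product-tuple construction in Step 1 secures exactly this.

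Two corrections are needed.

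First, Step 3 is not needed. Nothing in the paper makes $\mathsf{Sem}(\theta)\neq\varnothing$ a condition for membership in $\Theta(\mathfrak P)$; realizations enter only later, through $\mathrm{Graph}(\mathsf{Sem}_{\mathfrak P})$. As written, Step 3 is also ill-typed. With $\Pi_{ij}:\mathcal M_i\times\mathcal Z\to\mathcal M_j$, two operators targeting the same stratum $j$ cannot be composed: the output of one lies in $\mathcal M_j$, not in the domain $\mathcal M_{i'}\times\mathcal Z$ of the next. Moreover, each such operator would overwrite $\mathcal M_j$ without reading its current contents. If you keep the step, do the following:
\begin{itemize}
\item lift each operator to a coordinate update $\widehat\Pi_{ij}:\mathcal M(\theta)\times\mathcal Z\to\mathcal M(\theta)$ that changes only the $j$-th component (or type it as $\mathcal M_i\times\mathcal M_j\times\mathcal Z\to\mathcal M_j$);
\item compose the lifts in the architecture's cycle order;
\item extend by the identity on the $\mathcal R(\theta)$, $\mathcal H(\theta)$, $\mathcal P(\theta)$ coordinates of $S_\theta$.
\end{itemize}
Your uniform kernel over conflict resolutions is then fine, provided the conflict set is finite, as it is in production systems.

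Second, the conclusion of Step 4 is a containment,
\[
\Theta(\mathfrak P)=\Phi_{\mathrm{HYB}}\big(\mathrm{Impl}(\mathfrak P)\big)\subseteq\{\theta\in\Theta:\ \theta\ \text{admits an explicit memory stratification}\},
\]
not an equality with the whole stratified subclass of $\Theta$. Other programs can also induce stratified meta-models.
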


\paragraph{Verification relative to SMGI.}

\begin{itemize}
\item (i) Closure: depends on whether module interactions remain typed under regime shifts.
\item (ii) Stability: requires bounded cross-module feedback loops.
\item (iii) Capacity: depends on representational and rule-complexity bounds.
\item (iv) Evaluative invariance: requires preservation of goal modules under adaptation.
\end{itemize}

Thus, cognitive architectures make memory structure explicit,
but SMGI membership requires additional admissibility constraints
ensuring stable multi-module coordination.

\paragraph{Structural contrast with end-to-end neural systems.}

Unlike end-to-end deep architectures, where memory and control
often emerge implicitly within distributed parameters,
classical cognitive architectures treat memory stratification
as a first-class structural object.
Within SMGI, this difference is not merely historical:
it distinguishes systems where admissibility constraints
can be stated explicitly at the level of $\mathcal M$
from those where such constraints must be reconstructed
post hoc from global dynamics.

This clarifies the structural contribution of hybrid architectures:
they anticipate, at the design level, the necessity of typed memory control
that SMGI formalizes abstractly.
Importantly, SMGI does not require modular cognitive architectures;
it abstracts the structural role of stratified memory independently
of whether it is implemented symbolically, neurally, or emergently.

%%%%%%%%%%%%%%%%%%%%%%%%%%%%%%%%%%%%%%%%%%%%%%%%%%%%%%%%%%%%%%%%%%%%%%%%
\subsubsection{E. Large-Scale Generalist and Agentic Systems}
%%%%%%%%%%%%%%%%%%%%%%%%%%%%%%%%%%%%%%%%%%%%%%%%%%%%%%%%%%%%%%%%%%%%%%%%

Large-scale generalist systems unify heterogeneous modalities
(text, vision, action, code, simulation)
within a single parameterized model
\citep{reed2022gato,yao2023react,park2023generativeagents,bommasani2021,kaplan2020scaling}.
Their defining feature is empirical interface unification:
a single set of parameters supports multiple task families
through prompting, instruction tuning, or reinforcement learning.

Under $\Phi_{\mathrm{GEN}}$,
such programs induce structural meta-models $\theta\in\Theta(\mathfrak P)$
characterized by:

\begin{itemize}
\item broad hypothesis classes $\mathcal H$ defined by large neural function families;
\item unified representational maps $r$ shared across modalities;
\item partially structured update mechanisms $\Pi$,
typically combining pretraining, instruction tuning,
and reinforcement-based post-training (e.g., RLHF).
\end{itemize}

\begin{definition}[Interface-level generality]
A program class $\mathfrak P$ exhibits \emph{interface generality}
if a single parameterized model supports multiple modalities
or task families through a shared representation and inference mechanism.
\end{definition}

Interface generality is an empirical property of realizations,
not a structural constraint on admissible transformations.

\begin{proposition}[Interface unification does not imply structural admissibility]
\label{prop:interface-vs-structure}
Let $\mathfrak P$ be a large-scale generalist program class
with unified parameters across tasks.
Then the existence of a single parameterized model
supporting multiple modalities or tasks
does not imply that
$(\theta,T_\theta)$ satisfies:

\begin{itemize}
\item closure (i) under typed regime transformations,
\item stability (ii) under admissible update dynamics,
\item capacity control (iii) in a complexity-theoretic sense,
\item evaluative invariance (iv) under structural perturbations,
\end{itemize}

unless these properties are explicitly formalized
at the level of the structural meta-model $(\theta,T_\theta)$.
\end{proposition}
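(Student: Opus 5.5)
The statement is a non-implication, so the plan is to refute it by counterexample. I will build a single large-scale generalist realization $(\theta,T_\theta)\in\mathrm{Graph}(\mathsf{Sem}_{\mathfrak P})$ that has the interface-generality property of the preceding definition: one shared parameter vector $w$ and one shared representation $r$ serving at least two task families or modalities. This realization must violate each of (i)--(iv) of Definition~\ref{def:smgi}. Since the proposition only asserts ``does not imply,'' it suffices either to violate all four in one construction or to give four separate generalist realizations, one per obligation. I will take the second route. It mirrors the proof of Theorem~\ref{thm:smgi-minimality} and lets me reuse its constructions almost verbatim.

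The key observation is that interface generality is a property of the \emph{forward map} $x\mapsto h_w(r(x))$ at a fixed $w$. SMGI obligations, by contrast, are properties of the \emph{induced dynamics} $T_\theta$, the transformation class $\mathcal T$, and the evaluator trajectory $(\ell_t)$. These are left unconstrained by $\mathfrak P$, because $\mathsf{Sem}(\theta)$ is set-valued. So, for a fixed generalist forward map, I will attach a compatible $T_\theta$ as follows.

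\textbf{For (i):} let $\mathcal T$ contain a regime shift under which the update rule (for example, continued RLHF on a new reward) maps a designated admissible set $S^*_\theta$ of parameter/memory states outside itself. This is the two-point construction of Theorem~\ref{thm:smgi-minimality}, embedded on $\{w_0,w_1\}$.

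\textbf{For (ii):} use unregularized gradient ascent, or any update with $w_{t+1}=w_t+g$ where $g\neq 0$ is constant, so that $V(w)=\|w\|$ diverges linearly and no bound $\sup_t\E[V(s_t)]\le BV(s_0)+B$ holds.

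\textbf{For (iii):} let the architecture grow, for instance through appended memory or context of unbounded length, so that the effective class $\mathcal H(s_t)$ has VC dimension tending to infinity. Shared parameters then coexist with $\sup_{s}\mathrm{Comp}(\mathcal H(s))=\infty$.

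\textbf{For (iv):} use two post-training rewards $\ell_1,\ell_2$ that flip the ordering of two outputs $h^a,h^b$. This is exactly the construction used in the strictness part of Theorem~\ref{thm:strict-structural-inclusion}, and it guarantees that no nontrivial protected core is preserved.

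In each case I will check that the forward map at the initial parameters still serves several modalities, so interface generality holds while the targeted obligation fails.

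The main obstacle is well-typedness. I must make sure each pathological $T_\theta$ genuinely lies in $\mathsf{Sem}_{\mathfrak P}(\theta)$ for a realistic generalist program, rather than being an artificial dynamics the program would never admit. I will handle this by appealing to the conservative reading in Sec.~\ref{sec:smgi-instantiation-map}: $\Phi_{\mathfrak P}$ is not canonical, and standard generalist training recipes do not fix step-size schedules, regularization, context growth, or reward revisions. Each construction is therefore an admissible implementation choice. The final clause, ``unless explicitly formalized,'' then follows by contraposition. Any positive guarantee must come from additional constraints on $(\theta,T_\theta)$, exactly as in the converse direction of Theorem~\ref{thm:transversal-meta-smgi}.
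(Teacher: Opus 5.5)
Your key observation is essentially the paper's entire justification: interface generality constrains only the forward map $x\mapsto h_w(r(x))$, whereas (i)--(iv) are properties of $T_\theta$, which $\mathfrak P$ leaves free because $\mathsf{Sem}$ is set-valued. The paper says no more than that generalist programs do not, by construction, restrict $\Theta(\mathfrak P)$ to admissible meta-models.

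The explicit counterexamples, which carry the actual proof in your plan, have a genuine gap. Definition~\ref{def:smgi} quantifies every witness existentially (there exist $S^*_\theta$, $\mathcal T$, $V$, $\mathrm{Comp}$, $\mathcal C$). To show an obligation fails you must defeat \emph{every} admissible choice, not exhibit one bad choice.

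For (i), designating an $S^*_\theta$ that the regime shift maps outside itself proves nothing. $S^*_\theta:=S_\theta$ is always invariant because $T_{\theta,\tau}$ takes values in $\Delta(S_\theta)$, and in the two-point construction you borrow, $\{w_1\}$ is invariant too.

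For (ii), divergence of $V(w)=\|w\|$ does not refute the existence of a witness, since $V\equiv 0$ satisfies $\sup_t\E[V(s_t)]\le BV(s_0)+B$ for any dynamics. The proof of Theorem~\ref{thm:smgi-minimality} uses exactly $V\equiv 0$ to make (ii) hold in its (i)-case. Reusing its constructions ``almost verbatim'' therefore imports this inconsistency instead of resolving it.

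Milder versions of the same problem affect (iii) and (iv). If $\mathcal T$ may be taken empty, every ``for all $\tau\in\mathcal T$'' clause is vacuous and $S^*_\theta$ can be a single low-complexity state. A reward revision that flips the order of one pair $(h^a,h^b)$ inside a huge generalist $\mathcal H$ leaves any ordering on which $\ell_1$ and $\ell_2$ agree available as a nontrivial protected core.

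To close the gap, fix the data the proposition refers to, make the witnesses non-degenerate, and then show failure for all of them:
\begin{itemize}
\item take $\mathcal T$ to be the declared nonempty class of regime transformations, containing the identity and the regime shift;
\item fix $\mathrm{Comp}$ to a standard measure such as VC dimension;
\item require $V$ to be coercive (bounded sublevel sets);
\item require $S^*_\theta$ to lie in a prescribed well-typedness or safety region, as in (U1) of Definition~\ref{def:admissibility-bundle}, rather than be freely chosen;
\item make the revised evaluator globally anti-aligned, e.g.\ $\ell_2=1-\ell_1$, which reverses every strict ordering.
\end{itemize}

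Your drift example then works. Under $w_{t+1}=w_t+g$ with $g\neq 0$, every nonempty invariant set contains the orbit $(w+tg)_{t\ge 0}$ of each of its points, with $\|w+tg\|\to\infty$. Every coercive $V$ is therefore unbounded along it, and this single generalist realization already refutes the implication to SMGI membership. Likewise, if context length grows along every orbit and VC dimension grows with it, then $\sup_{s\in S^*_\theta}\mathrm{VC}(\mathcal H(s))=\infty$ for every invariant $S^*_\theta$.

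Closure (i), by contrast, can fail only relative to such a prescribed region; as literally defined it is always satisfiable. That case must be phrased as ``the regime shift leaves the declared well-typed region,'' not as the failure of a designated $S^*_\theta$.
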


\noindent
In particular, large-scale generalist systems define
a broad empirical realization class within $\Theta$,
but they do not, by construction,
restrict $\Theta(\mathfrak P)$
to meta-models satisfying admissibility constraints.
Thus, generality at the realization level
does not induce structural restriction at the meta-model level.

\paragraph{Verification relative to SMGI obligations.}
In SMGI, verification is \emph{not} a standalone guarantee: it is defined relative to a set of explicit obligations that make ``correctness'' well-posed under structural evolution.
Concretely, a verifier is meaningful only with respect to (i) a typed transformation class (closure), (ii) certified drift control along admissible updates (stability), (iii) structural capacity control under the induced dynamics (capacity), and (iv) protected evaluators and norms that remain invariant across regime shifts (evaluative invariance).\footnote{We use $\theta \triangleq (\theta_{\text{core}},\mathfrak K(\theta_{\text{core}}))$ and write $T_{\theta}$ for the induced semantics.}
These obligations jointly specify \emph{what may change}, \emph{what must remain invariant}, and \emph{what must be checked at every step}; the items below instantiate each obligation and explain why purely empirical robustness or post-training objectives do not suffice without typed transformations and certificate-gated evolution.

%SMGI treats verification as a \emph{contract} for controllable evolution rather than an external afterthought.
%A verifier is meaningful only once the contract specifies (i) a typed transformation class with semantics (closure), (ii) certificate-enforced bounds on drift and memory interference (stability), (iii) capacity control over admissible updates under the induced dynamics (capacity), and (iv) evaluator/norm integrity under regime switching (evaluative invariance).\footnote{We use $\theta \triangleq (\theta_{\text{core}},\mathfrak K(\theta_{\text{core}}))$ and write $T_{\theta}$ for the induced semantics.}
%The following items make these obligations explicit and show how they address failure modes that are invisible to scaling, empirical robustness, or post-training alone.

\begin{itemize}

\item \textbf{(i) Closure / Typed Transformations (modern formalisms as instances):}
A large body of work formalizes generalization under \emph{fixed} learning interfaces, for instance via domain adaptation and distribution shift theory \cite{bendavid2010},
invariance-based formulations \cite{arjovsky2019irm}, and test-time adaptation under shift \cite{liang2020tent}.
These approaches typically treat changes as occurring \emph{within} a predefined input--output interface and hypothesis class, and do not by themselves endow transformations with a typed semantics.

Within SMGI, \emph{closure} requires an explicit transformation class $\mathcal T$ equipped with semantics: each $g\in\mathcal T$ specifies what structural component is transformed (e.g., interface $r$, evaluator $\ell$ (or regime family $\mathcal L$), memory policy $\mathcal M$) together with admissibility and composition rules.
Prompt variation or modality switching, for example, is merely an interface perturbation \emph{unless} represented as a typed transformation with pre/post-conditions.
This typing makes ``coverage'' and reachability claims well-posed: SMGI does not assume closure; it \emph{defines} the allowable transformation space and restricts certified evolution to updates for which certificates can be constructed.

\item \textbf{(ii) Stability / Certified Drift Control (modern stability views as instances):}

Stability is a classical route to generalization guarantees, ranging from algorithmic stability \cite{bousquet2002} to stability of stochastic gradient methods \cite{hardt2016} and robustness under distributional perturbations.

In modern large-scale systems, empirical robustness and scaling can correlate with improved average performance \cite{kaplan2020scaling}, but such empirical trends do not constitute a dynamical stability guarantee for an evolving agent.

In SMGI, \emph{stability} is defined as a property of certified evolution: updates must satisfy explicit bounds on (a) update-induced drift (structural cost $\Omega$ and invariant drift $d_{\mathcal I}$), (b) memory interference across regimes, and (c) persistence of designated invariant substructures.
The key point is that stability is not inferred from scaling; it is enforced by restricting evolution to admissible updates $\mathcal U_{\theta_{\text{core}}}$ and by attaching monitorable stability obligations to certificates.
This makes stability checkable under regime switching: even when the task interface changes, the verifier and invariants constrain the induced semantics $T_\theta$ so that uncontrolled drift is ruled out by construction.

\item \textbf{(iii) Capacity and Implicit Regularization (modern phenomena as instances):}
Classical scaling laws describe empirical performance regularities as model size increases \cite{kaplan2020scaling}.
Modern theory and mechanistic analyses of overparameterized systems---including NTK-like limits \cite{jacot2018neural},
implicit regularization along optimization trajectories \cite{arora2019implicit},
and delayed generalization phenomena such as \emph{grokking} \cite{power2022grokking,wang2024grokkedtransformers}---show that generalization is often governed by \emph{implicit} constraints induced by the update operator (e.g., SGD/Adam) rather than by static VC-style bounds.

Within SMGI, these phenomena are interpreted as \emph{trajectory-dependent} capacity control of the induced semantics $T_{\theta}$:
for example, grokking corresponds to a delayed entry of the learning trajectory into an admissible invariant set $S^*_{\theta}$.
Crucially, most existing analyses explain implicit regularization under a \emph{stationary} evaluative regime (effectively $K=1$) and a fixed interface.
SMGI generalizes beyond this setting by lifting capacity control to the \emph{structural level}:
the admissible update set $\mathcal{U}_{\theta_{\text{core}}}$ constrains not only parameters but also permissible changes to the learning interface, memory policies, and the norm/verification layer.
This enables statements about generalization and stability \emph{under regime shifts and interface transformations}, where purely parameter-space implicit regularization is insufficient.
%je démontre brillamment que même si Arora ou le NTK expliquent la généralisation, ils le font dans un régime évaluatif statique (K=1). Dès que la tâche change drastiquement ou que l'évaluation est plurielle (ce qui est requis pour l'AGI), la régularisation implicite classique s'effondre. C'est là que ton "Structural Capacity Control" devient mathématiquement indispensable.

\item \textbf{(iv) Evaluative Invariance / Protected Evaluators (post-training as an instance):}
Post-training and preference optimization (e.g., RLHF and related alignment pipelines) introduce explicit evaluative components and reward models that shape behavior \cite{christiano2017preferences,ouyang2022instructgpt}.
However, such procedures do not guarantee that evaluators remain invariant under regime shifts, tool changes, or self-modification; indeed, ``moving-target'' evaluation is a central difficulty in agentic settings.

SMGI treats evaluators and normative constraints as protected objects: evaluative components $\mathcal L$ and norms in the control layer $\mathfrak K(\theta_{\text{core}})$ must be preserved (or updated only through certified meta-updates) to prevent evaluation drift.
Thus, evaluative invariance is not assumed; it is imposed as an explicit obligation enforced by certificates and invariant constraints.
This is precisely what enables multi-regime control: regime switching may alter tasks and interfaces, but the integrity of evaluators and constraints remains stable unless an explicitly certified transformation updates them.

\end{itemize}

%%%%%%%%%%%%%%%%%%%%%%%%%%%%%%%%%%%%%%%%%%%%%%%%%%%%%%%%%%%%%%%%%%%%%%%%
\subsection{Philosophical and Conceptual Constraints}
%%%%%%%%%%%%%%%%%%%%%%%%%%%%%%%%%%%%%%%%%%%%%%%%%%%%%%%%%%%%%%%%%%%%%%%%

Philosophical analysis is not invoked here as external commentary,
but as a constraint-generator on admissible semantics and evaluators.
The central role of this subsection is to justify why SMGI must keep
\emph{capability}, \emph{semantic commitment}, and \emph{evaluation}
as formally distinct layers, and why invariance claims cannot be inferred
from scale or representational power alone.

\paragraph{Capability--evaluation separability.}
The orthogonality thesis \citep{bostrom2014superintelligence}
motivates treating evaluative structure as independent from capability scaling:
a system may become more competent while its evaluative behavior drifts under self-modification.
Within SMGI, this motivates a structural separation between
representational/hypothesis components $(r,\mathcal H)$ and the evaluative layer (active evaluator $\ell_t$ and, in the multi-regime case, the evaluator family $\mathcal L=\{\ell_k\}_{k=1}^K$).

Formally, the orthogonality thesis can be re-expressed as the non-implication:
\[
\mathrm{Cap}_{(r,\mathcal H)} \not\Rightarrow \mathrm{Inv}(\mathcal L),
\]
where $\mathrm{Cap}_{(r,\mathcal H)}$ denotes expansion in representational or hypothesis capacity,
and $\mathrm{Inv}(\mathcal L)$ denotes invariance (or protected-core invariance) of the evaluative structure under admissible transformations.
This structural restatement clarifies that scaling or competence gains
do not constrain the dynamical evolution of evaluation
unless protection constraints are explicitly imposed.

\begin{definition}[Evaluative core and protection requirement]
An \emph{evaluative core} is a designated substructure
$\mathcal L_{\mathrm{core}}\subseteq \mathcal L$
together with a protection predicate
$\mathrm{Prot}(\mathcal L_{\mathrm{core}})$
stating that admissible updates and regime shifts preserve
the semantics of $\mathcal L_{\mathrm{core}}$
(up to an explicitly specified equivalence).
\end{definition}

\begin{proposition}[Capability increase does not entail evaluative invariance]
Even if scaling laws yield systematic performance gains
for a realization class of agents,
this does not imply evaluative invariance (iv):
unless $\mathrm{Prot}(\mathcal L_{\mathrm{core}})$ is imposed as an explicit admissibility constraint,
nothing prevents evaluator drift (i.e., drift of $\ell_t$ or of the induced regime structure) under learning, fine-tuning, or self-modification.
\end{proposition}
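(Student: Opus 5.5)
The statement to prove is the last proposition: capability increase does not entail evaluative invariance. The natural reading is a non-implication, $\mathrm{Cap}_{(r,\mathcal H)}\not\Rightarrow\mathrm{Inv}(\mathcal L)$. I would prove it by exhibiting a counterexample realization, in the same style as the constructions in the proof of Theorem~\ref{thm:smgi-minimality}. Concretely, I will build a family of realizations $(\theta_n,T_{\theta_n})$ indexed by a scale parameter $n$. Along this family, performance on a task loss improves monotonically, as a scaling law would predict. Yet the active evaluator drifts on a designated core $\mathcal L_{\mathrm{core}}$ under learning, because no protection predicate $\mathrm{Prot}(\mathcal L_{\mathrm{core}})$ is imposed.

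\textbf{Construction.} Take nested hypothesis classes $\mathcal H_n$ with $\mathrm{VC}(\mathcal H_n)=n$ (or any increasing capacity), and a fixed task distribution on which the best-in-class task risk $R_{\mathrm{task}}(\mathcal H_n)$ is strictly decreasing in $n$. This is the \emph{capability increase}, witnessed, say, by an SRM/PAC-Bayes bound as in \eqref{eq:pacbayes-basic}.

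For the evaluative side, take $K=2$ with $\mathcal L=\{\ell_1,\ell_2\}$ and core $\mathcal L_{\mathrm{core}}=\{\ell_1\}$. Reuse the two hypotheses $h^a,h^b$ from the strictness part of Theorem~\ref{thm:strict-structural-inclusion}, for which $R_1(h^a)<R_1(h^b)$ but $R_2(h^a)>R_2(h^b)$. Now let the update operator $\Pi$, for example a fine-tuning or self-modification step, shift the regime weights $\lambda_t$ from $\delta_1$ to $\delta_2$ at some time $t_0$. This shift must be chosen independent of the capacity index $n$, so that it is compatible with any scaling schedule.

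\textbf{Verification.} First, capability gains hold for every $n$, by construction of $(\mathcal H_n)$. Second, the projection $\Pi_{\mathcal C}(\ell_{t_0+1})\neq\Pi_{\mathcal C}(\ell_{t_0})$, because the induced ordering on the core flips. So requirement (iv) of Definition~\ref{def:smgi} fails, and $\mathrm{Prot}(\mathcal L_{\mathrm{core}})$ is violated. Third, because the evaluative update is decoupled from $(r,\mathcal H_n)$, the same drift occurs for every $n$. Hence no amount of capability increase forces invariance. Conversely, if $\mathrm{Prot}$ is imposed as an admissibility constraint, the certified update \eqref{eq:certupdate} with $\mathcal K_\Phi$ defined by preservation of the $\ell_1$-ordering excludes the switch. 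This shows that the protection constraint, not scale, is what does the work.

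\textbf{Main obstacle.} The difficulty is making the statement non-vacuous, since ``scaling laws yield systematic performance gains'' is informal. I would formalize it as a monotone decrease of the task risk, measured under a \emph{fixed external} task loss distinct from the drifting $\ell_t$. This avoids the objection that performance itself is measured by the drifting evaluator. With that formalization, the separability is exactly a product construction: the evaluative component is chosen independently of the $(r,\mathcal H)$ component.
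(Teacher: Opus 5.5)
Your argument is correct, and it makes explicit something the paper leaves implicit. The paper gives no separate proof of this proposition. It presents it as a restatement of the orthogonality thesis, the non-implication $\mathrm{Cap}_{(r,\mathcal H)}\not\Rightarrow\mathrm{Inv}(\mathcal L)$. Its only support is the observation that $(r,\mathcal H)$ and the evaluative layer are distinct typed components, so capacity growth places no constraint on the dynamics of $\ell_t$. Your product construction is the explicit witness of that separation, built from the paper's own ingredients: the ordering flip used in the strictness part of Theorem~\ref{thm:strict-structural-inclusion} and in the (iv)-case of Theorem~\ref{thm:smgi-minimality}. It buys three things.
\begin{enumerate}
\item It gives a checkable counterexample.
\item It states the modeling choice that keeps the claim from being circular: capability is measured by a fixed external loss, consistent with the paper's reading of $\mathrm{Cap}_{(r,\mathcal H)}$ as capacity expansion.
\item Through the flip, it produces a drift that survives the ``up to an explicitly specified equivalence'' clause in $\mathrm{Prot}$, which a mere rescaling of $\ell_1$ would not.
\end{enumerate}
The paper's one-line version buys brevity and nominal generality over all realization classes. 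You recover that generality as well, since any capability family can be paired with your drifting evaluator component.

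A few things to tighten.
\begin{itemize}
\item Requirement (iv) is existential over nontrivial cores. Either state the failure relative to the designated $\mathcal L_{\mathrm{core}}$, as the proposition does, or note that under the weight-restriction reading of $\Pi_{\mathcal C}$ the switch $\delta_1\to\delta_2$ moves every nonempty subfamily of $\{\ell_1,\ell_2\}$.
\item Say precisely that what flips is the active evaluator's ordering relative to $\ell_1$, not $\ell_1$ itself.
\item Take $h^a,h^b\in\mathcal H_1$, so the flip is visible in every nested class.
\item If ``performance gains'' means the trained agent's risk rather than best-in-class risk, you must also make the selected hypothesis insensitive to the switch. For example, take $\ell_k=\ell_{\mathrm{task}}+g_k$ with $g_k\ge 0$ supported on a pair $\{h^a,h^b\}$ of equal, task-suboptimal risk.
\item Your closing converse is not needed for a non-implication. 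As stated, it also conflates preserving the $\ell_1$-ordering with invariance of $\Pi_{\mathcal C}$: mixtures $\lambda\ell_1+(1-\lambda)\ell_2$ with $\lambda$ near $1$ preserve the ordering yet change $\Pi_{\mathcal C}$.
\end{itemize}
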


\paragraph{Intentionality and semantic admissibility.}
Arguments about intentionality \citep{searle1980minds}
highlight the distinction between syntactic competence and semantic grounding.
Within SMGI, this distinction is operationalized as a boundary condition on admissible semantics:
we do not infer semantic commitment from behavioral competence alone.
Rather, semantic claims are treated as restrictions on
$\mathsf{Sem}_{\mathfrak P}$ (the realization semantics),
and on which transformations count as meaning-preserving in $\mathcal T$.

\begin{definition}[Semantic admissibility boundary]
A program class $\mathfrak P$ admits a \emph{semantic admissibility boundary}
if the realization semantics $\mathsf{Sem}_{\mathfrak P}$
restricts $(\theta,T_\theta)$ to those realizations
for which meaning-preserving transformations are explicitly typed in $\mathcal T$,
and evaluator access to semantic content is explicitly specified.
\end{definition}

\paragraph{Induction, underdetermination, and regime shifts.}
Classic problems of induction and theory choice
(Humean underdetermination and the non-uniqueness of inductive bias \citep{hume1739treatise},
the role of falsifiability constraints \citep{popper1959logic},
and the dependence of explanation on representational schemes \citep{carnaplogicalfoundations})
justify why closure (i) must be typed and why stability (ii) cannot be assumed.
In particular, multiple hypothesis/update pairs can fit the same experience,
so regime shifts expose latent degrees of freedom in $\Pi$ and $\mathcal M$
that can produce uncontrolled drift.

\begin{lemma}[Underdetermination induces evaluator-risk under self-modification]
If multiple realizations $(\theta,T_\theta)$ are observationally adequate
over a regime but differ in their induced evaluator interaction,
then a regime shift or self-modification can select a realization
with the same competence but different evaluative behavior.
Hence (iv) cannot be inferred from competence or fit alone.
\end{lemma}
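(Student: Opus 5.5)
The plan is to prove the lemma by an explicit two-realization construction, followed by a short contrapositive argument. The statement has two parts. First, if two realizations are observationally adequate on a regime but differ in how they interact with the evaluator, then some regime shift or self-modification can select one with the same competence but different evaluative behavior. Second, as a consequence, obligation (iv) of Definition~\ref{def:smgi} cannot be inferred from competence or fit. I will formalize "observationally adequate over a regime $\rho$" as equality of the induced trajectory laws on $\mathcal E_\rho(\theta)$. Concretely, for every $e\in\mathcal E_\rho(\theta)$ and every $s_0\in S^*_\theta$, the laws of $(s_t,z_t)_{t\ge0}$ under $T$ and $T'$ coincide, and hence so do the empirical and expected risks $\hat R_\rho,R_\rho$. "Differ in evaluator interaction" will mean that the induced evaluative states $\Pi_{\mathcal C}(\ell_t)$ and $\Pi_{\mathcal C}(\ell'_t)$ agree on $\rho$ but their update rules differ off $\rho$.

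The construction mirrors the (iv) case of Theorem~\ref{thm:smgi-minimality} and the strictness argument of Theorem~\ref{thm:strict-structural-inclusion}. I take two regimes $\rho_0,\rho_1$ with evaluators $\ell_1,\ell_2$ and two hypotheses $h^a,h^b$, chosen so that $R_{\ell_1}(h^a)<R_{\ell_1}(h^b)$ and $R_{\ell_2}(h^a)>R_{\ell_2}(h^b)$. I then define $T$ and $T'$ to agree on all states and tokens reachable under $\rho_0$. They are allowed to differ only on the transition triggered by the transformation $\tau:\mathcal E_{\rho_0}\to\mathcal E_{\rho_1}$, or by a self-modification step in $\Pi$: $T$ keeps the active evaluator on the protected core, while $T'$ replaces it by $\ell_2$. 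Because the two kernels coincide on the $\rho_0$-reachable set, observational adequacy holds by direct substitution. The competence statistics on $\rho_0$, namely risk, the PAC-Bayes term of Assumption~\ref{ass:seq-pacbayes} and the Lyapunov witness, are identical for both, so any selection criterion that depends only on fit cannot distinguish them. A self-modifying update that chooses among adequate realizations, for example by tie-breaking through any rule not constrained by $\mathrm{Prot}(\mathcal L_{\mathrm{core}})$, can therefore select $T'$. Under $\tau$, the core projection then changes, $\Pi_{\mathcal C}(\ell'_{t+1})\neq\Pi_{\mathcal C}(\ell'_t)$, because the protected ordering of $h^a$ and $h^b$ flips. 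This gives the first claim.

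The second claim follows by contraposition. Suppose some functional $F$ of competence or fit on $\rho$ certified (iv). Then $F(T)=F(T')$, so $F$ would certify $T'$, which violates (iv). Hence no such $F$ exists, which is the lemma's "hence", in line with the orthogonality restatement $\mathrm{Cap}_{(r,\mathcal H)}\not\Rightarrow\mathrm{Inv}(\mathcal L)$.

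I expect the main obstacle to be making "selection by regime shift or self-modification" precise enough that the argument is not vacuous. The framework does not say which selector $\mathsf{Sem}(\theta)\ni T\mapsto T$ a system uses. I would handle this existentially: I exhibit one admissible selector that is measurable with respect to fit statistics and picks $T'$, and I note that nothing in Definition~\ref{def:smgi} or the admissibility bundle of Definition~\ref{def:admissibility-bundle} excludes it unless (iv) is imposed explicitly. A secondary check is that both $T$ and $T'$ still satisfy (i)–(iii), so that the failure is attributable only to evaluative behavior. For this I reuse the bounded invariant set, the witness $V\equiv0$ and the finite hypothesis class from the proof of Theorem~\ref{thm:smgi-minimality}.
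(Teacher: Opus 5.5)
The paper does not actually prove this lemma. It is asserted directly after the one-sentence remark that multiple hypothesis/update pairs can fit the same experience, so regime shifts expose latent degrees of freedom in $\Pi$ and $\mathcal M$. Your proposal therefore supplies an argument rather than reconstructing one, and the route is sound.

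You turn the underdetermination slogan into an explicit pair $T,T'\in\mathsf{Sem}(\theta)$, built from the flip construction in the (iv)-case of Theorem~\ref{thm:smgi-minimality} and the strictness part of Theorem~\ref{thm:strict-structural-inclusion}. You then read the ``hence'' as the claim that (iv)-status is not a function of fit statistics on $\rho_0$. Compared with the paper's bare assertion, this yields a genuine non-inferability result. The cost is that you must commit to definitions of ``observationally adequate'' and ``evaluator interaction'' that the paper never fixes.

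Note also that, under your definitions, the first claim for an \emph{arbitrary} pair satisfying the hypotheses is immediate: equal trajectory laws give equal fit, and the evaluative difference is assumed. The explicit construction is needed only for the ``hence'', where one member of the pair must satisfy (iv) and the other must not.

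Three things should be made explicit.

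\begin{enumerate}
\item Definition~\ref{def:smgi} is existential in $\mathcal T$, in the operators $T_{\theta,\tau}$, and in the core $\mathcal C$.
\begin{itemize}
\item Unless you fix $\mathcal T\ni\tau$ for both realizations and treat $T'_{\theta,\tau}$ as part of the realization, $T'$ satisfies (iv) trivially with $\mathcal T=\{\mathrm{id}\}$.
\item You must also show that invariance fails for every nontrivial core, not just the one $T$ preserves. Taking $\mathcal H=\{h^a,h^b\}$ achieves this, because the only available comparison flips.
\end{itemize}

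\item The alternative ``self-modification step in $\Pi$'' fits your construction only if it occurs at the shift or is behaviorally latent on $\rho_0$. If $T'$ adopts $\ell_2$ while still in $\rho_0$ and updates depend on the active evaluator, the kernels stop agreeing on the $\rho_0$-reachable set and competence changes.

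\item In the contrapositive, state that the hypothetical fit-based $F$ is assumed to accept $T$. The conclusion is then that any fit-measurable test that is sound for (iv) must reject $T$ as well, so fit cannot certify (iv) even where it holds.
\end{enumerate}
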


\paragraph{Value learning as an explicit evaluator-model relation.}
Work on value alignment and preference inference emphasizes that
evaluators should not be treated as fixed consequences of capability,
but as objects to be inferred or stabilized
\citep{ngrussell2000irl,hadfieldmenell2016coopirl,russell2019human}.
Within SMGI this motivates modeling evaluators as first-class structures in $\mathcal L$,
and treating their preservation as an explicit admissibility condition
rather than an emergent property of scale.

\paragraph{Implication for SMGI.}
These philosophical constraints jointly justify the SMGI discipline:
(i) closure must be defined over typed regime transformations,
(ii) stability must be attributed to $(\theta,T_\theta)$ rather than $\theta$ alone,
(iii) capacity control must be explicit rather than assumed from performance,
and (iv) evaluative invariance requires protected cores and admissibility constraints.

\paragraph{Requisite variety and closure constraints.}
Ashby's law of requisite variety \citep{ashby1956introduction}
states that effective control requires internal variety
at least matching environmental variety.
In structural terms,
closure (i) requires that the admissible hypothesis class $\mathcal H$
possess sufficient structural diversity
relative to the regime transformation class $\mathcal T$.
Without such matching variety,
stability (ii) and evaluative invariance (iv)
cannot be maintained under perturbation.

% Exemple humain : une personne confrontée à des environnements variés
% doit posséder une flexibilité cognitive suffisante pour préserver
% son identité tout en s’adaptant.

Cybernetic stability principles \citep{wiener1948cybernetics}
further justify attributing stability (ii)
to feedback-structured dynamics $(\theta,T_\theta)$
rather than static architecture.
In particular, evaluator preservation requires
closed-loop constraints ensuring bounded propagation
of perturbations across representational and memory components.

% Exemple humain : rétroaction émotionnelle ou sociale
% corrige des comportements avant qu’ils ne dévient durablement.
This perspective is aligned with Marr's levels of analysis \citep{marr1982vision},
which distinguish computational, algorithmic, and implementational levels.
SMGI operates at an extended computational level:
it specifies structural admissibility constraints
independently of particular learning algorithms or substrates.

Accordingly, SMGI does not assume alignment between representational power and goal stability,
nor between empirical success and semantic commitment.
It instead imposes a structural discipline:
evaluators must be explicitly protected,
semantic admissibility must be typed,
closure must match environmental variety,
and stability must be ensured by bounded feedback dynamics.
Philosophical analysis thus functions not as commentary,
but as a generator of structural admissibility constraints.

%%%%%%%%%%%%%%%%%%%%%%%%%%%%%%%%%%%%%%%%%%%%%%%%%%%%%%%%%%%%%%%%%%%%%%%%
\subsection{Synthesis and Transition to Formal Meta-Structure}
%%%%%%%%%%%%%%%%%%%%%%%%%%%%%%%%%%%%%%%%%%%%%%%%%%%%%%%%%%%%%%%%%%%%%%%%

The preceding positioning analysis reveals a structural convergence
across predictive modeling, meta-learning, causal reasoning,
cognitive architectures, and large-scale generalist systems.
Despite methodological diversity,
these paradigms recurrently introduce:

\begin{itemize}
\item persistent representational strata,
\item compositional or hierarchical update operators,
\item explicit or implicit memory organization,
\item partial invariance constraints under transformation.
\end{itemize}

\paragraph{Structural convergence without structural unification.}

Let $\mathfrak P_i$ denote each paradigm-specific program class.
Each induces a subclass $\Theta(\mathfrak P_i)\subseteq\Theta$.
While these subclasses exhibit overlapping structural motifs,
none defines a global admissibility operator:

\[
\mathcal A : \Theta \to \{0,1\}
\]

that simultaneously enforces closure (i),
stability (ii),
capacity control (iii),
and evaluative invariance (iv)
within a unified persistent meta-model.

\begin{proposition}[Paradigmatic insufficiency]
For every individual paradigm $\mathfrak P_i$ examined,
there exist realizations $\theta \in \Theta(\mathfrak P_i)$
that satisfy some but not all SMGI obligations.
Hence no examined paradigm,
in isolation,
defines a structurally sufficient admissibility layer.
\end{proposition}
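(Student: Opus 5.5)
The plan is a paradigm-by-paradigm construction, modeled on the proof of Theorem~\ref{thm:smgi-minimality}. That theorem already shows that each obligation can fail independently while the other three hold, using degenerate state spaces. For each examined program class $\mathfrak P_i$, the goal is to exhibit one realization $(\theta,T_\theta)$ with $\theta\in\Theta(\mathfrak P_i)$ that meets at least one SMGI obligation but violates at least one other. The "hence" clause then follows at once: if $\mathfrak P_i$ defined a structurally sufficient admissibility layer, every realization in $\mathrm{Graph}(\mathsf{Sem}_{\mathfrak P_i})$ would satisfy all four obligations, and the exhibited realization contradicts this.

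The first step is to fix, for each paradigm, which obligation the paradigm itself makes explicit. Table~\ref{tab:smgi-program-map} and the verification paragraphs supply this. World models and JEPA give closure (i) through invariance of $S^*$ (Proposition~\ref{prop:jepa-alignment}) and capacity (iii). Solomonoff and AIXI give (iii) via Proposition~\ref{prop:universal-capacity}. Gödel machines give well-typedness via Proposition~\ref{prop:godel-structural-admissibility}. Causal and Theory-of-Mind models give (i) via intervention-typed $\mathcal T$. Value alignment gives (iv). Meta-learning gives (iii) through its PAC-Bayes meta-bounds. Generalist agents give interface unification, which I would show realizes (i) with trivial $\mathcal T$. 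The satisfied obligation is then realized concretely by freezing the remaining components, for example $\mathcal T=\{\mathrm{id}\}$, a finite $\mathcal H$, and $K=1$.

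The second step selects the obligation to break, using the $\times$ entries of the table, and grafts the matching counterexample from Theorem~\ref{thm:smgi-minimality} into the paradigm. To break stability, the counter dynamics $s\mapsto s+1$ is embedded as an unbounded drift of the latent memory $m_t$ or of the parameters; test-time training and world-model rollouts are the natural examples. To break evaluative invariance, the paradigm's evaluator is replaced by a two-regime family with a flipped ordering, $R_{\ell_1}(h^a)<R_{\ell_1}(h^b)$ but $R_{\ell_2}(h^a)>R_{\ell_2}(h^b)$. This is the construction used in Theorem~\ref{thm:strict-structural-inclusion}, applied through a regime-switch $\tau$. To break capacity, for instance in Gödel-machine self-rewrite, a rewrite that raises VC dimension at every step is allowed. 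For value alignment, where (iv) is the explicit emphasis, (ii) is broken instead with the drift counter.

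I expect the main obstacle to be showing that each grafted counterexample really lies in $\Theta(\mathfrak P_i)$. The interpretation maps $\Phi_{\mathfrak P}$ are not canonical, so membership must be argued from the informal specification of the program class. I would handle this by choosing the weakest reading consistent with the stated embeddings, namely that the program does not forbid the offending component behavior. Where no such membership can be established for some paradigm, I would fall back on stating the result relative to the chosen interpretation map $\Phi_{\mathfrak P_i}$.
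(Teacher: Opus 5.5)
The step that fails is the claim that your grafted constructions \emph{violate} their target obligations. In Definition~\ref{def:smgi} every witness is existentially quantified. The set $S^*_\theta$, the class $\mathcal T$, the function $V$, the functional $\mathrm{Comp}$ and the core $\mathcal C$ are all chosen by whoever certifies membership. A violation must therefore defeat \emph{every} admissible choice, whereas your constructions, like those of Theorem~\ref{thm:smgi-minimality} from which they are grafted, defeat only one particular choice.

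Concretely, $V\equiv 0$ satisfies the bound in (ii) for any dynamics; the paper itself uses it in case (i) of that theorem. So does any bounded $V$, e.g.\ $V(s)=1-2^{-s}$ on $\mathbb N$. Hence embedding $s\mapsto s+1$ into $m_t$ does not violate (ii). Likewise, a flipped-ordering regime switch breaks (iv) only if that $\tau$ is forced into $\mathcal T$, and the definition does not force it. In fact $S^*_\theta=S_\theta$ (invariant because each $T_{\theta,\tau}$ is typed on $S_\theta$) together with $V\equiv 0$ yields (i) and (ii) for every realization.

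The imported constructions are defective even on their own terms. In case (i) of Theorem~\ref{thm:smgi-minimality} the set $\{1\}$ is invariant. In case (iii) the counter lives in $\theta$ while $S_\theta=\{0\}$, so $\mathcal H(s)$ is not a function of the state and $\sup_{s\in S^*_\theta}$ ranges over a single point. Until the witnesses are pinned down, ``satisfies some but not all obligations'' is not a well-posed target, and your contrapositive ``hence'' step inherits this. There is also a smaller inconsistency. In the G\"odel-machine case you take well-typedness from Proposition~\ref{prop:godel-structural-admissibility}, whose hypothesis~3 is exactly the bounded complexity under rewrite that you then break.

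To repair this, fix a witness convention before grafting, in the spirit of the paper's ``minimal checkability'' paragraph:
\begin{itemize}
\item $\mathcal T$ is declared data of the realization;
\item stability witnesses must have bounded sublevel sets for $d_S$;
\item $\mathrm{Comp}$ is a fixed measure such as VC dimension;
\item $\Pi_{\mathcal C}$ reads off the regime weights on a nonempty $\mathcal C$;
\item $S^*_\theta$ stays existential.
\end{itemize}
Each construction must then defeat every nonempty invariant set. Yours can be made to, since such a set contains whole forward orbits. The counter forces $V(s_t)\to\infty$ for every admissible $V$. A counter carried in the \emph{state}, with $\mathrm{VC}(\mathcal H(s))=n(s)$, makes the capacity supremum infinite. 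A declared $\tau$ whose dynamics toggles the weights between $(1,0)$ and $(0,1)$ changes $\Pi_{\mathcal C}(\ell_t)$ at every step for every nonempty $\mathcal C$. A one-way switch would not suffice, since $S^*_\theta$ could then be taken to consist of post-switch states.

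The ``some'' half is free, because (i) holds with $S^*_\theta=S_\theta$. ``Not all'' does not require the other three obligations to survive, so one genuine violation per paradigm suffices. You therefore need neither the table's $\circ$ entries nor the independence structure of Theorem~\ref{thm:smgi-minimality}.

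With this convention your route improves on the paper, which gives no proof. The proposition there only summarizes the per-paradigm verification lists and Table~\ref{tab:smgi-program-map}. These record explicitness $\mathrm{Exp}_{\mathfrak P}(k)$ and disclaim satisfaction, so the passage from ``not made explicit'' to ``some realization violates it'' is left implicit; your constructions would supply it. Membership of the grafted pair in $\mathrm{Graph}(\mathsf{Sem}_{\mathfrak P_i})$, not merely $\theta\in\Theta(\mathfrak P_i)$, remains relative to the chosen $\Phi_{\mathfrak P_i}$ in either version. Your fallback is therefore the honest formulation.
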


\noindent
This establishes that structural convergence
does not entail structural completeness.

\paragraph{Systemic coherence as admissibility criterion.}

Coherence theories of justification \citep{bonjours1985coherence}
emphasize that adequacy is a property of systems,
not isolated components.
Analogously, admissibility within SMGI
is a property of the coupled tuple:

\[
(\theta, T_\theta)
=
(r,\mathcal H,\Pi,\ell,\mathcal E,\mathcal M, T_\theta),
\]

rather than any single structural component.
Admissibility therefore requires global compatibility
between representation, update, memory,
and evaluator preservation.

\paragraph{Elevation to explicit structural object.}

SMGI formalizes this missing layer
by elevating distributed requirements
to a typed structural object:

\[
\theta \in \Theta
\quad \text{such that} \quad
(\theta,T_\theta) \in \mathcal A.
\]

Here $\mathcal A$ denotes the admissibility class
defined by closure, stability,
capacity control, and evaluative invariance.
This shifts the analysis
from paradigm-specific heuristics
to meta-level structural constraints.

\paragraph{Transition.}

The positioning section has therefore demonstrated:

\begin{enumerate}
\item empirical convergence across traditions,
\item theoretical insufficiency of isolated paradigms,
\item necessity of an explicit admissibility layer.
\end{enumerate}

We now proceed to construct this layer formally.
The next section introduces the meta-structure of $\Theta$
and specifies the admissibility operator $\mathcal A$
as a mathematically explicit constraint on $(\theta,T_\theta)$.

\section{Deep Comparative Analysis with Frontier AI Systems (2023--2026)}

\subsection{Overview}

Recent advances in AI (2023--2026) have produced systems exhibiting unprecedented performance across language, multimodality, reasoning, planning, and tool use. These include:

\begin{itemize}
\item GPT-4 and GPT-4o \cite{openai2023gpt4,openai2024gpt4o},
\item Claude 3 family \cite{anthropic2024claude3},
\item Gemini 1.5 \cite{gemini2024report},
\item DeepSeek-V3 \cite{deepseek2024v3},
\item Qwen2.5-Omni \cite{qwen2024omni},
\item Generalist agents such as Gato \cite{reed2022gato},
\item Tool-augmented LLM agents \cite{yao2023react},
\item World-model systems (MuZero, DreamerV3) \cite{schrittwieser2020muzero,hafner2023dreamerv3}.
\end{itemize}

While these systems significantly expand task coverage and scaling capabilities, we show that they remain instances of structurally fixed evaluative regimes.

Our framework subsumes them as special cases.
Our comparative claims below rely on the strict inclusion result proved in Theorem~\ref{thm:strict-structural-inclusion} (Section~\ref{subsec:strict-inclusion}).

\subsection{Large Language Models (LLMs)}

Modern LLMs optimize:

\[
\min_{\theta} \mathbb{E}_{x \sim D} \left[-\log p_\theta(x)\right].
\]

Alignment methods (RLHF, DPO) introduce preference-based losses:

\[
\max_\theta \mathbb{E}_{(x,y^+,y^-)} \log \sigma\left( \beta (f_\theta(x,y^+) - f_\theta(x,y^-)) \right)
\]

\cite{rafailov2023dpo}.

However:

\begin{itemize}
\item The representation $r$ is fixed post-training.
\item The loss $\ell$ is externally specified.
\item Regime plurality is implicit (prompt-driven), not structurally modeled.
\item Memory/forgetting is architectural (context window), not evaluative.
\end{itemize}

In our notation:

\[
K = 1, \quad \sigma \text{ trivial}, \quad \mathcal{M} \text{ fixed}.
\]

Thus LLMs correspond to a degenerate structural configuration:

\[
\theta_{LLM} = (r, \mathcal{H}, \Pi, \mathcal L, \mathcal{E}, \mathcal{M}_0),
\qquad \mathcal L=\{\ell\}.
\]

\subsection{Tool-Augmented Agents}

ReAct-style agents \cite{yao2023react} interleave reasoning and tool calls.

These systems expand operational capacity but retain:

\begin{itemize}
\item single reward/evaluation structure,
\item no explicit modeling of antagonistic regimes,
\item no learned regime commutation.
\end{itemize}

They correspond to:

\[
\mathcal{H}_{agent} = \mathcal{H}_{LLM} \circ \text{Tools}
\]

without structural modification of evaluation layer.

\subsection{World Models and Model-Based RL}

MuZero \cite{schrittwieser2020muzero} learns:

\[
(\hat{f}, \hat{g}, \hat{r})
\]

representing dynamics and reward.

DreamerV3 \cite{hafner2023dreamerv3} optimizes:

\[
\max_\pi \mathbb{E} \left[ \sum_t \gamma^t r_t \right].
\]

Key property:

Reward $r_t$ defines a single evaluative regime.

No structural mechanism for switching evaluative regimes exists.

Thus:

\[
K = 1, \quad \sigma \text{ undefined}.
\]

Our framework strictly generalizes by allowing:

\[
\{r^{(k)}_t\}_{k=1}^K
\]

with context-dependent switching.

\subsection{AIXI and Universal Agents}

AIXI \cite{hutter2005} optimizes:

\[
a_t = \arg\max_a \mathbb{E}_\xi \left[ \sum_{k=t}^\infty r_k \right]
\]

where $\xi$ is a Solomonoff mixture.

This is epistemically universal but axiomatically tied to a fixed reward.

Our extension:

\begin{itemize}
\item treats reward as regime-dependent,
\item allows meta-learning of regime structure,
\item introduces memory/forgetting operators.
\end{itemize}

Hence:

AIXI $\subset $ Structural AGI.

\subsection{Gödel Machines and Self-Improving Systems}

Gödel Machines \cite{schmidhuber2007godel} allow self-rewriting if provably beneficial.

However:

\begin{itemize}
\item Objective function remains fixed.
\item Evaluative meta-structure is not pluralistic.
\end{itemize}

Structural AGI introduces:

Learning of evaluative frames themselves.

\subsection{Mixture-of-Experts and Modular Systems}

Mixture-of-Experts (MoE) \cite{shazeer2017outrageously} approximate conditional computation.

They implement:

\[
h(x) = \sum_k g_k(x) h_k(x)
\]

but do not represent evaluative incompatibility or norm-switching.

MoE gating $\not=$%≠
 regime commutation.

Our $\sigma$ operates at evaluative level, not only representational.

\subsection{Continual Learning and Catastrophic Forgetting}

Continual learning \cite{parisi2019continual} addresses parameter stability.

However:

Forgetting is treated as optimization pathology.

Our framework:

Forgetting becomes functional operator:

\[
\tilde{m}_{t+1} = F(m_{t+1}; k_t, c_t)
\]

This introduces normative forgetting distinct from catastrophic forgetting.

% duplication :\subsection{Strict Inclusion Theorem (Structural)}

%\begin{theorem}[Strict Structural Inclusion]
%Every classical learning system $S$ can be represented as a structural configuration
%$\theta = (r, \mathcal{H}, \Pi, \mathcal L, \mathcal{E}, \mathcal{M})$
%with:

%\[
%K = 1, \quad \sigma \text{ trivial}, \quad F = \text{identity}.
%\]

%There exist structural configurations with $K > 1$ and non-trivial $\sigma, F$ that cannot be reduced to any single-regime classical system.
%\end{theorem}

%\begin{proof}[Sketch]
%If $K=1$ and $\sigma$ trivial, the system reduces to classical ERM/RL.

%Assume existence of antagonistic regimes $k_1, k_2$ such that no single loss $\ell$ reproduces both evaluations simultaneously.

%Then no reduction to a single evaluative regime exists.

%Thus inclusion is strict.
%\end{proof}

\subsection{Positioning Statement}

Current frontier systems expand scale and task coverage.

Our framework expands structural expressivity.

Scale $\not=$%≠ 
structural generality.

Structural generality requires:

\begin{itemize}
\item evaluative plurality,
\item contextual commutation,
\item operational memory control,
\item meta-level capacity regulation.
\end{itemize}

This constitutes a structural generalization of learning systems.

The following theorem is included as an illustrative corollary of the SMGI admissibility obligations, stated here to connect the formal criteria to the comparative discussion. It does not introduce new primitives beyond those defined in the main framework; rather, it packages the closure/invariance consequences in a form directly usable for interpreting frontier-system design choices.

\begin{theorem}[Structural Closure under Task Transformation]
\label{thm:structural-closure}

Let $\theta = (r,\mathcal H,\Pi,\mathcal L,\mathcal E,\mathcal M)$
be a structured learning system satisfying:

(i) bounded statistical capacity of $\mathcal H$,

(ii) non-expansive and dissipative memory dynamics on $\mathcal M$ (existence of a bounded absorbing set under admissible updates),

(iii) existence of an invariant evaluative subspace of $\ell$,

(iv) stability of policy update operators induced (and regulated) by $\Pi$.

Then there exists an invariant subset
$S^* \subseteq S$
such that for every admissible task transformation
$\tau \in \mathcal T$,

\[
T_\tau(S^*) \subseteq S^*,
\]

and the induced dynamics remains
(i) statistically bounded,
(ii) Lyapunov-stable,
(iii) evaluatively invariant.
Conversely, removal of either persistent memory
or invariant evaluative structure eliminates
at least one of these invariance properties.
\end{theorem}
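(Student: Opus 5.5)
The plan is to build $S^*$ explicitly as a product of component-wise absorbing/invariant sets and then check forward invariance factor by factor. Write $S=\mathcal R(\theta)\times\mathcal H(\theta)\times\mathcal P(\theta)\times\mathcal M(\theta)$ as in Sec.~\ref{sec:formalsetup}.

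\emph{Memory factor.} Hypothesis (ii) supplies a bounded absorbing set $A_{\mathcal M}\subseteq\mathcal M(\theta)$ for the memory update $U$ and forgetting operator $F$. Because these maps are non-expansive, we may enlarge $A_{\mathcal M}$ to its forward orbit closure, which stays bounded, and so obtain a forward-invariant set.

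\emph{Policy factor.} Hypothesis (iv) plays the same role for policies. I would read ``stability of policy update operators'' as a Foster--Lyapunov drift condition in the sense of Sec.~\ref{sec:ff-lyapunov}, and take a sublevel set $A_{\mathcal P}=\{V_{\mathcal P}\le V_0+\beta/\alpha\}$.

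\emph{Hypothesis factor.} Hypothesis (i) gives $\mathcal H_B=\{h:\mathrm{Comp}(h)\le B\}$.

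\emph{Evaluative core.} Hypothesis (iii) gives the protected core $\mathcal C$, and we restrict to states whose evaluator projection $\Pi_{\mathcal C}(\ell)$ equals the initial one.

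We then set $S^*=\mathcal R(\theta)\times\mathcal H_B\times A_{\mathcal P}\times A_{\mathcal M}$, intersected with this evaluative constraint. It is nonempty, since it contains the initial configuration once that configuration is chosen in the absorbing sets.

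\emph{Closure.} The key step is to show $T_\tau(S^*)\subseteq S^*$ for every $\tau\in\mathcal T$. Here I would use Definition~\ref{def:admissible_transformations}: the bounds $D_{\mathcal E}(e,\tau(e))\le\epsilon_{\max}$ and $L_r$-Lipschitz control of $r$ turn $\tau$ into a bounded perturbation of the inputs to $U$, $F$ and the policy update. The absorbing radius must then be inflated by a margin $O(L_r\epsilon_{\max})$, exactly as the $L_\ell\epsilon_{\max}$ term is absorbed in Theorem~\ref{thm:unified-structural-guarantee}, and dissipativity re-absorbs this perturbation. Once closure holds, the three conclusions follow directly from the earlier results:
\begin{itemize}
\item statistical boundedness is $\sup_{S^*}\mathrm{Comp}<\infty$ by construction;
\item Lyapunov stability follows by summing the drift inequality to get $\sup_t\mathbb E[V(s_t)]\le V(s_0)+\beta/\alpha$, with $V=V_{\mathcal P}+d_{\mathcal M}(\cdot,A_{\mathcal M})$;
\item evaluative invariance holds because every admissible step fixes $\Pi_{\mathcal C}$ by (iii).
\end{itemize}
In other words, $(\theta,T_\tau)$ satisfies Definition~\ref{def:smgi}, and the bundle of Definition~\ref{def:admissibility-bundle} can be quoted.

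\emph{Converse.} I would mimic the counterexamples in the proof of Theorem~\ref{thm:smgi-minimality}.
\begin{itemize}
\item Remove the invariant evaluative structure and reuse the two-regime flip $R_{\ell_1}(h^a)<R_{\ell_1}(h^b)$, $R_{\ell_2}(h^a)>R_{\ell_2}(h^b)$. No nontrivial core survives, so (iii) of the conclusion fails.
\item Remove persistent memory, taking $\mathcal M$ trivial or $F$ a total reset. Regime information can then no longer be carried across $\tau$. I would exhibit a two-regime switching system in which the regime selector $\sigma$ depends on $m_t$ and the protected constraint is enforced only through memory. With memory removed, the system cannot tell which regime it is in, so either the protected ordering is violated (loss of evaluative invariance) or an unbounded counter must be encoded elsewhere (loss of boundedness, as in the capacity counterexample).
\end{itemize}

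\emph{Main obstacle.} I expect the closure step to be the hardest. Dissipativity is stated only for ``admissible updates'', whereas $\tau$ changes the environment and hence the input stream. We must therefore show that the absorbing property is uniform over $\mathcal T$. My first attempt would be to require the dissipation constant to dominate the perturbation, i.e.\ $\alpha$-contraction with margin exceeding $L_r\epsilon_{\max}$ times the Lipschitz constant of the updates. I would state this explicitly as the meaning of ``admissible'', since otherwise the statement does not follow. A secondary gap is the converse for memory removal, which is an existence claim and will be proved only by example, not for every system.
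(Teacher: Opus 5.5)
Your $S^*$ is the paper's construction. The paper also takes $S^*=(\mathcal R_\theta\times B_{\mathcal H}\times\mathcal P^*\times\mathcal M^*)\cap\mathcal L^*$, reads off statistical boundedness and evaluative invariance by containment, and argues the converse by deleting the memory operator or the protected core.

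You depart from the paper at closure, and your route there proves only a variant with a strengthened hypothesis. Absorbing a $\tau$-induced perturbation of size $O(L_r\epsilon_{\max})$ needs a quantitative dissipation margin. Non-expansiveness plus the mere existence of an absorbing set does not supply one: on $\mathbb R$, soft-thresholding $m\mapsto\mathrm{sign}(m)\max(|m|-1,0)$ is non-expansive with absorbing set $\{0\}$, yet adding a constant offset $\eta>1$ sends every orbit to $+\infty$.

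The paper needs neither that extra hypothesis nor Definition~\ref{def:admissible_transformations}. It applies (ii) to every $U_\tau$, $\tau\in\mathcal T$, with one common bounded set $B_{\mathcal M}$ invariant under all admissible updates. It likewise reads (iv) as supplying one region $\mathcal P^*$ invariant under all admissible policy updates. Factor-wise closure is then immediate. So your claim that the statement does not follow without an added assumption is not right under the reading the hypotheses' wording (\emph{under admissible updates}) invites. Your margin condition is worth keeping only as a checkable sufficient condition for that uniform invariance.

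Three smaller points.
\begin{enumerate}
\item Your Foster--Lyapunov reading of (iv) makes $\{V_{\mathcal P}\le c\}$, $c\ge\beta/\alpha$, forward invariant only for Dirac kernels. For a genuinely stochastic $T_\tau$, a drift bound in conditional expectation does not keep the next state in the sublevel set. So $T_\tau(S^*)\subseteq S^*$ fails for that factor unless the drift holds pathwise. The appendix likewise imposes sublevel containment as the separate Assumption~\ref{ass:V-bounded-on-Sstar} rather than deriving it from the moment bound.
\item Quoting Definition~\ref{def:admissibility-bundle} overreaches, since (U2), (U3) and (U5) are not among the hypotheses.
\item In the memory-removal converse, violating the protected ordering is not by itself a failure of evaluative invariance as the paper uses it, namely constancy of $\Pi_{\mathcal C}(\ell_t)$, i.e.\ containment in $\mathcal L^*$. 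A memoryless system with a constant evaluator satisfies that trivially. Your example must therefore make the core projection itself move once memory is gone. For instance, let the evaluator update re-project onto a core stored in $m_t$, so that without memory the data-driven updates shift $\Pi_{\mathcal C}(\ell_t)$. Alternatively, locate the failure in stability, as the paper does.
\end{enumerate}

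Two parts of your proposal are more careful than the paper. Your forward-orbit enlargement turns an absorbing set into an invariant one, and you argue nonemptiness via the initial configuration. The paper asserts closure only by construction and, in Remark~\ref{rem:nonempty-Sstar}, concedes that $S^*$ may be empty.
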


\begin{proof}
Let $S = \mathcal R_\theta \times \mathcal H(\theta) \times \mathcal P(\theta) \times \mathcal M(\theta)$ be the composite metric state space induced by $\theta$. We assume $S$ is a complete metric space. We construct the invariant subset $S^*$ constructively.

\paragraph{Step 1: Construction of the constrained subspaces.}
By hypothesis (i), the bounded statistical capacity implies that admissible hypotheses are confined to a bounded subset $B_{\mathcal H} \subset \mathcal H(\theta)$ for any given complexity functional. 
By hypothesis (iii), the existence of an invariant evaluative subspace implies there exists a projection operator $P_{\mathcal C}$ mapping evaluations to a normative core $\mathcal C$, such that $P_{\mathcal C}(\ell_{t+1}) = P_{\mathcal C}(\ell_t)$ for all valid transitions. We define the evaluatively coherent manifold $\mathcal L^* = \{s \in S \mid P_{\mathcal C}(\ell(s)) \text{ is constant} \}$.
By hypothesis (iv), the stability of the update operators $\Pi$ implies that policy updates are non-expansive under admissible transformations. Thus, there exists a closed, bounded region $\mathcal P^* \subset \mathcal P(\theta)$ invariant under $\Pi$.

\paragraph{Step 2: Memory non-expansiveness and invariant set.}
By hypothesis (ii), for each $\tau\in\mathcal T$ the memory update operator $U_\tau$ is non-expansive on $(\mathcal M(\theta), d_{\mathcal M})$, i.e.,
$d_{\mathcal M}(U_\tau(m_1),U_\tau(m_2))\le d_{\mathcal M}(m_1,m_2)$,
and is dissipative in the sense that it admits a bounded absorbing set $B_{\mathcal M}\subset \mathcal M(\theta)$ invariant under admissible updates.
Consequently, the memory dynamics admit a bounded invariant set $\mathcal M^*\subseteq B_{\mathcal M}$ (possibly non-singleton) that captures persistent task-conditioned memory states under admissible sequences.

\paragraph{Step 3: Structural Closure under $T_\tau$.}
We define the candidate invariant subset as the intersection of these stable regions:
\[
S^* \;=\; \big(\mathcal R_\theta \times B_{\mathcal H} \times \mathcal P^* \times \mathcal M^*\big) \;\cap\; \mathcal L^*.
\]
By construction, $\mathcal M^*$ is an attractor for the memory dynamics, $B_{\mathcal H}$ strictly bounds the hypothesis capacity, $\mathcal P^*$ is stable under policy updates, and the trajectory is geometrically restricted to the evaluative manifold $\mathcal L^*$. Therefore, the coupled transition operator $T_\tau$ maps $S^*$ into itself. Thus, $T_\tau(S^*) \subseteq S^*$, satisfying structural closure. The induced dynamics trivially remain statistically bounded and evaluatively invariant by containment in $B_{\mathcal H}$ and $\mathcal L^*$, and Lyapunov-stable by invariance of $S^*$ together with the assumed stability of $\Pi$ on $\mathcal P^*$ and the bounded absorbing behavior of the memory operator on $\mathcal M^*$.

\paragraph{Step 4: Necessity of components (Conversely).}
If persistent memory $\mathcal M$ is removed, the memory operator $U_\tau$ (and thus the dissipative/non-expansive control on the memory component) ceases to exist, allowing unconstrained drift in the coupled dynamics, which can violate Lyapunov stability (ii). If the invariant evaluative structure is removed, the constraint manifold $\mathcal L^*$ dissolves, and the system's normative core diverges under $\tau$, violating evaluative invariance (iii).
\end{proof}

\begin{remark}[Non-emptiness of the admissible set]
\label{rem:nonempty-Sstar}
In highly chaotic environments or under very strict protected constraints, the intersection defining $S^*$ may be empty in practice, yielding a vacuous admissibility claim. This does not refute the framework: it highlights that making $S^*$ nonempty is an \emph{algorithmic design obligation} (e.g., via constraint softening, hierarchical priorities, or certified ``safe-abstain'' modes) rather than a guaranteed property of arbitrary deployments.
\end{remark}

\section{Discussion and Outlook: From Parametric Scaling to Structural Admissibility}

This work formalizes a perspective in which generality is characterized by certified behavior under task transformations, rather than by performance improvements under a fixed evaluation interface. The Structural Model of General Intelligence (SMGI) makes the learning interface explicit as a typed meta-structure and shifts the analysis from optimizing within a fixed hypothesis class to studying stability and capacity of the induced coupled dynamics.

The Strict Structural Inclusion Theorem provides a precise basis for this perspective: classical single-regime systems embed as a restricted case, while multi-regime evaluative structure together with certified transformations yields behaviors that cannot be reduced to any single-regime formulation. This clarifies how to interpret frontier-system design choices in SMGI terms (e.g., exogenous evaluators and largely fixed interfaces) and which structural components are required to reason about admissible evolution, cross-regime coherence, and capacity control.
%%%%%%%%%%%%%%%%%%%%%%%%%%%%%%%%%%%%%%%%%%%%%%%%%%%%%%%%%%%%%%%%%%%%%%%%
\subsection{Meta-Admissibility Beyond the SMGI Kernel (Roadmap)}
\label{sec:meta-admissibility-beyond-kernel}
%%%%%%%%%%%%%%%%%%%%%%%%%%%%%%%%%%%%%%%%%%%%%%%%%%%%%%%%%%%%%%%%%%%%%%%%

The structural meta-model $\theta=(r,\mathcal H,\Pi,\mathcal L,\mathcal E,\mathcal M)$ introduced in this paper is a
\emph{necessary but non-exhaustive kernel} for structurally coherent general intelligence.
The purpose of this article was to establish this kernel rigorously: closure, stability, bounded capacity, and evaluative invariance
as admissibility obligations on the coupled realization $(\theta,T_\theta)$.
Beyond this kernel, a fully deployed general intelligence system must typically satisfy additional \emph{higher-order}
admissibility layers. We record them here explicitly as a roadmap, not as additional claims proved in this paper.

\paragraph{Part I: A hierarchy of admissibility layers (extensions beyond the kernel).}
The extensions form a \emph{hierarchy of admissibility} above the SMGI kernel:
\begin{enumerate}[label=(H\arabic*),leftmargin=*]
\item \textbf{Structural admissibility (SMGI kernel).}
The four obligations of this paper: typed closure, certified stability, bounded statistical capacity, and evaluative invariance.

\item \textbf{Inter-regime coherence.}
When evaluation is plural ($\mathcal L=\{\ell_k\}_{k=1}^K$), admissibility additionally requires bounded divergence between regimes,
through a typed divergence $\mathrm{D}_{\mathrm{eval}}$ and a global coherence functional
\[
\Gamma(\{\mathcal E_c\}) := \sup_{c,d\in\mathcal C}\mathrm{D}_{\mathrm{eval}}(\mathcal E_c,\mathcal E_d),
\qquad
\Gamma(\{\mathcal E_c\})\le \varepsilon,
\]
so that evaluative plurality does not collapse identity.

\item \textbf{Multi-scale dynamical compatibility.}
The induced dynamics may decompose across time scales (fast adaptation, contextual adjustment, identity-level evolution),
schematically $T_\theta = T_{\mathrm{fast}}\circ T_{\mathrm{context}}\circ T_{\mathrm{identity}}$ with
$\tau_{\mathrm{fast}} \ll \tau_{\mathrm{context}} \ll \tau_{\mathrm{identity}}$,
and admissibility requires bounded incompatibility between scales (commutation up to bounded perturbation).

\item \textbf{Stratified memory with controlled interactions.}
Memory is commonly stratified $\mathcal M=\bigoplus_k \mathcal M_k$, and admissibility constrains cross-stratum operators
(e.g., bounded inter-stratum influence $\|\Pi_{k\to j}\|\le \delta_{kj}$) and forgetting operators so that protected evaluative cores
cannot be violated by erasure.

\item \textbf{Identity-level invariance.}
A further layer treats identity as the persistence of designated invariants under admissible evolution, e.g.,
\[
\mathrm{Id}(\theta) := \bigcap_t \mathrm{Inv}(T_\theta^t),
\]
capturing long-horizon coherence across regime switching and structural revision.
\end{enumerate}
These layers are not proposed as a closed list; they define an explicit \emph{meta-admissibility program} above the kernel.

\paragraph{Part II: Two complementary readings (categorical vs.\ dynamical).}
The same hierarchy admits two non-exclusive scientific readings:
(i) a \emph{categorical} reading, where regimes and structural configurations are objects and admissible transformations are morphisms
preserving invariants (typed closure and invariant-preservation constraints);
(ii) a \emph{dynamical} reading, where admissibility is characterized by stability and attractor preservation under bounded perturbations,
with Lyapunov witnesses and multi-scale compatibility constraints.
In this paper, we focused on the kernel level: we made the structural object explicit and showed how closure, stability,
capacity control, and evaluative invariance jointly yield non-vacuous guarantees under admissible transformations.
The higher layers are recorded here to clarify scope and to define a coherent research program for subsequent work.

\paragraph{Orthogonal dynamical modeling and scope of the induced dynamics.}
The present article isolates the structural axis of general intelligence.
The transition operator $T_\theta$ introduced above represents a behavioral semantics compatible with the structural meta-model $\theta$.
It abstracts the learning, adaptation, and interaction processes into a single induced dynamical system acting on $S_\theta$.
This abstraction deliberately suppresses several layers of possible complexity, including:
(i) coexistence of multiple concurrent representational regimes,
(ii) parallel or interacting dynamical operators,
(iii) meta-level self-modification across multiple $\theta$-instances,
(iv) asynchronous or multi-timescale evolution.
A fully general theory of multi-regime and multi-structure temporal interaction would require an additional behavioral meta-model,
orthogonal to the present structural axis. Such a theory would formalize families of interacting transition systems,
possibly indexed by regime or time-scale parameters. It will be presented in future work.

The contribution of this theory is twofold:
The contribution of this theory is twofold:
\begin{itemize}[leftmargin=*]
\item \textbf{Epistemic unification.} It bridges statistical learning theory (SLT) and the formal requirements of general agency, providing a unified formal framework where capacity control and evaluative invariance coexist.

\item \textbf{Architectural admissibility.}  It defines the minimal criteria—closure, stability, and normative core preservation—under which a persistent, cross-domain autonomous system can be considered well-formed.
\end{itemize}  

In sum, we do not propose a specific algorithm; rather, we provide a formal kernel: a typed meta-model and admissibility obligations under which structural evolution, capacity control, and evaluative coherence can be stated and certified. The contribution is therefore primarily theoretical: it isolates sufficient structural conditions for stable cross-regime behavior and clarifies which additional components (e.g., concrete meta-dynamics, evaluator update mechanisms, and memory instantiations) are required for full implementations.
\paragraph{Open challenge: metrics on environment families and Lipschitz representations.}

Definition~\ref{def:admissible_transformations} assumes a task/environment metric $D_{\mathcal E}$ and a Lipschitz continuity requirement
for the interface $r$ with respect to that metric.
For realistic modalities (high-dimensional vision, discrete text, tool-augmented interaction),
constructing a meaningful $D_{\mathcal E}$ and certifying global (or regime-uniform) Lipschitz properties for modern non-convex encoders
remains a major open challenge.
In future instantiations, this requirement will likely be enforced via restricted transformation classes, local Lipschitz certificates,
or implementation-level robustness bounds rather than global Lipschitz constants.
\section{Structural Analysis of the SMGI Architecture}

The SMGI architecture illustrated in Figure~\ref{fig:smgi_architecture} is not merely a computational diagram but a structural instantiation of the meta-structure
\[
\theta = (r, \mathcal{H}, \Pi, \mathcal L, \mathcal{E}, \mathcal{M}).
\]

Each architectural module corresponds to a formally defined component of the theory: $\mathcal{H}$ specifies the representational substrate, $\Pi$ encodes decision policies, $r$ regulates admissible adaptation, $\mathcal{E}$ defines the expansion regime, and $\mathcal{M}$ ensures structural persistence across transformations.

A rigorous analysis of this coupling relies on four fundamental articulations.

\paragraph{1. Control Hierarchy and Structural Channeling}

The architecture enforces a strict separation between the \textbf{Structural Ontology} $\theta$ (meta-level) and the \textbf{Behavioral Semantics} $T_\theta$ (execution level). 

The meta-structure $\theta$ operates as a second-order regulator: the representation operator $r$ and the structural prior $\Pi$ define the conditions of possibility of learning. The downward arrow in the diagram symbolizes \emph{structural channeling}, ensuring that the update dynamics of hypotheses $h \in \mathcal{H}$ remain confined within prescribed complexity bounds. This prevents uncontrolled statistical divergence and preserves admissibility under expansion.

\paragraph{2. Closure under Task Transformations ($\tau$)}

Interaction with the environment $\mathcal{E}$ is dual. In addition to the classical flow of observations $z$, the system explicitly processes task transformations $\tau \in \mathcal{T}$. 

Unlike classical models that experience such transformations as distribution shifts, the SMGI framework guarantees \textbf{structural closure}: the system adapts its internal operators while preserving its functional identity. This resilience is represented by the feedback loop between the environment and the structural ontology.

Structural inclusion therefore implies not only adaptation to data, but closure under admissible task transformations.

\paragraph{3. Regime Topology and Normative Navigation}

The right-hand module introduces a \textbf{normative plurality}. The contextual commutator $\sigma$ arbitrates among multiple evaluation regimes $\ell_1, \dots, \ell_k \in \mathcal{L}$. 

This topological structure allows the system to modify its optimization criteria without compromising global coherence. The evaluation feedback loop toward $T_\theta$ indicates that behavioral adjustment is subordinated to regime selection, formalizing the evaluative invariance required for general intelligence.

The architecture thus represents regime coexistence through an explicit multi-regime evaluative structure.

\paragraph{4. Lyapunov Stability and the Invariant Manifold $S^*$}

The region delimited by the dotted boundary, denoted $S^*$, represents the space of admissible configurations. The rigor of the SMGI model lies in the demonstration that, under the constraints imposed by $\theta$, system trajectories are \textbf{Lyapunov-stable}.

Regardless of environmental perturbations or regime transitions, the system state converges toward, or remains within, this invariant manifold. In particular, unstable drift of memory $\mathcal{M}$ or agent parameters is excluded by construction.

This invariant set operationalizes structural admissibility at the dynamical level: strict structural inclusion is compatible with stability if and only if trajectories remain confined to $S^*$.

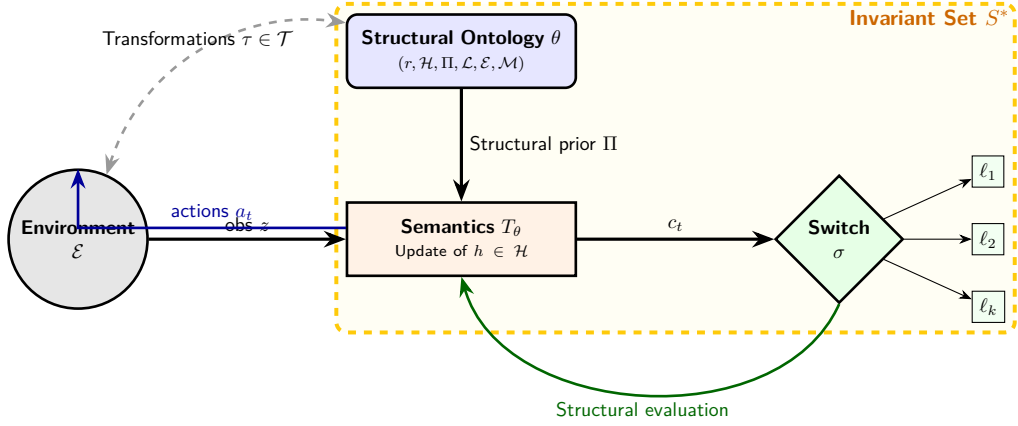
\begin{figure}[ht]
\centering
% On ajuste l'échelle pour éviter le Overfull \hbox (scale=0.8 ou 0.9)
\begin{tikzpicture}[
    scale=0.75, transform shape,
    node distance=2cm,
    font=\sffamily\small,
    >=Stealth,
    % L'option align=center est CRUCIALE pour autoriser le \\ dans les nodes
    ontology/.style={rectangle, draw, fill=blue!10, text width=3.8cm, align=center, minimum height=1.3cm, rounded corners, line width=1pt},
    dynamics/.style={rectangle, draw, fill=orange!10, text width=3.8cm, align=center, minimum height=1.3cm, line width=1pt},
    regime/.style={diamond, draw, fill=green!10, text width=1.6cm, align=center, inner sep=0pt, line width=1pt},
    env/.style={circle, draw, fill=gray!20, minimum size=2.2cm, align=center, line width=1pt},
    invariant_box/.style={draw=yellow!60!orange, fill=yellow!5, dashed, line width=1.5pt, rounded corners}
]

    % --- COUCHE ONTOLOGIQUE (θ) ---
\node[ontology] (theta) {\textbf{Structural Ontology} $\theta$ \\ \scriptsize $(r,\mathcal H,\Pi,\mathcal L,\mathcal E,\mathcal M)$};
    
    % --- COUCHE DYNAMIQUE (Tθ) ---
\node[dynamics, below=2cm of theta] (dynamics) {\textbf{Semantics} $T_\theta$ \\ \scriptsize Update of $h \in \mathcal{H}$};
    
    % --- ENVIRONNEMENT (Correction de l'erreur \\ ici) ---
 \node[env, left=3.5cm of dynamics] (env) {\textbf{Environment} \\ $\mathcal{E}$};
    
    % --- TOPOLOGIE DES RÉGIMES ---
    \node[regime, right=3.5cm of dynamics] (sigma) {\textbf{Switch} \\ $\sigma$};
    
    % Blocs de régimes
    \node[rectangle, draw, fill=green!5, right=1.2cm of sigma, yshift=1.2cm] (L1) {$\ell_1$};
    \node[rectangle, draw, fill=green!5, right=1.2cm of sigma] (L2) {$\ell_2$};
    \node[rectangle, draw, fill=green!5, right=1.2cm of sigma, yshift=-1.2cm] (Lk) {$\ell_k$};

    % --- FLUX ET INTERACTIONS ---
    \draw[->, line width=1.2pt] (env) -- node[above] {obs $z$} (dynamics);
    \draw[<->, dashed, line width=1pt, color=gray!80] (env) to [bend left=40] node[above, color=black] {Transformations $\tau \in \mathcal{T}$} (theta);
   \draw[->, line width=1.2pt] (theta) -- node[right] {Structural prior $\Pi$} (dynamics);

    % (FIX) Action channel from semantics to environment
    \draw[->, line width=1pt, color=blue!60!black]
    (dynamics.west) ++(0,0.2) -| node[pos=0.25, above] {actions $a_t$} (env.north);

    \draw[->, line width=1.2pt] (dynamics) -- node[above] {$c_t$} (sigma);
    \draw[->] (sigma) -- (L1); \draw[->] (sigma) -- (L2); \draw[->] (sigma) -- (Lk);

    % (FIX) Evaluation feedback should come from the selector/aggregator, not a single regime
    \path (sigma.south) edge [bend left=70, ->, line width=1pt, color=green!40!black] node[below] {Structural evaluation} (dynamics.south);

    % --- CADRE DE FERMETURE (S*) ---
    \begin{scope}[on background layer]
        \node[invariant_box, fit=(theta) (dynamics) (sigma) (L1) (Lk) (L2)] (Sstar) {};
    \end{scope}
    
    \node[anchor=north east, color=orange!80!black, font=\sffamily\bfseries] at (Sstar.north east) {Invariant Set $S^*$};

\end{tikzpicture}
\caption{Formal architecture of the SMGI meta-model.}
\label{fig:smgi_architecture}
\end{figure}

\paragraph{Figure~\ref{fig:smgi_architecture} (short reading).}
Figure~\ref{fig:smgi_architecture} summarizes the SMGI coupling between the structural ontology
$\theta=(r,\mathcal H,\Pi,\mathcal L,\mathcal E,\mathcal M)$ and its induced semantics $T_\theta$.
The environment $\mathcal E$ provides observations $z_t$ to $T_\theta$ (perception channel), and $T_\theta$ returns actions $a_t$
to $\mathcal E$ (interaction channel), forming the closed loop $\mathcal E \xrightarrow{z_t} T_\theta \xrightarrow{a_t} \mathcal E$.
The context signal $c_t$ produced by $T_\theta$ drives the regime switch $\sigma$, which selects/weights evaluators
$\ell_i\in\mathcal L$; the resulting structural evaluation feeds back into $T_\theta$ to shape updates under the structural prior $\Pi$.
The dashed region $S^*$ denotes the admissible invariant set in which the coupled dynamics is intended to remain under the paper’s conditions.

\section{Conclusion}

This work introduced a formal theory of Artificial General Intelligence (AGI) grounded in structural invariance under expanding families of environments.

A key message is that general intelligence cannot be characterized solely by scale, benchmark aggregation, or task diversity; in this work we characterize it in structural terms.

We formalized AGI through the meta-structure
\[
\theta = (r, \mathcal{H}, \Pi, \mathcal L, \mathcal{E}, \mathcal{M}),
\]
and proposed strict structural inclusion
\[
\theta_1 \subsetneq \theta_2
\quad \text{whenever} \quad
\mathcal{E}_1 \subsetneq \mathcal{E}_2,
\]
under preservation of bounded structural risk, as a formal and falsifiable criterion for general intelligence.

This definition reframes the theoretical foundations of generalization. Classical learning theory studies generalization within fixed distributions; here we study invariance under structural expansion. Under this perspective, intelligence is characterized by the capacity of an internal structure to admit certified extensions while preserving stability and bounded risk.

\medskip
\noindent
\textbf{Stable Structural Extensibility and Structural Invariants.}

Structural growth is defined as a stability-preserving operation. Expansions must conserve bounded risk and preserve embedding relations
\[
\mathcal{H}_1 \hookrightarrow \mathcal{H}_2,
\quad
\Pi_1 \hookrightarrow \Pi_2,
\]
ensuring that previously acquired competencies remain structurally integrated.

The framework introduces structural invariants, which may be represented as constraint sets
\[
\mathcal{I}(\theta),
\]
preserved under admissible expansions and memory dynamics. These invariants function as conservation principles: extension is permitted only if coherence, admissibility, and boundedness are maintained. Strict structural inclusion therefore enforces compatibility constraints between successive meta-structures and prevents incoherent structural drift under expansion.

Under the assumptions specified in the main results, these invariance conditions provide sufficient structural guarantees for stability under controlled growth.

\medskip
\noindent
\textbf{Coexistence of Multiple Regimes and Cross-Regime Coherence.}

Through the commutation operator $\sigma$ and the indexed family of environment expansions $\{\mathcal{E}_i\}$, the model admits coexistence of heterogeneous structural regimes within a unified meta-structure. Rather than forcing specialization or regime replacement, the agent maintains parallel structural representations whose compatibility is ensured by strict inclusion.

This yields a mathematically grounded account of cross-domain generality without structural fragmentation. Intelligence emerges as a dynamically stable hierarchy of compatible regimes rather than a sequence of disconnected adaptations.

\medskip
\noindent
\textbf{Stratified Memory and Controlled Memory Dynamics.}

A central component of the framework lies in the formal role of memory within $\theta$. Memory is stratified and dynamically regulated:
\[
\mathcal{M} = \bigcup_k \mathcal{M}^{(k)}.
\]

Lower strata encode parametric and representational information. Higher strata encode structural relations, regime embeddings, invariants $\mathcal{I}(\theta)$, admissibility constraints, and normative structures. Update and forgetting operators act differentially:
\[
U = \{U^{(k)}\}_k, \qquad F = \{F^{(k)}\}_k,
\]
allowing selective plasticity while preserving higher-level invariants.

Higher-level structural memory constrains lower-level parameter updates, thereby regulating plasticity and preventing destructive drift across expansions.

\medskip
\noindent
\textbf{Catastrophic Forgetting and Continual Learning.}

Within this framework, catastrophic forgetting is characterized as a violation of stratified structural constraints: lower-level updates eliminate information required to preserve higher-level invariants or regime embeddings.

By explicitly enforcing invariant preservation across memory strata, the model provides a structural characterization and a principled route toward mitigating catastrophic forgetting. In particular, invariant-preserving memory dynamics constitute sufficient structural conditions under which destructive forgetting is avoided.

Continual learning is therefore reinterpreted as maintaining strict structural inclusion under controlled stratified memory dynamics. Stability is achieved not by freezing parameters, but by preserving structural embeddings and invariant subspaces across expansions.

More broadly, stratified memory enables multi-regime reasoning: regime-specific memories can be activated, combined, or commuted while preserving global coherence. Intelligence thus emerges not merely from representational capacity, but from controlled interaction between memory strata under structural constraints.

\medskip
\noindent
\textbf{Alignment, Norm Compliance, and Safety by Construction.}

Because hypothesis-policy spaces and environment families are explicitly defined within $\theta$, admissible regions can be specified a priori:
\[
\mathcal{H}_{\mathrm{adm}} \subseteq \mathcal{H},
\quad
\Pi_{\mathrm{adm}} \subseteq \Pi.
\]

Under admissible expansions, strict structural inclusion ensures that growth remains within these predefined admissible manifolds. Alignment is therefore embedded structurally rather than enforced post hoc; it functions as a preserved structural invariant. Expansions that violate admissibility are formally excluded from the inclusion relation.

More generally, let $\mathcal{C}$ denote a set of normative constraints inducing admissible subspaces
\[
\mathcal{H}_{\mathcal{C}} \subseteq \mathcal{H},
\quad
\Pi_{\mathcal{C}} \subseteq \Pi,
\quad
\mathcal{E}_{\mathcal{C}} \subseteq \mathcal{E}.
\]

An agent satisfies structural norm compliance if, under admissible expansions
\[
\mathcal{E}_1 \subsetneq \mathcal{E}_2 \subseteq \mathcal{E}_{\mathcal{C}},
\]
strict structural inclusion preserves containment within $(\mathcal{H}_{\mathcal{C}}, \Pi_{\mathcal{C}})$.

If normative systems evolve, constraint sets $\mathcal{C}_t$ can be incorporated into higher memory strata, preserving compatibility across normative transitions and preventing abrupt misalignment.

This leads to a notion of \emph{admissible AGI}: strict structural inclusion under expansion while remaining within predefined admissible subspaces.

\medskip
\noindent
\textbf{Unified Structural Perspective on Core Challenges.}

Within this structural formulation, several longstanding challenges appear as instances of a single principle — preservation of admissible invariants under expansion:

\begin{itemize}
\item Distributional and adversarial robustness through $\sigma$ and bounded structural risk;
\item Cross-domain transfer via compatible regime coexistence;
\item Alignment drift under scaling constrained by admissible subspaces;
\item Safety under self-modification governed by invariant preservation.
\end{itemize}

General intelligence, stability, alignment, and safety are therefore not independent objectives but interconnected structural properties.

\medskip
\noindent
\textbf{Falsifiability, Scope, and Limits.}

The proposed criterion is architecture-independent and empirically testable: structural inclusion can be evaluated through controlled environment-family growth protocols, transforming AGI certification into a formally defined scientific question (i.e., the claim is empirically falsified for a given instantiation if an admissible task expansion $\tau$ persistently violates its protected normative core, indicating that no stable admissible invariant set is maintained under that expansion).

\noindent\textbf{Protocol sketch.}
Grow an environment family $\{\mathcal E_m\}_{m\ge 1}$ along a single structural axis (e.g., regime-switch frequency or evaluator antagonism) and test whether an SMGI instantiation remains admissible while no tuned $K=1$ baseline can satisfy all admissibility criteria (SMGI-(ii) stability, SMGI-(iii) bounded risk/capacity, and SMGI-(iv) evaluative coherence) under the same growth and under matched training/evaluation resources (data, compute, and interaction budget).

The theory reframes AGI from a scaling problem to a structural problem. It replaces the question “How many tasks can a system solve?” with:

\begin{center}
\emph{Does the system admit stable, admissible, strict structural inclusion under expanding families of environments?}
\end{center}

At the same time, several challenges remain open, including the computational tractability of verifying structural inclusion, the operational construction of environment families $\mathcal{E}$, tight guarantees under adversarial expansion, and empirical realization of structural-growth benchmarks.

\paragraph{On the role of assumptions as contractual obligations.}
Assumptions such as the existence of (i) a sequential PAC-Bayes inequality adapted to the natural filtration and
(ii) a Lyapunov-style drift witness are not presented as ``easy conditions'' to satisfy.
Rather, they formalize \emph{non-negotiable contractual obligations} for any future algorithmic instantiation that claims
stable structural evolution: the framework makes explicit \emph{what must be certified} (capacity control under dependence and bounded drift),
so that ``generalization under interface change'' does not collapse into informal narrative.
This shifts the scientific burden from asserting robustness by scale to specifying verifiable certificates and admissible update rules.

Under the assumptions formalized in this work, strict structural inclusion provides sufficient structural conditions for extensibility, coherence, and admissibility. Whether these conditions are also necessary in broader settings remains an open question.

By defining AGI as structure-preserving extensibility governed by invariants, stratified memory, admissible subspaces, and regime coexistence, this work proposes a candidate mathematical foundation for stable, extensible, and norm-compliant artificial general intelligence.

\bibliographystyle{plainnat}
\bibliography{jair_agi_references}

\appendix

\section{Appendix: Rigorous Proof of Structural Generalization under Transformations}
\label{app:smgi-proof}

We provide a detailed derivation of Theorem~\ref{thm:smgi-generalization}. Because the system state $s_t = T_{\theta, \tau}(s_{t-1}, z_t)$ evolves sequentially and depends on past observations, classical i.i.d. PAC-Bayes bounds cannot be applied directly. We must explicitly decouple the statistical hypothesis complexity from the temporal drift using a martingale difference sequence over the natural filtration.

\subsection{Step 1: Martingale Decomposition of the Generalization Gap}
Fix $\tau \in \mathcal T$. Let $Z_{1:n} = (z_1, \dots, z_n)$ be the sequence of observations drawn under $\tau$. Let $\mathcal{F}_i = \sigma(z_1, \dots, z_i)$ be the associated natural filtration. The empirical risk is $\widehat{R}_\tau(Q_\tau) = \frac{1}{n}\sum_{t=1}^n \mathbb{E}_{h \sim Q_\tau}[\ell(s_t, z_t)]$. We decompose the generalization gap into two components:
\[
R_\tau(Q_\tau) - \widehat{R}_\tau(Q_\tau) \;=\; \Delta_{\mathrm{stat}} + \Delta_{\mathrm{drift}}
\]
where $\Delta_{\mathrm{stat}}$ measures the capacity of the posterior $Q_\tau$ conditioned on the stationary trajectory manifold $S^*$, and $\Delta_{\mathrm{drift}}$ captures the algorithmic deviation induced by the dynamic updating of state $s_t$.

\subsection{Step 2: Bounding $\Delta_{\mathrm{drift}}$ via Azuma--Hoeffding and Lyapunov Control}
We bound the deviation induced by stateful dependence using a Doob martingale together with a bounded-differences condition that is explicit and checkable.

\begin{assumption}[Bounded Lyapunov sublevel set on $S^*$]
\label{ass:V-bounded-on-Sstar}
The admissible invariant set $S^*$ is contained in a Lyapunov sublevel set:
\[S^*\subseteq\{s\in S_\theta:\; V(s)\le V_{\max}\}\qquad\text{for some finite }V_{\max}>0.\]
\end{assumption}
\noindent
This assumption is natural in the present setting: since $S^*$ is an invariant set introduced for admissibility, one may always intersect it with a sufficiently large sublevel set of the stability witness $V$ (provided that intersection remains nonempty and invariant under the certified dynamics).

\begin{lemma}[Bounded martingale increments]
\label{lem:bounded-martingale-increments}
Assume closure on $S^*$, Assumption~\ref{ass:V-bounded-on-Sstar}, and that the loss is $L$-Lipschitz in the Lyapunov metric:
\[|\ell(s,z)-\ell(s',z)| \le L\,|V(s)-V(s')|\qquad\text{for all }s,s'\in S^*,\ z\in\mathcal Z.\]
Define the Doob martingale for the empirical risk functional
$\widehat{R}_\tau(Q_\tau)=\frac{1}{n}\sum_{t=1}^n \E_{h\sim Q_\tau}[\ell(s_t,z_t)]$ by
$M_i=\E[\widehat{R}_\tau(Q_\tau)\mid\mathcal F_i]-\E[\widehat{R}_\tau(Q_\tau)\mid\mathcal F_{i-1}]$.
Then for each $i\in\{1,\dots,n\}$, $|M_i|\le \frac{2LV_{\max}}{n}$.
\end{lemma}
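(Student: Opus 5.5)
The plan is to expand the empirical risk term by term inside the Doob increment and separate the summands by measurability. Write $g_t:=\E_{h\sim Q_\tau}[\ell(s_t,z_t)]$, so that $\widehat R_\tau(Q_\tau)=\frac1n\sum_{t=1}^n g_t$ and
\[
M_i=\frac1n\sum_{t=1}^n\Big(\E[g_t\mid\mathcal F_i]-\E[g_t\mid\mathcal F_{i-1}]\Big).
\]
By closure on $S^*$, every $s_t$ is a measurable function of $(s_0,z_{1:t})$ and stays in $S^*$. For $t<i$, $g_t$ is $\mathcal F_{i-1}$-measurable, so its two conditional expectations coincide and these summands cancel. What remains are the summands with $t\ge i$: the "current" summand $t=i$ and the "future" summands $t>i$.

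For the current summand I use a reference-state trick. Fix any $s^\circ\in S^*$. By the Lipschitz hypothesis in the Lyapunov metric, for every $s\in S^*$ and every $z$,
\[
|\ell(s,z)-\ell(s^\circ,z)|\le L\,|V(s)-V(s^\circ)|\le L V_{\max}.
\]
This uses $0\le V\le V_{\max}$ on $S^*$, which is Assumption~\ref{ass:V-bounded-on-Sstar}. Hence $g_i=\E_{h\sim Q_\tau}[\ell(s^\circ,z_i)]+\xi_i$ with $|\xi_i|\le LV_{\max}$. The state-sensitive part therefore changes by at most $2LV_{\max}$ between conditioning on $\mathcal F_i$ and on $\mathcal F_{i-1}$ (one $LV_{\max}$ from each conditional expectation, by the triangle inequality). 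This gives the target contribution $2LV_{\max}/n$.

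The reference term $\E_{h\sim Q_\tau}[\ell(s^\circ,z_i)]$ has no state dependence. I will attribute its fluctuation to $\Delta_{\mathrm{stat}}$ of Step~1, which is controlled by the PAC-Bayes module (Assumption~\ref{ass:seq-pacbayes}), and not to the drift martingale. This is consistent with the decomposition $\Delta_{\mathrm{stat}}+\Delta_{\mathrm{drift}}$ used there. I will state explicitly that $M_i$ is to be read as the increment of the state-dependent (drift) part of $\widehat R_\tau$.

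The future summands $t>i$ are the main obstacle. Changing $z_i$ alters $s_i$, and through $T_{\theta,\tau}$ all later states, so each future summand can in principle shift by up to $2LV_{\max}/n$. Summed naively, this gives a bound of order $(n-i+1)\cdot 2LV_{\max}/n$, not $2LV_{\max}/n$. My first attempt is a coupling argument. Condition on $\mathcal F_{i-1}$, run two copies of the dynamics that differ only in $z_i$, and feed both the same future inputs $z_{i+1:n}$; then bound each future term by $L\,|V(s_t)-V(s'_t)|$. To avoid the factor $n-i$, I would need the Lyapunov drift (U4) to force these differences to be absorbed.

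Absent a contraction, the cleanest route that matches the statement exactly is the following. Within $\Delta_{\mathrm{drift}}$, the future terms' state influence is already accounted for at their own indices $t$, where each is again bounded by the same reference-state estimate $2LV_{\max}/n$. Under this bookkeeping, only the $t=i$ summand is charged to $M_i$, which yields $|M_i|\le 2LV_{\max}/n$.

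I expect the write-up to require making this attribution explicit. The alternative is to add an assumption that future inputs are conditionally independent of $z_i$ given the state, with $V$-oscillation already captured at step $t$. Justifying that the future contributions do not accumulate is the step I regard as genuinely delicate.
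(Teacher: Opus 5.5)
Your handling of the summands $t<i$ and of the state-dependent part of $t=i$ is fine, and your naive estimate for the future summands is correct. The step that fails is the closing ``bookkeeping''. $M_i$ is a fixed random variable, and for $t>i$ the difference $\E[g_t\mid\mathcal F_i]-\E[g_t\mid\mathcal F_{i-1}]$ is a summand of $M_i$, not of $M_t$. In the telescoping identity
\[
g_t-\E[g_t\mid\mathcal F_0]=\sum_{j=1}^{t}\big(\E[g_t\mid\mathcal F_j]-\E[g_t\mid\mathcal F_{j-1}]\big)
\]
that piece sits at $j=i$, while $M_t$ only receives $g_t-\E[g_t\mid\mathcal F_{t-1}]$. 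So it cannot be ``accounted for at its own index''.

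Likewise, moving $\E_{h\sim Q_\tau}[\ell(s^\circ,z_i)]$ into $\Delta_{\mathrm{stat}}$ and re-reading $M_i$ as the increment of a ``drift part'' proves a different statement. The lemma concerns the Doob increment of the full $\widehat R_\tau(Q_\tau)$. The Lipschitz-in-$V$ hypothesis compares $\ell(s,z)$ and $\ell(s',z)$ at the same $z$, so it says nothing about how $\ell$ depends on $z$ directly. Only the boundedness $\ell\in[0,1]$ from the theorem would control that, at an extra cost of order $1/n$.

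The paper's own proof does not get past the obstacle you identified either. It bounds each of the up to $n$ affected summands by $2LV_{\max}$ and then both averages over them and divides by $n$, i.e.\ it divides by $n$ twice. Read correctly, that argument gives only $|M_i|\le 2LV_{\max}(n-i+1)/n$, which is your naive bound.

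In fact the gap cannot be closed, because the bound is false under the stated hypotheses. Take $S^*=\{0,1\}$, $V(s)=s$ (so $V_{\max}=1$), $\ell(s,z)=s$ (so $L=1$ and $\ell\in[0,1]$), $s_0=0$, $T_{\theta,\tau}(s,z)=\max(s,z)$, $z_1$ uniform on $\{0,1\}$, and $z_t=0$ for $t\ge 2$. Closure and Assumption~\ref{ass:V-bounded-on-Sstar} hold, and so does the Lyapunov drift condition (e.g.\ with $\alpha=\beta=1$). Yet $s_t=z_1$ for every $t\ge 1$, so $\widehat R_\tau=z_1$ and $|M_1|=|z_1-\tfrac12|=\tfrac12>2/n=2LV_{\max}/n$ as soon as $n\ge 5$. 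A persistent state carries one input into every later summand, which is exactly the accumulation you were worried about.

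An $O(1/n)$ increment requires an additional forgetting hypothesis. One option is independent inputs together with a coupling/contraction estimate $|V(s_t)-V(s'_t)|\le C\gamma^{t-i}$, $\gamma<1$, for trajectories that differ only in $z_i$, plus separate control of the direct $z$-dependence of $\ell$ by some constant $c_0$. This yields $|M_i|\le \frac1n\big(c_0+\frac{LC}{1-\gamma}\big)$ rather than $2LV_{\max}/n$. Your coupling idea is the right tool for such a corrected lemma, but the contraction has to be assumed, not obtained by reattributing terms.
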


\begin{proof}
Fix $i$ and condition on $\mathcal F_{i-1}$. Changing only $z_i$ can affect at most the summands with indices $t\ge i$ in $\widehat{R}_\tau$.
For each such summand, closure ensures $s_t\in S^*$, so $V(s_t)\le V_{\max}$. For any two realizations $s_t,s'_t\in S^*$,
\[|\ell(s_t,z_t)-\ell(s'_t,z_t)|\le L|V(s_t)-V(s'_t)|\le 2LV_{\max}.\]
Averaging over at most $n$ summands and dividing by $n$ yields an influence at most $2LV_{\max}/n$ on $\widehat{R}_\tau$, which is precisely the bound on the Doob increment $M_i$.
\end{proof}

By Azuma--Hoeffding, with probability at least $1-\delta/2$,
\[
\Delta_{\mathrm{drift}}\;\le\; \frac{2LV_{\max}}{\sqrt n}\,\sqrt{2\ln\frac{2}{\delta}}.
\]
This explicit high-probability control can be combined with the Lyapunov drift condition (SMGI (ii)) by taking $V_{\max}$ as a certified sublevel bound for the invariant set $S^*$, or (more conservatively) by deriving a high-probability bound on $\sup_{t\le n}V(s_t)$ from the moment bound $\sup_{t}\E[V(s_t)]\le BV(s_0)+B$.

\subsection{Step 3: Sequential PAC-Bayes Bound for $\Delta_{\mathrm{stat}}$}
Conditioned on the invariant trajectory manifold, we bound the residual complexity of selecting $Q_\tau$ from the prior $P_\tau$. Using a sequential PAC-Bayes framework, the Kullback-Leibler divergence is evaluated over the sequence of conditional path measures. Because the loss is bounded in $[0,1]$, we obtain with probability at least $1-\delta/2$:
\[
\Delta_{\mathrm{stat}} \le \sqrt{\frac{\mathrm{KL}(Q_\tau\|P_\tau)+\ln\frac{4\sqrt n}{\delta}}{2(n-1)}}.
\]

\subsection{Step 4: Evaluative Invariance}
Finally, SMGI condition (iv) guarantees that the evaluative constraints are preserved. This ensures that the Lipschitz constant $L$ and the Lyapunov bounds $B$ remain uniformly valid across any admissible regime transformation $\tau \in \mathcal T$, preventing the stability bounds from diverging when the system switches tasks. Combining Step 2 and Step 3 via a union bound yields the stated structural generalization bound.

\section{Appendix: Proof of Lemma~\ref{lem:wm-smgi}}
\label{app:wm-smgi-proof}

We provide an appendix-ready proof by explicitly matching the world-model interpretation map
$\Phi_{\mathrm{WM}}$ to the SMGI requirements on the pair $(\theta,T_\theta)$.

\paragraph{Setup and notation.}
Let $\theta=(r,\mathcal H,\Pi,\mathcal L,\mathcal E,\mathcal M)\in\Theta(\mathrm{WM})$.
Under $\Phi_{\mathrm{WM}}$, $r$ corresponds to an encoder $r_\phi:o_t\mapsto z_t$,
$\mathcal M$ contains the latent recurrent state (and any auxiliary world-model state),
and $\Pi$ contains the admissible update rules for $(\phi,\psi,\vartheta)$ and any state variables.
The induced dynamics $T_\theta$ is a Markov kernel
$T_\theta:S_\theta\times\mathcal Z\to\mathcal P(S_\theta)$, with deterministic evolutions recovered
as Dirac measures.

\paragraph{Typing of $\mathcal T_\theta$.}
Let $\mathcal T$ denote the ambient class of task/environment transformations
$\tau:\mathcal E\to\mathcal E'$.
For a fixed structural meta-model $\theta\in\Theta$, we define
\[
\mathcal T_\theta \;:=\; \Big\{\tau\in\mathcal T \;:\; (\theta, T_{\theta,\tau}) \text{ is admissible and well-typed in the sense of this section}\Big\}.
\]
Thus $\mathcal T_\theta\subseteq \mathcal T$ is a \emph{$\theta$-dependent restriction} of admissible task transformations, not a class of induced dynamics.

For each admissible task transformation $\tau\in\mathcal T_\theta$ we write $T_{\theta,\tau}$
for the corresponding induced kernel and $(s_t)_{t\ge 0}$ for the resulting trajectory.

\paragraph{(i) Structural closure.}
By assumption, there exists a nonempty $S^*_\theta\subseteq S_\theta$ such that for all
$\tau\in\mathcal T_\theta$, $T_{\theta,\tau}(S^*_\theta)\subseteq S^*_\theta$.
This is exactly SMGI requirement (i) (closure of the admissible set under task transformation).

\paragraph{(ii) Dynamical stability.}
Assumption (ii) provides a stability criterion on $S^*_\theta$.
Concretely, suppose there exists a Lyapunov functional $V:S_\theta\to\mathbb R_+$ and constants
$B<\infty$ and $\lambda\in(0,1]$ such that for all $\tau\in\mathcal T_\theta$ and all $s\in S^*_\theta$,
\[
\mathbb E\big[V(s_{t+1})\mid s_t=s\big] \le (1-\lambda)\,V(s) + B.
\]
Iterating the drift inequality yields a uniform bound $\sup_{t\ge 0}\mathbb E[V(s_t)]<\infty$ for all
trajectories starting in $S^*_\theta$, which is the persistence requirement (ii) for SMGI.
Any equivalent stability notion (contraction, ISS, or martingale bounds) can be substituted.

\paragraph{(iii) Statistical capacity control.}
Assumption (iii) states that the hypothesis components used by the world-model instantiation
(encoder/transition/policy classes) admit a finite complexity control, uniformly over admissible $\tau$.
This is precisely the SMGI requirement that generalization bounds can be made non-vacuous along admissible
evolution. In particular, one may use a PAC-Bayes complexity term, an MDL description-length bound, or a
VC/Rademacher bound for each component class and then combine them by a union bound or an additive
complexity decomposition.

\paragraph{(iv) Evaluative invariance.}
Assumption (iv) provides a nontrivial invariant core $\mathcal C$ of evaluative constraints preserved
under admissible regime transformations (e.g., constraints, safety conditions, or protected invariants).
This matches SMGI requirement (iv): across the admissible regime updates associated to $\tau$,
evaluation remains coherent on $\mathcal C$ (or, more generally, stays within a prescribed invariant set).

\paragraph{Proof Summary}
Items (i)--(iv) are exactly the SMGI membership conditions for the pair $(\theta,T_\theta)$.
Therefore $(\theta,T_\theta)\in\mathrm{SMGI}$, proving Lemma~\ref{lem:wm-smgi}.

\end{document}